\documentclass{article}
\usepackage{iclr2027_conference,times}
\usepackage[T1]{fontenc}
\usepackage[utf8]{inputenc}
\usepackage{amsmath,amssymb,amsthm,mathtools,bm}
\usepackage{booktabs,array,multirow,tabularx}
\usepackage{graphicx,microtype}
\usepackage{float}
\usepackage{xcolor}
\usepackage{enumitem}
\usepackage[hidelinks]{hyperref}
\usepackage[nameinlink,noabbrev,capitalise]{cleveref}
\usepackage{preprint_layout}

\usepackage{aliascnt}
\newtheorem{theorem}{Theorem}
\newaliascnt{proposition}{theorem}
\newtheorem{proposition}[proposition]{Proposition}
\aliascntresetthe{proposition}
\newaliascnt{corollary}{theorem}
\newtheorem{corollary}[corollary]{Corollary}
\aliascntresetthe{corollary}
\newaliascnt{lemma}{theorem}
\newtheorem{lemma}[lemma]{Lemma}
\aliascntresetthe{lemma}
\newaliascnt{assumption}{theorem}

\aliascntresetthe{assumption}
\newaliascnt{remark}{theorem}

\aliascntresetthe{remark}
\crefname{proposition}{Proposition}{Propositions}
\Crefname{proposition}{Proposition}{Propositions}
\crefname{corollary}{Corollary}{Corollaries}
\Crefname{corollary}{Corollary}{Corollaries}
\crefname{lemma}{Lemma}{Lemmas}
\Crefname{lemma}{Lemma}{Lemmas}
\crefname{assumption}{Assumption}{Assumptions}
\Crefname{assumption}{Assumption}{Assumptions}
\crefname{remark}{Remark}{Remarks}
\Crefname{remark}{Remark}{Remarks}

\newcommand{\g}{\mathfrak g}
\newcommand{\h}{\mathfrak h}
\newcommand{\R}{\mathbb R}
\newcommand{\E}{\mathbb E}
\newcommand{\dd}{\mathrm d}

\newcommand{\norm}[1]{\lVert #1\rVert}

\newcommand{\diag}{\operatorname{diag}}
\newcommand{\tr}{\operatorname{tr}}
\newcommand{\rank}{\operatorname{rank}}
\newcommand{\range}{\operatorname{range}}

\title{Symmetry, Defects, and Diffusion in Continuous-Memory Recurrent Networks}
\newcommand{\PreprintAuthors}{Hanson Hanxuan Mo}
\newcommand{\PreprintAuthorsPlain}{Hanson Hanxuan Mo}
\newcommand{\PreprintAffiliations}{}
\newcommand{\PreprintContact}{}

\author{\PreprintAuthors}
\date{8 September 2026}
\hypersetup{
  pdftitle={Symmetry, Defects, and Diffusion in Continuous-Memory Recurrent Networks},
  pdfauthor={\PreprintAuthorsPlain},
  pdfsubject={Preprint},
  pdfkeywords={equivariant recurrent networks, continuous memory, dynamical systems, stochastic geometry}
}

\begin{document}
\maketitle

\begin{abstract}
Continuous-memory recurrent networks must preserve phase, position, or
orientation despite model imperfections and state noise. We develop a geometric
framework that separates three questions: how many memory coordinates are
neutrally transported, how deterministic perturbations alter their
finite-horizon stability, and how ambient noise is decoded along them. Exact
per-input equivariance transports analytical group tangents pathwise and, on a
compact nondegenerate orbit stratum, yields at least $q=\dim(G/H)$ zero
group-tangent Lyapunov exponents under stationary ergodic driving. For imperfect
dynamics, a four-block tangent/normal decomposition gives local and
finite-horizon bounds on tangent growth and subspace rotation, distinguishing
first-order direct damage from second-order leakage through contracting normal
directions. For noisy dynamics, a specified decoder maps ambient covariance
$Q$ to coordinate covariance $ZQZ^\top$; under isotropic noise and fixed tangent
energy, least-squares decoding and scaled-isometric action geometry
minimize local diffusion. Local and finite-horizon evaluations include cases
both within and outside the sufficient conditions. In a fresh twenty-seed $T^2$ replication, a
decoder-covariance objective improves noisy horizon-256 memory in every pair
while meeting a prespecified clean-error equivalence margin. Direct noise
training also improves noisy memory but incurs a clean-error tradeoff. In
coupled $T^4/T^8$ integrators, the advantage of anisotropic-covariance over
isotropic regularization reverses when evaluation noise becomes isotropic. These
results connect continuous symmetry to measurable limits and design choices
for recurrent memory under explicit dynamical, decoder, and noise assumptions.
\end{abstract}

\section{Introduction}
An RNN that stores heading, position, phase, or another continuous variable
must preserve distinctions among a continuum of recurrent states over long
horizons \citep{chaudhuri2016memory}. Classical continuous-attractor models obtain this behavior by making
the encoded direction neutral while stabilizing transverse perturbations
\citep{seung1996,seung1998,burakFiete2009}; trained navigation networks can
also acquire path-integrating and grid-like representations
\citep{cuevaWei2018,banino2018,sorscher2023grid}. In practice, however, a
continuous memory can fail in three different ways: it may have no protected
neutral coordinate, a deterministic implementation defect may produce
forgetting, pinning, or drift, and state noise may cause diffusion even when
the deterministic dynamics is exactly neutral
\citep{burakFiete2012,kilpatrick2013wandering,agmon2023multiple}.

We study these as progressively more realistic versions of the same recurrent
memory problem:
\begin{enumerate}[leftmargin=1.5em,itemsep=0.1em,topsep=0.2em]
\item \textbf{Ideal memory:} how many continuous memory coordinates are
protected against exponential forgetting or amplification?
\item \textbf{Imperfect memory:} after exact protection is broken, how does the
location of the defect determine local timescales, mode rotation, pinning, or
drift?
\item \textbf{Noisy memory:} under a specified state-noise and decoder model,
how does representation geometry determine local coordinate diffusion?
\end{enumerate}
A numerically small Lyapunov exponent alone answers none of these questions.
The common geometric object is instead the infinitesimal action $A_x$, whose
image is the analytical tangent space of the represented memory orbit. Exact
equivariance transports this image without exponential distortion. A reducing
frame then exposes how defects route between memory-tangent and contracting
normal coordinates. Finally, a reduced quotient-coordinate action
$\bar A_x$ and a local memory decoder project state noise back to the represented
variable, making orbit geometry operational rather than decorative.

Our first statement is a recurrent-cocycle specialization of a classical
consequence of equivariant dynamics, not a claim that the equivariant-flow
identity is new \citep{golubitsky1988,krupa1990,rumberger2001}. The contribution
is to turn that identity into a predictive memory-stability framework: a
stabilizer-aware count of protected coordinates, a quantitative account of how
local defects couple into those coordinates, and a decoder-aware noise floor.
We distinguish structural validation from task-level consequences. Exact equivariance, direct
tangent transport, orbit dimension, and subspace alignment test structural
claims; path-integration error tests their functional consequences rather than
proving the mathematics.

Closest recurrent work connects continuous attractors to translation
equivariance and Lie generators, develops symmetry-shaped manifolds
\citep{zhang2022translation,diBernardo2025}, and studies flow-equivariant
sequence maps \citep{keller2025fernn}; robust memory
manifolds can also arise without exact symmetry \citep{can2025robust}.
Control-input realizations of temporal-scaling group operators provide a nearby
but distinct sequence-generation construction \citep{zuo2023temporal}.
Goldstone-like propagation and conformal or isometric navigation are studied in
\citep{iqbal2026goldstone,xu2023lie,xu2025grid}. Our distinct contribution is
the joint stabilizer-aware Lyapunov, four-block defect, and decoder-covariance
framework; the underlying identities, generalized least squares, and
tight-frame algebra are not claimed as new.

Orbit rank identifies protected dimension, measured tangent/normal blocks
yield local or trajectory-conditioned finite-horizon bounds, and the
action--decoder pair predicts local coordinate diffusion.

We evaluate deterministic perturbation bounds on trained $S^1/T^2$
integrators and noise-limited precision on prescribed and learned $T^2/T^4/T^8$
geometries. A noise-switch control asks whether a covariance objective selects
geometry suited to the declared noise rather than universally preferred
Euclidean isotropy. These conditional interventions do not imply superiority
over unconstrained recurrent architectures.

\paragraph{Contributions.}
We (i) state a per-input controlled-map theorem that counts protected
continuous-memory coordinates and recovers the autonomous-flow result as a
corollary; (ii) derive computable local and finite-horizon perturbation bounds
that separate direct memory-tangent damage from leakage through contracting
transverse dynamics;
(iii) include the decoder in the stochastic analysis, proving a least-squares
noise floor and a scaled-isometric optimum under explicit assumptions; and
(iv) test these predictions with interventions fixed before evaluation on controlled and
learned recurrent systems.

Selected finite-horizon and architecture-specific results are machine-checked
in Lean 4 \citep{deMouraUllrich2021,mathlib2020}; the supplement maps
manuscript claims to declarations.

\begin{table}[t]
\centering
\footnotesize
\setlength{\tabcolsep}{4pt}
\caption{Overview of the questions, measurable predictors, evidence, and
assumptions. Formal statements are conditional on their listed scope; task
behavior is consequence evidence rather than theorem proof.}
\label{tab:overview}
\begin{tabularx}{\linewidth}{@{}>{\raggedright\arraybackslash}p{0.16\linewidth}>{\raggedright\arraybackslash}p{0.24\linewidth}>{\raggedright\arraybackslash}p{0.25\linewidth}>{\raggedright\arraybackslash}X@{}}
\toprule
Question & Predictor & Evidence & Scope\\
\midrule
How many neutral coordinates? & Orbit rank $q=\dim(G/H)$ and exact tangent
transport & Count, direct-tangent, and alignment checks & Exact symmetry and a
nondegenerate stratum are required\\
How do defects act? & Normal gap, frame conditioning, and four local defect
blocks & Controlled and native-coordinate interventions & Sufficient local
repeated-map and finite-horizon bounds; the latter are offline and trajectory-conditioned\\
How does noise enter memory? & $D_\psi=ZQZ^\top$ and reduced action geometry &
Paired prescribed and learned $T^2/T^4/T^8$ geometry & Local decoder/noise/resource model
is explicit\\
\bottomrule
\end{tabularx}
\end{table}

\section{Ideal continuous memory: protected dimensionality}
\label{sec:exact}
Within the symmetry-orbit construction studied here, preserving $q$
independent continuous coordinates without exponential forgetting or
amplification requires $q$ neutral tangent directions. A continuous group orbit
supplies such candidate memory coordinates, while exact equivariance makes
their transport analytically identifiable rather than inferred from a
finite-time spectrum.

Consider a controlled recurrent update
\begin{equation}
    x_{t+1}=F_{u_t}(x_t), \qquad x_t\in M,
    \label{eq:controlled_update}
\end{equation}
where a Lie group $G$ acts smoothly on the finite-dimensional state manifold
$M$. The controls used here are group invariant; per-input equivariance means
\begin{equation}
    F_u(g\!\cdot\!x)=g\!\cdot\!F_u(x)
    \quad\text{for every admissible }u,g,x.
    \label{eq:per_input_equivariance}
\end{equation}
For $\xi\in\g$, let
$\xi_M(x)=\left.\frac{\dd}{\dd s}\right|_{0}\exp(s\xi)\cdot x$ and define the
infinitesimal action $A_x\xi=\xi_M(x)$. Its image
$E_x^G=T_x(G\cdot x)$ is the group-tangent space. On a fixed orbit stratum
with isotropy Lie algebra $\h$, the protected orbit dimension is
$q=\dim(\g/\h)=\dim(G/H)$. This is the number of protected continuous
coordinates supplied by the group action, not a general information- or
parameter-count statement.
For orbit-type statements, assume $G$ is compact, or more generally that its
action on $M$ is proper. Stabilizers in one orbit-type stratum are conjugate;
they need not be the same embedded subgroup \citep{palais1961slices}.

\begin{theorem}[Exact tangent transport for controlled recurrent maps]
\label{thm:controlled_neutral}
Let every $F_u$ be $C^1$, and let $x_t$ be a trajectory of
\cref{eq:controlled_update} contained in a compact set $K$. Assume
\cref{eq:per_input_equivariance}, $\rank A_x=q$ on $K$, and
that the nonzero singular values of $A_x$ lie in $[c,C]$ with
$0<c\le C<\infty$. If $K$ lies in an orbit-type stratum with representative
stabilizer $H$, then $q=\dim(G/H)$. For every $m>n$,
\begin{equation}
 \begin{aligned}
 &D F_{u_{m-1}}(x_{m-1})\cdots D F_{u_n}(x_n)\\
 &\hspace{5em}{}\cdot A_{x_n}=A_{x_m}.
 \end{aligned}
 \label{eq:exact_tangent_transport}
\end{equation}
Consequently, for every $v\in E^G_{x_n}$,
\[
 \frac{c}{C}\norm v\le \norm{P_{m:n}v}\le\frac{C}{c}\norm v,
\]
where $P_{m:n}$ denotes the displayed tangent product. All $q$ restricted
singular-value growth rates are therefore zero, and every nonzero protected
vector has zero direct logarithmic growth rate whenever the corresponding
limit is defined. For a prescribed nonstationary input, the conclusion is
this exact pathwise bound; no stationary spectrum is asserted without an
additional driving measure.
\end{theorem}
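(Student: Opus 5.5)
The argument has three parts: the orbit-dimension identity, one-step transport by differentiating the equivariance relation, and the two-sided bound from the singular-value window.

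\textbf{Orbit dimension.} For $x\in K$ with stabilizer $G_x$, the orbit map $g\mapsto g\cdot x$ induces an injective immersion $G/G_x\to M$ whose image is $G\cdot x$. Properness of the action makes this an embedding. Its differential at the identity coset is $A_x$, which descends to an isomorphism $\g/\g_x\cong T_x(G\cdot x)$. Hence $\rank A_x=\dim G-\dim G_x$. In an orbit-type stratum, $G_x$ is conjugate to $H$, so $\dim G_x=\dim H$ and $q=\rank A_x=\dim(G/H)$. The rank hypothesis is consistent with this, and the identity holds pointwise on $K$.

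\textbf{One-step transport.} Fix $\xi\in\g$ and an admissible $u$. Differentiate \cref{eq:per_input_equivariance} with $g=\exp(s\xi)$ at $s=0$. The left side $F_u(\exp(s\xi)\cdot x)$ has derivative $DF_u(x)\,\xi_M(x)$ by the chain rule, using $F_u\in C^1$. The right side $\exp(s\xi)\cdot F_u(x)$ has derivative $\xi_M(F_u(x))$ by definition. Therefore
\[
DF_u(x)A_x=A_{F_u(x)}.
\]
Applying this along the trajectory with $x=x_k$, $u=u_k$, $F_{u_k}(x_k)=x_{k+1}$, and composing from $k=n$ to $m-1$ by induction on $m-n$ gives \cref{eq:exact_tangent_transport}.

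\textbf{Norm bounds.} Let $v\in E^G_{x_n}$. Write $v=A_{x_n}\xi$ with $\xi$ chosen in $(\ker A_{x_n})^\perp$, which fixes an inner product on $\g$ as implicitly done by the singular-value hypothesis. This choice gives $c\norm\xi\le\norm v\le C\norm\xi$. Then $P_{m:n}v=A_{x_m}\xi$. We still need $\norm{A_{x_m}\xi}\ge c\norm\xi$, but $\xi$ was chosen orthogonal to $\ker A_{x_n}$, not to $\ker A_{x_m}$.

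This kernel mismatch is the main subtle step. For a general group, $\g_{x_m}=\mathrm{Ad}$-conjugate of $\g_{x_n}$ need not equal $\g_{x_n}$. Equivariance does give $G_{x_n}\subseteq G_{x_m}$ (since $g\cdot x_n=x_n$ implies $g\cdot x_m=x_m$). Equal dimensions, both being $\dim H$, then give $\g_{x_n}=\g_{x_m}$, because the Lie algebras of nested closed subgroups of equal dimension coincide. So the kernels agree, $\xi\perp\ker A_{x_m}$, and $\norm{A_{x_m}\xi}\in[c\norm\xi,C\norm\xi]$. Combining the two sandwiches yields $\tfrac cC\norm v\le\norm{P_{m:n}v}\le\tfrac Cc\norm v$.

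\textbf{Growth rates.} By the bound just proved, the restriction of $P_{m:n}$ to $E^G_{x_n}$ is a linear isomorphism onto $E^G_{x_m}$. It is injective by the lower bound and maps into $E^G_{x_m}$, which has the same dimension $q$. All its singular values lie in $[c/C,\,C/c]$. Taking $\tfrac1{m-n}\log$ and letting $m\to\infty$ sends every restricted singular-value growth rate to zero. The same holds for $\tfrac1{m-n}\log\norm{P_{m:n}v}$ whenever that limit exists. No ergodic theorem is used anywhere, so the statement is purely pathwise, as claimed.
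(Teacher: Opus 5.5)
Your proposal is correct and follows the paper's proof essentially step for step: differentiate the equivariance relation to get $DF_u(x)A_x=A_{F_u(x)}$ and iterate, use stabilizer monotonicity plus equal kernel dimension to obtain $\ker A_{x_n}=\ker A_{x_m}$, then choose $\xi\perp\ker A_{x_n}$ and apply the singular-value window at both endpoints. One small adjustment: the theorem asserts the norm bounds without the orbit-type-stratum hypothesis, so take the equal kernel dimension directly from $\rank A_x=q$ on $K$ (as the paper does) rather than from $\dim H$. Also note that the inclusion $\ker A_{x_t}\subseteq\ker A_{x_{t+1}}$ already follows from the one-step identity itself, with no appeal to stabilizer subgroups.
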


The transport identity follows by one differentiation: differentiating
$F_u(\exp(s\xi)\cdot x)=\exp(s\xi)\cdot F_u(x)$ at $s=0$ gives
$D F_u(x)A_x\xi=A_{F_u(x)}\xi$, and iteration gives
\cref{eq:exact_tangent_transport}. Equivariance also implies
$\ker A_{x_t}\subseteq\ker A_{x_{t+1}}$; constant rank makes these kernels
equal along the trajectory. This equality of infinitesimal kernels is what the
proof uses; it does not require all stabilizer subgroups on the stratum to be
literally equal. Uniform conditioning then bounds both expansion and
contraction. For smooth $A_x$ of constant rank on compact $K$, the stated
nonzero-singular-value bounds follow automatically for fixed smooth metrics;
we retain them explicitly because measured conditioning later controls ambient
angle bounds.

\begin{corollary}[Stationary controlled cocycle]
\label{cor:stationary_cocycle}
Let $(\Omega,\mathcal F,\nu)$ be a standard Borel probability space, let
$\vartheta$ be invertible, ergodic, and measure preserving, and set
$Y=\Omega\times K$. Suppose
$\Theta(\omega,x)=(\vartheta\omega,F_\omega(x))$ is measurable and preserves an
ergodic probability measure $\mu$ on $Y$ whose $\Omega$-marginal is $\nu$.
For the full-spectrum assertion, either assume that $\Theta$ is invertible
modulo $\mu$, or pass to the natural extension of $(Y,\mu,\Theta)$ and lift the
cocycle and group-tangent bundle through its zeroth-coordinate projection.
Assume the derivative cocycle
$L(\omega,x)=D F_\omega(x)$ and $E_x^G$ are measurable and
\begin{equation}
 \int_{\Omega\times K}\log^+\!\norm{L(\omega,x)}\,\dd\mu(\omega,x)<\infty.
 \label{eq:met_integrability}
\end{equation}
Under the hypotheses of \cref{thm:controlled_neutral}, the restricted cocycle
on the invariant rank-$q$ bundle $E^G$ has $q$ zero Lyapunov exponents for
$\mu$-almost every base point.
On the invertible skew product or its natural extension, a measurable adapted
orthonormal frame makes the full derivative cocycle block upper triangular
with this restricted cocycle as a diagonal block. The finite-dimensional
semi-invertible multiplicative ergodic theorem therefore implies that the
full forward spectrum contains \emph{at least} $q$ zero exponents for
$\mu$-almost every base point, counted with multiplicity
\citep{oseledets1968,froyland2013semi}. Additional flow-neutral or transverse
zero exponents are not excluded.
\end{corollary}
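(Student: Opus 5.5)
The plan has two stages. First I apply \cref{thm:controlled_neutral} pointwise along orbits to get zero exponents on $E^G$. Then I embed that restricted cocycle as a diagonal block of the full derivative cocycle and transfer the zeros with the multiplicative ergodic theorem.

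\textbf{Step 1 (restricted cocycle).} For $\mu$-a.e.\ $(\omega,x)$ the forward orbit $x_n$ is driven by $u_n=\vartheta^n\omega$ and stays in $K$. \Cref{eq:exact_tangent_transport} therefore holds along it, and $E^G$ is an invariant measurable rank-$q$ bundle:
\[
L^{(n)}(\omega,x)\,E^G_x=E^G_{x_n}.
\]
Because $\rank A_x=q$, the kernels along the trajectory coincide (as noted after the theorem), so $A_{x_n}$ is injective on the common complement of $\ker A_{x_0}$. The pathwise bound $\tfrac{c}{C}\norm v\le\norm{L^{(n)}v}\le\tfrac{C}{c}\norm v$ for $v\in E^G_x$ then holds uniformly in $n$.

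Consequently every singular value of $L^{(n)}|_{E^G}$ lies in $[c/C,\,C/c]$. Dividing $\log$ by $n$ sends each of them to $0$. Integrability of the restricted cocycle follows from \cref{eq:met_integrability} together with these bounds. Oseledets' theorem applied to the restricted cocycle (one-sided suffices here) gives the $q$ exponents, all equal to $0$.

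\textbf{Step 2 (block triangularization).} On the invertible skew product, or on its natural extension, I pull back $E^G$ and $L$ through the zeroth-coordinate projection; invariance and measurability are preserved. I then choose a measurable orthonormal frame whose first $q$ vectors span $E^G_x$, completed to $T_xM$. Such a frame exists by measurable Gram--Schmidt applied to a measurable basis; for this I would take the measurable selection $A_x e_i$ over an index set chosen measurably, possible since $A_x$ has constant rank.

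In this frame $L(\omega,x)$ is block upper triangular, with diagonal blocks $L|_{E^G}$ and the induced map on $T_xM/E^G_x$. Both diagonal blocks are bounded in norm by $\norm L$ up to a change of orthonormal frame, so both satisfy the $\log^+$ integrability hypothesis.

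\textbf{Step 3 (spectrum of a triangular cocycle).} The semi-invertible MET \citep{froyland2013semi} gives an Oseledets splitting for the full cocycle. The key fact is the standard one: the Lyapunov spectrum of a block upper triangular integrable cocycle is the union, with multiplicities, of the spectra of its diagonal blocks. For the direction actually needed I would argue via exterior powers:
\begin{itemize}
\item $E^G$ is invariant, so the exponents realized by vectors in $E^G$ are exponents of the full cocycle.
\item By Step 1 those realized exponents are exactly $0$, with multiplicity $q$.
\end{itemize}
The multiplicity requires care. The filtration $V_\lambda$ intersected with $E^G$ has dimension jumps summing to $q$, all occurring at $\lambda=0$. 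The forward Oseledets filtration of the full cocycle therefore satisfies
\[
\dim\bigl(V_0\setminus V_{0^-}\text{-quotient}\bigr)\ge \dim\bigl(E^G\cap V_0\bigr)-\dim\bigl(E^G\cap V_{<0}\bigr)=q,
\]
so the exponent $0$ has multiplicity at least $q$.

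\textbf{Main obstacle.} I expect the hard step to be this multiplicity bookkeeping, together with the measurability of the adapted frame on the natural extension. I would handle it by citing the triangular-cocycle spectrum lemma where available. Otherwise I would run the filtration-dimension argument above, using that exponents on an invariant subbundle are a sub-multiset of the full spectrum. Nothing beyond "at least $q$" is claimed, so the transverse block requires no analysis.
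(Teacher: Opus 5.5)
Your proposal is correct and follows the paper's proof essentially step for step. The pathwise bounds from \cref{thm:controlled_neutral} give zero exponents on the invariant bundle $E^G$; lifting through the zeroth-coordinate projection together with a measurable adapted frame gives the block upper triangular form; and the invariant-subbundle (spectrum-union) property then places the $q$ zeros in the full spectrum, where your filtration count $\dim V_{\le 0}-\dim V_{<0}\ge\dim E^G-\dim(E^G\cap V_{<0})=q$ is just a hands-on proof of that cited property (and shows the one-sided forward filtration already suffices for the at-least-$q$ conclusion).
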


The logical dependence is intentionally one way: exact equivariance gives an
invariant rank-$q$ group-tangent family; uniform action conditioning gives
zero direct rates on that family; and only the stationary hypotheses in
\cref{cor:stationary_cocycle} promote those rates to an at-least-$q$
full-spectrum statement. The imperfect-symmetry result below is a separate
finite-horizon bound and makes no asymptotic persistence claim.

For an autonomous $C^1$ equivariant flow $\phi_t$, flow equivariance and one
differentiation give the classical identity
\begin{equation}
 \phi_t(\exp(s\xi)\!\cdot\!x)=\exp(s\xi)\!\cdot\!\phi_t(x)
 \quad\Longrightarrow\quad
 D\phi_t(x)\xi_M(x)=\xi_M(\phi_t(x)).
 \label{eq:flow_tangent_transport_main}
\end{equation}
Restricting $\xi$ to a complement of the stabilizer kernel produces the $q$
independent orbit directions used above \citep{golubitsky1988,krupa1990} and
recovers the protected-flow theorem. This group-tangent zero block is distinct
from the ordinary time-translation zero
exponent: on a fixed-point orbit $f(x)=0$, while product groups with $q>1$
produce multiplicity beyond any single flow direction. Continuous symmetry is
essential to this mechanism; a finite group has no nonzero infinitesimal
generators.

\begin{figure}[H]
\centering
\includegraphics[width=\linewidth]{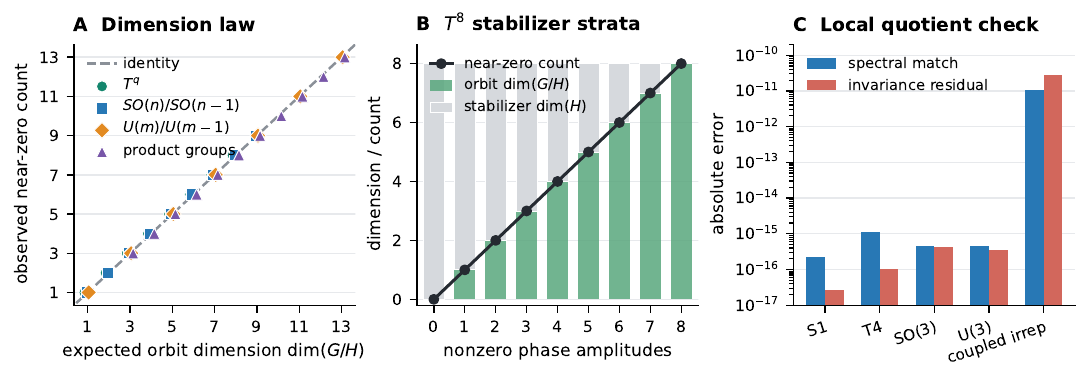}
\caption{\textbf{Ideal-memory orbit geometry.} (A) Across product tori,
$SO(n)/SO(n-1)$, $U(m)/U(m-1)$, and product-group systems, the numerical
near-zero count matches $q=\dim(G/H)$. Counts are supporting diagnostics; the
primary check is direct transport of the analytical tangents. (B) A stratified
$T^8$ control changes protected rank as phase factors become active and the
stabilizer loses dimensions. (C) A local fixed-point Jacobian decomposes into
orbit and quotient blocks with spectral and invariance residuals at numerical
precision. The coupled-irrep Jacobian is finite-differenced, explaining its
larger but still $\sim10^{-11}$ residual.}
\label{fig:exact_geometry}
\end{figure}

Figure~\ref{fig:exact_geometry} tests multiplicity, stabilizer dependence, and
local quotient removal. The quotient panel compares the full and reduced local
Jacobians at the same orbit point, rather than combining a finite-time QR
spectrum with a local quotient spectrum. The maximum local
spectral discrepancy is $1.0\times10^{-11}$ in the finite-difference
coupled-irrep case and at or below $1.1\times10^{-15}$ in the analytic-Jacobian
families. Direct equivariance, tangent exponents, principal-angle alignment,
and flow-zero controls are tabulated in \cref{app:exact_diagnostics}.

\section{Imperfect continuous memory: perturbation bounds}
\label{sec:deterministic}
Exact protection is binary, but once it is broken, memory degradation depends
on how the perturbation reaches the memory tangent. To make that statement
local and testable, select a time map of duration $\tau$ at a fixed-point orbit,
or make an explicit co-moving identification under which its group-tangent
return map is the identity, and choose reducing tangent/normal coordinates in
which
\begin{equation}
 T_0=\begin{pmatrix}I&0\\0&B\end{pmatrix},
 \qquad \norm{B}\le\rho<1.
 \label{eq:reduced_exact_map}
\end{equation}
An invariant group bundle guarantees a zero lower-left block, but an arbitrary
orthogonal complement can leave an upper-right block. A shear removes it when
$I-B$ is invertible; its condition number $\kappa_S=\norm{S}\norm{S^{-1}}$ is
reported rather than hidden (\cref{app:shear}).

For a perturbed local map, write
\begin{equation}
T=T_0+\begin{pmatrix}E_{GG}&E_{GN}\\E_{NG}&E_{NN}\end{pmatrix},
\quad
(a,b,c,e)=\bigl(\norm{E_{GG}},\norm{E_{GN}},
\norm{E_{NG}},\norm{E_{NN}}\bigr).
\label{eq:defect_blocks}
\end{equation}
Let $d=1-\rho$ be the normal gap and $D=d-a-e$.

\begin{proposition}[Blockwise invariant-graph criterion and rate bounds]
\label{prop:block_certificate}
Suppose $D>0$, $D^2\ge4bc$, and define
\begin{equation}
 r_*=\begin{cases}
 \displaystyle\frac{2c}{D+\sqrt{D^2-4bc}},&b>0,\\[0.7em]
 c/D,&b=0.
 \end{cases}
 \label{eq:rstar}
\end{equation}
If $1-a-br_*>0$, the graph transform maps the closed ball
$\{L:E^G\to E^N:\norm L\le r_*\}$ into itself and hence admits an invariant
graph in that ball. On any such graph, set $s=a+br_*$. If $s<1$, its local
memory exponents under repeated iteration of this same selected linear map
satisfy
\begin{equation}
 \frac{1}{\tau}\log(1-s)\le\lambda_G\le
 \frac{1}{\tau}\log(1+s),
 \label{eq:exp_bound_main}
\end{equation}
and its largest ambient principal angle from the exact tangent space obeys
\begin{equation}
 \sin\alpha\le\min\{1,\kappa_S r_*\}.
 \label{eq:angle_bound_main}
\end{equation}
\end{proposition}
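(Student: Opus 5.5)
We treat the selected perturbed time-$\tau$ map $T$ of \cref{eq:defect_blocks} as a fixed linear map on $E^G\oplus E^N$, written in the sheared reducing coordinates of \cref{eq:reduced_exact_map}. We look for an invariant subspace of the form $\operatorname{graph}L=\{(v,Lv)\}$ with $L:E^G\to E^N$. Applying $T$ to $(v,Lv)$ gives
\[
\bigl((I+E_{GG}+E_{GN}L)v,\ (E_{NG}+(B+E_{NN})L)v\bigr).
\]
So the graph is invariant exactly when $L$ is a fixed point of the graph transform
\[
\Gamma(L)=(E_{NG}+(B+E_{NN})L)(I+E_{GG}+E_{GN}L)^{-1},
\]
provided the inverse exists. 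The plan has three steps: show that $\Gamma$ maps the ball $\mathcal B_{r_*}=\{\norm L\le r_*\}$ into itself, obtain a fixed point, and read off the rate and angle bounds from the restricted block.

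\textbf{Self-map of the ball.} For $\norm L\le r$ we have $\norm{E_{GG}+E_{GN}L}\le a+br$. If $a+br<1$, a Neumann series shows the inverse exists and has norm at most $(1-a-br)^{-1}$. Hence
\[
\norm{\Gamma(L)}\le\frac{c+(\rho+e)r}{1-a-br}.
\]
Requiring the right-hand side to be at most $r$ is equivalent to $br^2-(1-\rho-a-e)r+c\le0$, that is, $br^2-Dr+c\le0$. When $b>0$ and $D^2\ge4bc$, this holds at the smaller root $r=(D-\sqrt{D^2-4bc})/(2b)$. Rationalizing gives $r=2c/(D+\sqrt{D^2-4bc})=r_*$. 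When $b=0$ the condition is linear and gives $r_*=c/D$, which requires $D>0$.

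The hypothesis $1-a-br_*>0$ is exactly the invertibility condition on the ball. The map $\Gamma$ is then continuous on the compact convex set $\mathcal B_{r_*}$, since the space of operators is finite-dimensional. Brouwer's theorem therefore gives a fixed point, i.e.\ an invariant graph in the ball. We do not need uniqueness; one could add a contraction estimate, but the statement only asserts existence.

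\textbf{Rate bounds.} On any invariant graph with $\norm L\le r_*$, the map $T$ acts on the $E^G$-coordinate of $(v,Lv)$ by $M=I+E_{GG}+E_{GN}L$, and $\norm{M-I}\le a+br_*=s$. For $s<1$ this gives $(1-s)\norm v\le\norm{Mv}\le(1+s)\norm v$, and by induction $(1-s)^n\norm v\le\norm{M^nv}\le(1+s)^n\norm v$.

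One subtlety is that growth should be measured on the graph vectors $(v,Lv)$ rather than on $v$ alone. Their norms differ from $\norm v$ by at most the bounded factor $\sqrt{1+r_*^2}$, which disappears after taking $\frac1{n\tau}\log$ and letting $n\to\infty$. This yields \cref{eq:exp_bound_main} for every local memory exponent under repeated iteration of this same map.

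\textbf{Angle bound (the main obstacle).} In the orthonormal tangent/normal coordinates, the largest principal angle between $\operatorname{graph}L$ and $E^G\times\{0\}$ satisfies $\tan\alpha=\norm L$, so $\sin\alpha\le\norm L\le r_*$. The delicate step is transferring this to ambient coordinates, because the reducing coordinates are obtained through the shear $S$ of \cref{app:shear}.

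The approach I would try first uses that the shear is block upper triangular with identity diagonal blocks, so it fixes $E^G$ pointwise. The exact tangent space is therefore the same subspace in both coordinate systems. I would then write the ambient graph vector as $S(v,Lv)$. Its component orthogonal to $E^G$ has norm at most $\norm{S}\norm{Lv}$, and its total norm is at least $\norm{S^{-1}}^{-1}\norm{(v,Lv)}\ge\norm{S^{-1}}^{-1}\norm v$. Dividing the first estimate by the second gives $\sin\alpha\le\kappa_S r_*$. The trivial bound $\sin\alpha\le1$ supplies the minimum in \cref{eq:angle_bound_main}.

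The points to check carefully are the convention for which direction $S$ maps, and that the principal-angle sine is the supremum of this orthogonal-component ratio over unit vectors in the graph.
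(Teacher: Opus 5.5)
Your proposal is correct and follows the paper's proof essentially step for step. It uses the same graph transform, the same Neumann-series estimate leading to $br^2-Dr+c\le0$ and its smaller root, Brouwer for existence, and the restricted center block $M=I+E_{GG}+E_{GN}L$ with a fixed norm-equivalence factor that disappears from the rate. The ambient angle comes from the same shear estimate, $\norm{S}\,r_*\norm v$ divided by $\norm{S^{-1}}^{-1}\norm v$. The point you flag, that the sine of the largest principal angle equals the supremum of this normalized distance over unit graph vectors, is supplied in the paper as a separate lemma for equal-dimensional subspaces; here both subspaces are $q$-dimensional.
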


\begin{proposition}[Finite-horizon transported-graph bounds]
\label{prop:finite_horizon_candidate}
Fix $n<m$. Along a prescribed trajectory, let one coherent frame sequence put
the exact reference maps in the form
$T_t^0=\operatorname{diag}(I,B_t)$, $\norm{B_t}\le\rho_t<1$, and write the
perturbed maps as $T_t=T_t^0+E_t$, with the four $(GG,GN,NG,NN)$ block norms of
$E_t$ bounded by $(a_t,b_t,c_t,e_t)$. Starting from a specified graph
$L_n$ with $\norm{L_n}\le r_n$, define, for $t=n,\ldots,m-1$,
\begin{equation}
 s_t=a_t+b_t r_t,\qquad
 r_{t+1}=\frac{c_t+(\rho_t+e_t)r_t}{1-a_t-b_t r_t}.
 \label{eq:finite_horizon_recursion_candidate}
\end{equation}
If every denominator is positive, the specified graph has a unique forward
image at each step, and its norm is bounded by the displayed scalar recursion:
it remains a graph with $\norm{L_t}\le r_t$ and satisfies
$T_tJ_{L_t}=J_{L_{t+1}}M_t$. Its center product lies
between $\prod_t(1-s_t)$ and $\prod_t(1+s_t)$ in conorm and norm. Ambient
bounds add only endpoint graph-chart factors, while the largest ambient
principal angle $\alpha_t$ from the transported graph to the exact center
plane obeys $\sin\alpha_t\le\min\{1,\kappa(S_t)r_t\}$. For positive step durations
$\tau_t$, taking logarithms of the product bounds, including the endpoint
factors, and dividing by $\sum_{t=n}^{m-1}\tau_t$ gives finite-horizon
physical-time rate intervals. This is finite horizon only: it gives no
asymptotic Lyapunov spectrum or bi-infinite invariant bundle.
\end{proposition}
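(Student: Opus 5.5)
The argument is a step-by-step graph transform, run by induction on $t$ from $n$ to $m-1$. Write $J_L=\begin{pmatrix}I\\ L\end{pmatrix}$ for the graph chart of $L:E^G\to E^N$. In the coherent frame,
\[
T_t=\begin{pmatrix}I+E^t_{GG}&E^t_{GN}\\ E^t_{NG}&B_t+E^t_{NN}\end{pmatrix}.
\]
Applying $T_t$ to $J_{L_t}$ gives
\[
T_tJ_{L_t}=\begin{pmatrix}M_t\\ N_t\end{pmatrix},\qquad
M_t=I+E^t_{GG}+E^t_{GN}L_t,\qquad
N_t=E^t_{NG}+(B_t+E^t_{NN})L_t .
\]
Uniqueness and existence of the forward image reduce to invertibility of $M_t$. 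I then set $L_{t+1}=N_tM_t^{-1}$, which gives $T_tJ_{L_t}=J_{L_{t+1}}M_t$ by construction. Any graph representing the image subspace must equal this one, because a subspace that is a graph over $E^G$ determines its $L$ uniquely.

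The key estimates are as follows.

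\textbf{Center factor.} Assuming inductively $\norm{L_t}\le r_t$, we get
\[
\norm{M_t-I}\le a_t+b_tr_t=s_t .
\]
Positivity of the denominator $1-s_t>0$ makes $M_t$ invertible by a Neumann series, with $\norm{M_t^{-1}}\le(1-s_t)^{-1}$. It also gives the norm and conorm bounds $1-s_t\le\|M_t\|_{\text{co}}$ and $\norm{M_t}\le 1+s_t$.

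\textbf{Normal factor.} From $\norm{N_t}\le c_t+(\rho_t+e_t)r_t$ we get $\norm{L_{t+1}}\le r_{t+1}$, which closes the induction.

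\textbf{Center product.} Composing the intertwining relations gives
\[
T_{m-1}\cdots T_nJ_{L_n}=J_{L_m}M_{m-1}\cdots M_n .
\]
Submultiplicativity of norms, together with supermultiplicativity of conorms, gives the product bounds $\prod_t(1-s_t)$ and $\prod_t(1+s_t)$.

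\textbf{Ambient bounds.} For a vector $v$ in the transported graph, $J_{L}$ has conorm at least $1$ and norm at most $\sqrt{1+r^2}$. In ambient coordinates one also carries the frame change $S_t$, so its norm and conorm enter only at the two endpoints. These are the endpoint chart factors.

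\textbf{Angle.} In the reduced frame, the sine of the largest principal angle between $\range J_L$ and $E^G\oplus0$ is $\norm{L}/\sqrt{1+\norm L^2}\le r_t$. Mapping back through $S_t$ distorts the angle by at most $\kappa(S_t)$; this is the same argument as \cref{eq:angle_bound_main} in \cref{prop:block_certificate}, which I reuse here. The sine is trivially at most $1$.

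\textbf{Rates.} Taking logarithms of the two product bounds, with the endpoint factors included, and dividing by $\sum_t\tau_t$ gives the physical-time rate intervals. No limit is taken, so no asymptotic claim arises.

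The main obstacle is the ambient angle step. Transporting the sine bound through a non-orthogonal $S_t$ requires a careful principal-angle perturbation lemma: if $V=S\,\range J_L$ and $U=S(E^G\oplus0)$, then $\sin\angle(V,U)\le\norm S\norm{S^{-1}}\sin\angle$ in the reduced frame. I would prove this by writing the projection residual and bounding $\norm{(I-P_U)Sx}\le\norm S\,\mathrm{dist}(x,E^G)$ against $\norm{Sx}\ge\norm{S^{-1}}^{-1}\norm x$. A secondary point is that the conorm of a product of square $M_t$ is at least the product of their conorms, which holds since each $M_t$ is invertible.
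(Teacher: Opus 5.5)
Your proposal is correct and follows essentially the same route as the paper. Both arguments run the graph-transform induction with $\norm{M_t-I}\le s_t$, use the Neumann-series bound on $M_t^{-1}$ to close $\norm{L_{t+1}}\le r_{t+1}$, telescope $D F_{m:n}S_nJ_{L_n}=S_mJ_{L_m}P_{m:n}$ with endpoint factors $\norm{S_t^{-1}}^{-1}$ and $\norm{S_t}\sqrt{1+r_t^2}$, and bound the angle by a distance-over-norm estimate for $S_t(v,L_tv)$. The one step to state explicitly is that $S_t$ is invertible and both subspaces have dimension $q$, so the directed gap you bound equals the sine of the largest principal angle; this is exactly what the paper's \cref{lem:directed_gap_principal_angle} supplies.
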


The proof and implementation details are in \cref{app:finite_horizon_candidate}. Here
``unique'' is conditional on the specified $L_n$: the proposition neither
selects a preferred initial graph nor identifies an invariant graph shared by
all initial conditions. Its terminal
shear is future-dependent, hence offline unless the future sequence is known.
The constant-map case recovers \cref{prop:block_certificate} only when
initialized at an invariant graph supplied by Proposition~\ref{prop:block_certificate}.

The criterion is sufficient and local, not a replacement for global
normally hyperbolic invariant-manifold theory
\citep{fenichel1971,hirschPughShub1977}. Its useful feature is block
resolution. In the fixed-gap regime $d>0$ held constant as $\epsilon\to0$,
direct tangent damage $a=O(\epsilon)$ gives
$s=O(\epsilon)$. With $a=e=0$ and bidirectional leakage
$b,c=O(\epsilon)$,
\begin{equation}
 r_*=\frac{c}{d}+O(\epsilon^3/d^3),\qquad
 s=\frac{bc}{d}+O(\epsilon^4/d^3),
 \label{eq:leakage_order}
\end{equation}
so the exponent displacement is second order and inverse-gap amplified.
One-way leakage alone does not move the center exponent in this constant block
model. The full proof, physical-time scaling, and a local residual-branch
realization lemma are in \cref{app:block_proof}.

\paragraph{Selected-map perturbations and recurrent-model checks.}
Across six coupled-irrep instances and six learned exact $S^1$ checkpoints,
1,572 of 1,596 nonzero reducing-block interventions satisfy the conditions and
1,476 yield informative bounds, with no bound violations; nested rows are not
independent trials. In-family noise on 90 exact-cell parameter sets preserves
symmetry to $1.24\times10^{-16}$ tangent-transport error. Of 2,520 ambient
residual interventions, 2,515 satisfy the conditions and 2,514 yield informative
bounds, again without a bound violation (\cref{app:exp39}). A separate $q=2$ bridge has 90
cases through $\epsilon=10^{-2}$ and retains all 18 invalid boundary cases.
Endpoint error is monotone for all 18 seed--adapter units; among 41 fully
uncensored cases, the post-perturbation local half-life predicts pinning time
with log-correlation $0.9996$ and median ratio $1.012$ (\cref{app:exp40}).
These structured perturbation studies test pre-rollout predictive accuracy, not arbitrary fault
statistics or a global consequence of the local proposition.

\begin{figure}[t]
\centering
\includegraphics[width=\linewidth]{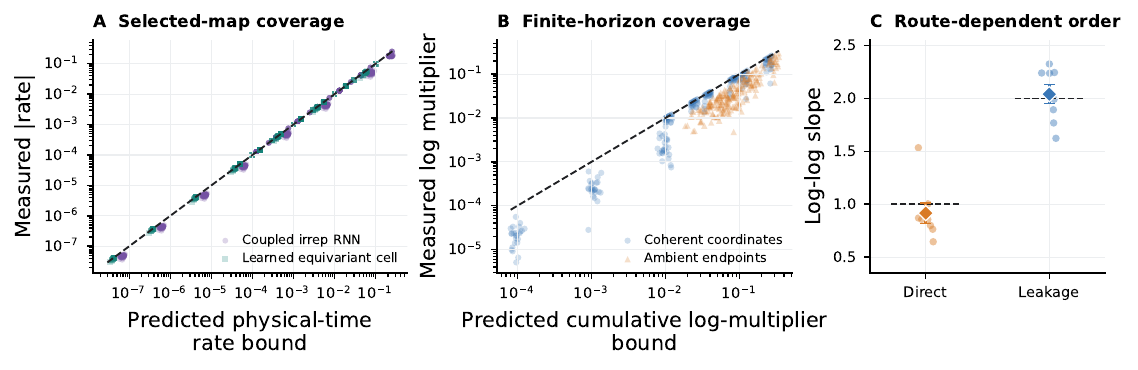}
\caption{\textbf{Local and finite-horizon perturbation bounds.}
(A) Local repeated-map cases satisfying the conditions remain below their
physical-time rate bounds. (B) All 216 finite-horizon cases satisfying the
conditions remain within the coordinate and ambient endpoint bounds. (C)
Seed-level slopes are $0.916\pm0.095$ for direct damage and $2.040\pm0.092$ for
leakage (mean $\pm$ SEM over eight base-model seeds); all 24
assumption-violation controls were identified. The finite-horizon analysis is
offline and trajectory conditioned, not an asymptotic or bi-infinite result.}
\label{fig:certificate}
\end{figure}

\begin{figure}[t]
\centering
\includegraphics[width=\linewidth]{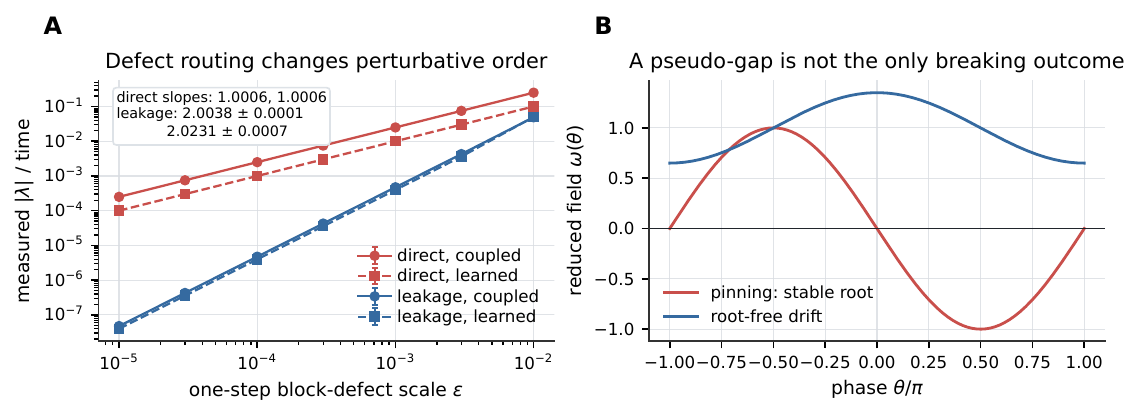}
\caption{\textbf{How defects reach continuous memory.} (A) Slopes
are first estimated within each recurrent instance and then summarized across
six model instances per family. Direct tangent defects give slope $1.0006$;
bidirectional leakage gives $2.0038\pm0.0001$ in coupled models and
$2.0231\pm0.0007$ in learned cells (mean $\pm$ SEM across models). (B) The
reduced field is $\omega(\theta)=\dot\theta/\epsilon$. A
symmetry-broken reduced phase field can pin memory at a stable root, producing
a local pseudo-gap, or remain root-free and cause systematic phase drift while
retaining zero asymptotic tangent exponent. The tiny direct-family SEM reflects
the shared analytic intervention, which largely fixes its slope; it is not
evidence of universal slope concentration. Thus a pseudo-gap is an important
breaking outcome, not a universal one.}
\label{fig:defect_routing}
\end{figure}

Figure~\ref{fig:defect_routing} distinguishes pseudo-gap opening from root-free
drift. For a reduced phase equation $\dot\theta=\epsilon\omega(\theta)$ with
$\epsilon>0$, a stable zero of
$\omega$ yields pinning and a signed local exponent
$\lambda_{\rm sym}=\epsilon\omega'(\theta_*)<0$; we call
$\gamma=-\lambda_{\rm sym}>0$ the stable pseudo-gap magnitude. If continuous
periodic $\omega$ is nowhere zero, compactness of the circle makes its sign
fixed and $|\omega|$ bounded away from zero. The phase then drifts by
$O(\epsilon t)$, while the tangent multiplier
$\omega(\theta_t)/\omega(\theta_0)$ stays bounded above and below and the
asymptotic exponent is zero. Memory can therefore fail through either
contraction/expansion or systematic drift; a Lyapunov pseudo-gap diagnoses only
the former \citep{kilpatrick2013wandering,agmon2023multiple}.

\section{Noisy continuous memory: local precision}
\label{sec:stochastic}
Exact symmetry prevents deterministic exponential forgetting along the group
orbit, but it does not prevent noise-driven wandering. The local precision of
the decoded variable depends jointly on the orbit embedding, ambient noise, and
decoder \citep{burakFiete2012,kilpatrick2013wandering}.

The full action $A_x:\g\to T_xM$ need not be injective when a stabilizer is
nontrivial. Fix a local quotient-coordinate metric and an isometric horizontal
frame $\iota_x:\R^q\to(\ker A_x)^\perp\subset\g$, and define the reduced
full-column-rank action
\begin{equation}
 \bar A_x:=A_x\iota_x:\R^q\longrightarrow E_x^G.
 \label{eq:reduced_action}
\end{equation}
Its singular values, condition number, and energy are relative to that chosen
quotient-coordinate metric. In the free torus experiments, the
natural Lie-algebra coordinates already give $\bar A_x=A_x$.

Let $\psi$ be a local $q$-dimensional memory coordinate and set
$Z_x=D\psi_x$. First-order faithfulness to the selected quotient coordinates
means that $Z_x$ is a left inverse of $\bar A_x$:
\begin{equation}
 Z_x\bar A_x=I_q.
 \label{eq:left_inverse}
\end{equation}
For an ambient It\^o increment with covariance $Q_x\,\dd t$, It\^o's formula
\citep{oksendal2003} gives the local quadratic variation in memory coordinates,
\begin{equation}
 D_\psi(x)=Z_xQ_xZ_x^\top.
 \label{eq:decoder_covariance}
\end{equation}
This is the correct general statement: the action matrix alone does not fix
coordinate diffusion unless the decoder is specified. Throughout, ``diffusion
tensor'' means the quadratic-variation (covariance-rate) matrix in
\cref{eq:decoder_covariance}; a Fokker--Planck convention would place a factor of
$1/2$ in front of this matrix. Curvature can add an It\^o drift, and long-time
phase diffusion also depends on the surrounding dynamics and isochrons
\citep{maclaurin2023phase,ahsan2025covariance}.

\begin{corollary}[One-step discrete covariance]
\label{cor:discrete_covariance}
Let $X_{k+1}=F(X_k)+\eta_{\Delta t}$, set $y=F(x)$, and condition on
$X_k=x$. If $\E\eta_{\Delta t}=0$,
$\operatorname{Cov}(\eta_{\Delta t})=Q_y\Delta t$,
$\psi$ is differentiable at $y$, and its Taylor remainder satisfies
\[
 \E\norm{\psi(y+\eta_{\Delta t})-\psi(y)-Z_y\eta_{\Delta t}}^2
 =O(\Delta t^2),
\]
then
\begin{equation}
 \operatorname{Cov}\!\left[\psi(X_{k+1})\mid X_k=x\right]
 =Z_yQ_yZ_y^\top\Delta t+o(\Delta t).
 \label{eq:discrete_covariance}
\end{equation}
\end{corollary}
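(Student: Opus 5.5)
Since $X_k=x$ is fixed, $F(x)=y$ is deterministic and all randomness sits in $\eta:=\eta_{\Delta t}$. We have $\psi(X_{k+1})=\psi(y+\eta)$, and its covariance is unchanged by subtracting the constant $\psi(y)$. I will split
\[
 \psi(y+\eta)-\psi(y)=Z_y\eta+R,\qquad
 R:=\psi(y+\eta)-\psi(y)-Z_y\eta,
\]
and use the Taylor hypothesis $\E\norm R^2=O(\Delta t^2)$. Expanding the covariance of the sum gives
\[
 \operatorname{Cov}[\psi(X_{k+1})\mid X_k=x]
 =Z_y\operatorname{Cov}(\eta)Z_y^\top+\operatorname{Cov}(Z_y\eta,R)+\operatorname{Cov}(R,Z_y\eta)+\operatorname{Cov}(R).
\]
The leading term is $Z_yQ_yZ_y^\top\Delta t$ by the covariance assumption and linearity. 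All that remains is to show the other three terms are $o(\Delta t)$ in any matrix norm.

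For $\operatorname{Cov}(R)$, I bound it by the second moment: $\norm{\operatorname{Cov}(R)}\le\E\norm{R-\E R}^2\le\E\norm R^2=O(\Delta t^2)$, which is $o(\Delta t)$.

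For the cross terms, I use Cauchy--Schwarz in $L^2$:
\[
 \norm{\operatorname{Cov}(Z_y\eta,R)}\le\bigl(\E\norm{Z_y\eta}^2\bigr)^{1/2}\bigl(\E\norm R^2\bigr)^{1/2}.
\]
The first factor is controlled because $\E\norm{Z_y\eta}^2=\tr(Z_yQ_yZ_y^\top)\Delta t=O(\Delta t)$; here I use $\E\eta=0$ and that $Z_y$, $Q_y$ are fixed since $y$ is fixed. The second factor is $O(\Delta t)$. The product is therefore $O(\Delta t^{3/2})=o(\Delta t)$. Note that $\E R$ need not vanish, but subtracting means only lowers the second moments, so these bounds still hold.

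I do not expect any step to be a real obstacle. The only care needed is bookkeeping: the conditional covariance is simply the covariance over $\eta$ with $y$ frozen, the finiteness of second moments is implied by the hypotheses, and the error is in fact the stronger $O(\Delta t^{3/2})$.
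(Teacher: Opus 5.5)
Your proposal is correct and follows the paper's proof essentially step for step. Both use the same linear-plus-remainder split at $y=F(x)$, bound the remainder covariance by $O(\Delta t^2)$, and bound the cross covariances by $O(\Delta t^{3/2})$ via Cauchy--Schwarz; your remark that centering $R$ only lowers second moments is the paper's closing observation that centering changes only the remainder orders.
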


\begin{theorem}[Decoder-aware isotropic noise floor]
\label{thm:decoder_floor}
Assume $Q_x=\sigma^2 I$ with $\sigma>0$. Every local decoder satisfying
\cref{eq:left_inverse} obeys
\begin{equation}
 D_\psi(x)\succeq \sigma^2(\bar A_x^\top \bar A_x)^{-1}.
 \label{eq:psd_floor}
\end{equation}
Equality holds if and only if $Z_x=\bar A_x^\dagger$, equivalently the decoder
has no first-order sensitivity in directions orthogonal to
$\range(\bar A_x)=E_x^G$.
\end{theorem}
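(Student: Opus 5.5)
The plan is the standard generalized-least-squares decomposition of an arbitrary left inverse. Write $\bar A=\bar A_x$ and $Z=Z_x$. Since $\bar A$ has full column rank $q$, the Gram matrix $\bar A^\top\bar A$ is invertible, and the Moore--Penrose pseudoinverse is $\bar A^\dagger=(\bar A^\top\bar A)^{-1}\bar A^\top$, which satisfies \cref{eq:left_inverse}. Given any $Z$ with $Z\bar A=I_q$, set $W:=Z-\bar A^\dagger$. Then $W\bar A=0$, so every row of $W$ annihilates $\range(\bar A)=E_x^G$ (the equality of ranges follows from the definition \cref{eq:reduced_action}, since $\iota_x$ maps onto the complement of $\ker A_x$). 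Hence $W$ depends only on the component of the state orthogonal to $E_x^G$.

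Next, with $Q_x=\sigma^2 I$, expand \cref{eq:decoder_covariance}:
\[
 D_\psi(x)=\sigma^2 ZZ^\top=\sigma^2\bigl(\bar A^\dagger\bar A^{\dagger\top}+\bar A^\dagger W^\top+W\bar A^{\dagger\top}+WW^\top\bigr).
\]
The cross terms vanish because $W\bar A^{\dagger\top}=W\bar A(\bar A^\top\bar A)^{-1}=0$, and likewise for its transpose. Also $\bar A^\dagger\bar A^{\dagger\top}=(\bar A^\top\bar A)^{-1}\bar A^\top\bar A(\bar A^\top\bar A)^{-1}=(\bar A^\top\bar A)^{-1}$. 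Therefore
\[
 D_\psi(x)=\sigma^2(\bar A^\top\bar A)^{-1}+\sigma^2WW^\top,
\]
and since $WW^\top\succeq0$, the bound \cref{eq:psd_floor} follows immediately.

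For the equality case, equality in \cref{eq:psd_floor} holds iff $WW^\top=0$, iff $W=0$, iff $Z=\bar A^\dagger$. For the stated equivalence, $Z$ has no first-order sensitivity orthogonal to $E_x^G$ iff $Z$ vanishes on $(E_x^G)^\perp$. Combined with $Z\bar A=I_q$, this fixes $Z$ on all of $T_xM=E_x^G\oplus(E_x^G)^\perp$. The pseudoinverse has exactly this property, because $\bar A^\dagger$ vanishes on $\ker\bar A^\top=(\range\bar A)^\perp$. Conversely, since $W=Z-\bar A^\dagger$ and $\bar A^\dagger$ vanishes on $(E_x^G)^\perp$, vanishing of $Z$ there is equivalent to $W=0$.

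I do not expect a serious obstacle. The only care needed is bookkeeping: using a single fixed inner product on $T_xM$, identified with $\R^n$, so that $\sigma^2 I$, the transposes, and the orthogonal complement all refer to the same metric; and using full column rank of $\bar A_x$, which is guaranteed by the isometric horizontal frame, to justify invertibility of $\bar A^\top\bar A$.
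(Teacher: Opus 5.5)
Your proposal is correct and follows the paper's proof essentially verbatim: write $Z=\bar A^\dagger+R$ with $R\bar A=0$, kill the cross terms via $(\bar A^\dagger)^\top=\bar A(\bar A^\top\bar A)^{-1}$ to get $ZZ^\top=(\bar A^\top\bar A)^{-1}+RR^\top$, and read off the equality case from $RR^\top=0\iff R=0$. The one point the paper states explicitly and you leave implicit is that $\sigma>0$ is what lets you pass from $\sigma^2WW^\top=0$ to $W=0$.
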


Indeed, $Z=\bar A^\dagger+R$ with $R\bar A=0$, so
$ZZ^\top=(\bar A^\top\bar A)^{-1}+RR^\top$. More generally, for $Q\succ0$ the
minimum-covariance left inverse is the generalized least-squares decoder
\begin{equation}
 Z_*= (\bar A^\top Q^{-1}\bar A)^{-1}\bar A^\top Q^{-1},
 \qquad D_{\psi,*}=(\bar A^\top Q^{-1}\bar A)^{-1}.
 \label{eq:gls_decoder}
\end{equation}
This is the classical Gauss--Markov/Aitken generalized least-squares solution
\citep{aitken1936}. Our contribution is its decoder-coordinate interpretation
for recurrent memory and the causal fixed-budget test below, not the linear
estimation formula itself.

\begin{corollary}[Scaled-isometric action optimum]
\label{cor:tight}
Under the assumptions of \cref{thm:decoder_floor}, fix tangent energy
$P=\tr(\bar A_x^\top\bar A_x)$. Then every faithful decoder satisfies
\begin{equation}
 \lambda_{\max}(D_\psi)\ge\frac{\sigma^2q}{P},
 \qquad
 \tr(D_\psi)\ge\frac{\sigma^2q^2}{P}.
 \label{eq:tight_bounds}
\end{equation}
Both lower bounds are attained exactly when
$\bar A_x^\top\bar A_x=(P/q)I_q$ and $Z_x=\bar A_x^\dagger$.
\end{corollary}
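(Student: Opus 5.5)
We reduce everything to \cref{thm:decoder_floor}, which gives $D_\psi\succeq\sigma^2(\bar A_x^\top\bar A_x)^{-1}$ for every faithful decoder, with equality exactly when $Z_x=\bar A_x^\dagger$. Write $G=\bar A_x^\top\bar A_x\succ0$ (full column rank by construction of $\bar A_x$) with eigenvalues $\mu_1,\dots,\mu_q>0$ satisfying $\sum_i\mu_i=P$. Since the Loewner order is preserved by $\lambda_{\max}$ and by the trace, it suffices to lower-bound $\lambda_{\max}(G^{-1})=1/\mu_{\min}$ and $\tr(G^{-1})=\sum_i 1/\mu_i$ under the fixed-energy constraint. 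The result then follows by chaining these two minimizations with the floor.

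For the first bound, note $\mu_{\min}\le \frac1q\sum_i\mu_i=P/q$. This gives $\lambda_{\max}(D_\psi)\ge\sigma^2/\mu_{\min}\ge\sigma^2q/P$.

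For the second bound, apply Cauchy--Schwarz (equivalently AM--HM): $\bigl(\sum_i\mu_i\bigr)\bigl(\sum_i\mu_i^{-1}\bigr)\ge q^2$. Hence $\tr(D_\psi)\ge\sigma^2\tr(G^{-1})\ge\sigma^2q^2/P$.

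Attainment is handled in both directions.
\begin{itemize}
\item \emph{Sufficiency.} If $G=(P/q)I_q$ and $Z_x=\bar A_x^\dagger$, then the floor is an equality and $D_\psi=\sigma^2(q/P)I_q$. Both bounds are then met directly.
\item \emph{Necessity for the trace bound.} Equality in $\tr(D_\psi)\ge\sigma^2\tr(G^{-1})$ forces $\tr(RR^\top)=0$, using the decomposition $ZZ^\top=G^{-1}+RR^\top$ from the proof of \cref{thm:decoder_floor}. So $R=0$ and $Z_x=\bar A_x^\dagger$. Equality in AM--HM forces all $\mu_i$ equal, so $G=(P/q)I_q$.
\item \emph{Necessity for the $\lambda_{\max}$ bound.} Equality $\lambda_{\max}(G^{-1})=q/P$ forces $\mu_{\min}=P/q$ together with $\sum_i\mu_i=P$, hence all $\mu_i=P/q$.
\end{itemize}

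The main subtlety is the decoder half of the $\lambda_{\max}$ characterization. Even with $G=(P/q)I$, a nonzero $R$ satisfying $R\bar A=0$ could leave $\lambda_{\max}(G^{-1}+RR^\top/\sigma^2\cdot\sigma^2)$ unchanged only if $RR^\top$ vanished on the top eigenvector; since $G^{-1}$ is then scalar, any nonzero $RR^\top$ increases $\lambda_{\max}$. I would verify this explicitly. I would also read ``both attained exactly when'' as the joint condition, with the trace equality alone already forcing both parts. The argument should therefore treat the $\lambda_{\max}$ case via this scalar-$G^{-1}$ observation, so that the characterization holds for each bound separately.
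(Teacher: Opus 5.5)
Your proposal is correct and follows the paper's proof: the Loewner floor from \cref{thm:decoder_floor}, then $\mu_{\min}\le P/q$ for the $\lambda_{\max}$ bound and the harmonic--arithmetic mean inequality for the trace bound. Your extra observation that once $\bar A_x^\top\bar A_x=(P/q)I_q$ any nonzero $RR^\top$ strictly raises $\lambda_{\max}(D_\psi)$ (while for the trace $\tr(RR^\top)=0\Rightarrow R=0$ directly) supplies the step the paper states only tersely, namely that equality in the $\lambda_{\max}$ bound alone also forces $Z_x=\bar A_x^\dagger$.
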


Thus the representation optimum is conditional on isotropic ambient noise, a
chosen group-coordinate metric, a fixed tangent-energy budget, and a
normal-insensitive decoder. The algebra is the familiar tight-frame/scaled-
isometry condition \citep{benedettoFickus2003,xu2023lie,xu2025grid}; the
contribution here is its recurrent-memory decoder interpretation and causal
validation, not the frame inequality itself.

\paragraph{Paired $T^2$ geometry intervention.}
We fix $P=2$ and prescribe $\kappa(\bar A)\in\{1,1.5,2,3,5,8\}$ in
scale-conjugate $T^2$ integrators with matched noiseless behavior. Six paired
seed units, not 36 independent models, isolate the response to isotropic
ambient noise. The angle decoder has differential $\bar A^\dagger$ on the
target orbit. Median tensor error is $0.56\%$ locally and $3.32\%$ over long
zero-input rollouts. Increasing $\kappa$ from 1 to 8 raises noisy horizon-256
worst-coordinate RMSE by $0.639\pm0.011$ rad (mean $\pm$ SEM), with all six
pairs increasing. Gaussian increments follow updates at $\Delta t=0.1$:
\cref{cor:discrete_covariance} is a small-step bridge, not an exact identity
for nonlinear finite-horizon task error. Full controls appear in the appendix.

\paragraph{Theory-guided learned geometry.}
We next replace the prescribed radii by the exact fixed-budget
parameterization $r_i^2=2\,\mathrm{softmax}(\alpha)_i$ and initialize all
learnable arms at $\kappa=8$ (\cref{app:exp41}). A clean-only pilot selected
the covariance-regularization coefficient without using noisy test error.
With that coefficient and the training settings held fixed, our principal
$T^2$ learning evaluation uses twenty fresh paired training units with random
streams separated across roles and units (\cref{app:fresh_replication}).
Hyperparameter selection and the separate historical cohort are documented
in \cref{app:geometry_selection}. Those historical observations are descriptive
and are not pooled with the replication.
The covariance objective lowers noisy horizon-256 RMSE in $20/20$ pairs:
the mean change is $-0.623$ rad (95\% paired $t$ interval
$[-0.630,-0.616]$), while the clean-change interval lies within the
prespecified $\pm0.01$ rad margin. Its noisy RMSE remains
$0.050\pm0.001$ above the isotropic oracle (mean $\pm$ SEM).
Direct noise training also improves noisy memory in $20/20$ pairs, but its
clean change is $+0.018$ rad (95\% interval $[0.006,0.031]$), failing that
equivalence criterion. The benefit is therefore objective-specific, not a
blanket claim of cost-free robustness.

\paragraph{Noise-dependent action geometry.}
For coupled $T^q$ integrators with fixed angle decoding, block noise
$Q_i=\sigma^2v_iI_2$, and Euclidean budget $\sum_i r_i^2=P$,
$D_\psi=\diag(\sigma^2v_i/r_i^2)$ has trace-minimizing allocation
$r_i^2=P\sqrt{v_i}/\sum_j\sqrt{v_j}$ (\cref{app:noise_geometry}).
Sixteen paired training units per dimension test this prediction at $P=q$.
At horizon 256 (physical time 25.6), speed 1.5, anisotropic-covariance minus
isotropic regularization changes mean endpoint phase MSE by $-0.006106$
rad$^2$ for $T^4$ (primary; 95\% paired $t$ interval
$[-0.006674,-0.005538]$) and $-0.006188$ for $T^8$ (secondary).
Switching the same trained cells to isotropic evaluation noise reverses these
differences to $+0.014229$ and $+0.009814$ rad$^2$.
This tests noise-specific precision, not a universal isometric optimum or
the deterministic contraction assumptions.

\begin{figure}[t]
\centering
\includegraphics[width=\linewidth]{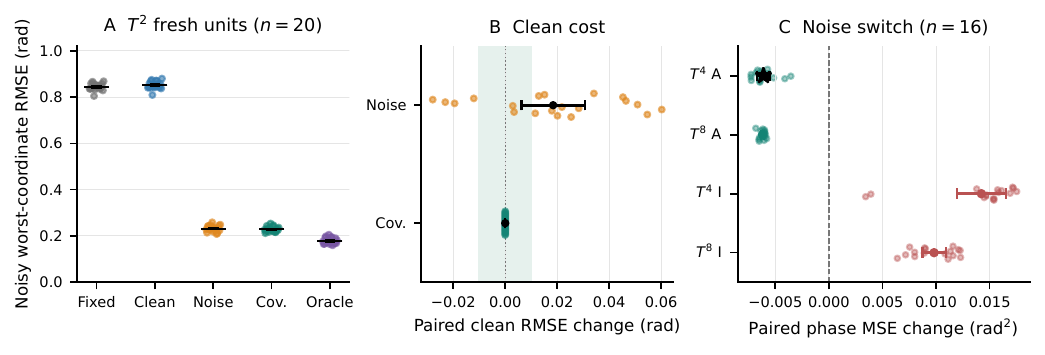}
\caption{\textbf{Noise-specific geometry and clean-error cost.} (A) $T^2$ noisy
worst-coordinate RMSE at horizon 256; dots are twenty paired training units,
black marks mean $\pm$ SEM. (B) Clean RMSE changes relative to clean learning;
bars are paired 95\% $t$ intervals and shading is the $\pm0.01$ rad
equivalence margin; covariance but not noise training meets it.
(C) Anisotropic-covariance minus isotropic regularization: endpoint phase
MSE changes under anisotropic (A) and isotropic (I) noise, with sixteen
paired units and 95\% $t$ intervals per row; the star marks the primary
$T^4$ contrast. The subtraction is unchanged across noise conditions.
Panels A/B use rad, C rad$^2$. These are conditional task consequences, not
theorem proof or a claim that one geometry always wins.}
\label{fig:diffusion}
\end{figure}

\section{Related work}
\label{sec:related}
Prior work covers continuous attractors, equivariant networks, sequence/world
models, and robust memory
\citep{seung1996,seung1998,burakFiete2009,cohenWelling2016,
bronstein2021geometric,zhang2022translation,zuo2023temporal,diBernardo2025,
keller2025fernn,lillemark2026flow,can2025robust}. We specialize classical
equivariant, persistence, nonautonomous, and subspace-perturbation theory
\citep{golubitsky1988,krupa1990,fenichel1971,hirschPughShub1977,
sackerSell1976iii,coppel1978,potzsche2010,wedin1972} to block-resolved recurrent
memory, not a general dichotomy theorem. Adjacent work treats symmetry
relaxation, Goldstone-like propagation, certified rollouts, Lie-group
navigation, generalized least squares, and tight frames
\citep{pertigkiozoglou2026recm,iqbal2026goldstone,wang2026certified,
wang2026conformal,xu2023lie,xu2025grid,aitken1936,benedettoFickus2003}.
We combine stabilizer, block-defect, and decoder-covariance analysis. Relative
to \citet{mo2026symmetry}, this work adds controlled transport, finite-horizon
bounds, and decoder-aware learning; \cref{app:related_extended} gives detailed
comparisons.

\section{Limitations and conclusion}
\label{sec:limitations}
Protected rank $q$ requires per-input equivariance and a nondegenerate orbit
stratum; approximate equivariance requires separate bounds. The time-varying
perturbation bound is finite-horizon and offline, assumes normal contraction,
and makes no asymptotic, online, or bi-infinite claim. The stochastic optimum depends on the
declared noise, metric, decoder, and energy budget. The $T^4/T^8$ checks use a
fixed angle decoder and block noise; they establish neither global stochastic
stability nor deterministic contraction. Within these limits, orbit
geometry, defect routing, and decoder geometry govern memory dimension,
deterministic degradation, and diffusion.

\label{end:main_text}
\subsection*{Reproducibility statement}
Complete proofs and experimental protocols are provided in the appendix. The
supplementary material contains the Lean development, theorem-to-declaration
mapping, Python implementation, study configurations, checkpoints, result
records, and figure-generation or correction scripts. Numerical precision,
random-stream separation, and retained unsuccessful trials are documented
with the corresponding studies; \cref{app:reproducibility} summarizes the
computational resources and verification scope.

\subsection*{AI use statement}
\label{sec:ai_use}
Large language models assisted with critiquing mathematical claims, revising proofs,
and checking the Lean formalization. They also assisted with literature discovery,
figure preparation, and manuscript editing. Reported measurements
come from the archived computations, not from generated numerical claims.
The formal and executable checks, including their limitations, are documented
in \cref{app:formal_correspondence} and the supplementary artifact; they do not
establish end-to-end formal verification of the manuscript or experimental
pipeline. 

\clearpage
\bibliographystyle{iclr2027_conference}
\bibliography{references}

@article{seung1996,
  author = {Seung, H. S.},
  title = {How the Brain Keeps the Eyes Still},
  journal = {Proceedings of the National Academy of Sciences of the United States of America},
  volume = {93},
  number = {23},
  pages = {13339--13344},
  year = {1996},
  doi = {10.1073/pnas.93.23.13339}
}

@article{seung1998,
  author = {Seung, H. S.},
  title = {Continuous Attractors and Oculomotor Control},
  journal = {Neural Networks},
  volume = {11},
  number = {7--8},
  pages = {1253--1258},
  year = {1998},
  doi = {10.1016/S0893-6080(98)00064-1}
}

@article{burakFiete2009,
  author = {Burak, Yoram and Fiete, Ila R.},
  title = {Accurate Path Integration in Continuous Attractor Network Models of Grid Cells},
  journal = {PLOS Computational Biology},
  volume = {5},
  number = {2},
  pages = {e1000291},
  year = {2009},
  doi = {10.1371/journal.pcbi.1000291}
}

@inproceedings{cuevaWei2018,
  author = {Cueva, Christopher J. and Wei, Xue-Xin},
  title = {Emergence of Grid-like Representations by Training Recurrent Neural Networks to Perform Spatial Localization},
  booktitle = {International Conference on Learning Representations},
  year = {2018},
  url = {https://openreview.net/forum?id=B17JTOe0-},
  eprint = {1803.07770},
  archivePrefix = {arXiv},
  primaryClass = {q-bio.NC}
}

@article{banino2018,
  author = {Banino, Andrea and Barry, Caswell and Uria, Benigno and Blundell, Charles and Lillicrap, Timothy and Mirowski, Piotr and Pritzel, Alexander and Chadwick, Martin J. and Degris, Thomas and Modayil, Joseph and Wayne, Greg and Soyer, Hubert and Viola, Fabio and Zhang, Brian and Goroshin, Ross and Rabinowitz, Neil and Pascanu, Razvan and Beattie, Charles and Petersen, Stig and Sadik, Amir and Gaffney, Stephen and King, Helen and Kavukcuoglu, Koray and Hassabis, Demis and Hadsell, Raia and Kumaran, Dharshan},
  title = {Vector-Based Navigation Using Grid-like Representations in Artificial Agents},
  journal = {Nature},
  volume = {557},
  number = {7705},
  pages = {429--433},
  year = {2018},
  doi = {10.1038/s41586-018-0102-6}
}

@article{sorscher2023grid,
  author = {Sorscher, Ben and Mel, Gabriel C. and Ocko, Samuel A. and Giocomo, Lisa M. and Ganguli, Surya},
  title = {A Unified Theory for the Computational and Mechanistic Origins of Grid Cells},
  journal = {Neuron},
  volume = {111},
  number = {1},
  pages = {121--137.e13},
  year = {2023},
  doi = {10.1016/j.neuron.2022.10.003}
}

@inproceedings{xu2023lie,
  author = {Xu, Dehong and Gao, Ruiqi and Zhang, Wen-Hao and Wei, Xue-Xin and Wu, Ying Nian},
  title = {Conformal Isometry of {Lie} Group Representation in Recurrent Network of Grid Cells},
  booktitle = {Proceedings of the 1st NeurIPS Workshop on Symmetry and Geometry in Neural Representations},
  series = {Proceedings of Machine Learning Research},
  volume = {197},
  pages = {370--387},
  publisher = {PMLR},
  year = {2023},
  url = {https://proceedings.mlr.press/v197/xu23a.html}
}

@inproceedings{xu2025grid,
  author = {Xu, Dehong and Gao, Ruiqi and Zhang, Wenhao and Wei, Xue-Xin and Wu, Ying Nian},
  title = {On Conformal Isometry of Grid Cells: Learning Distance-Preserving Position Embedding},
  booktitle = {International Conference on Learning Representations},
  year = {2025},
  note = {Oral presentation},
  url = {https://openreview.net/forum?id=Xo0Q1N7CGk}
}

@inproceedings{wang2022approximately,
  title = {Approximately Equivariant Networks for Imperfectly Symmetric Dynamics},
  author = {Wang, Rui and Walters, Robin and Yu, Rose},
  booktitle = {Proceedings of the 39th International Conference on Machine Learning},
  series = {Proceedings of Machine Learning Research},
  volume = {162},
  pages = {23078--23091},
  year = {2022},
  publisher = {PMLR},
  url = {https://proceedings.mlr.press/v162/wang22aa.html}
}

@article{diBernardo2025,
  author = {Di Bernardo, Arianna and Valente, Adrian and Mastrogiuseppe, Francesca and Ostojic, Srdjan},
  title = {Shaping Manifolds in Equivariant Recurrent Neural Networks},
  journal = {arXiv preprint arXiv:2511.04802},
  year = {2025},
  doi = {10.48550/arXiv.2511.04802},
  url = {https://arxiv.org/abs/2511.04802},
  eprint = {2511.04802},
  archivePrefix = {arXiv},
  primaryClass = {q-bio.NC}
}

@article{benettin1980Theory,
  author = {Benettin, G. and Galgani, L. and Giorgilli, A. and Strelcyn, J.-M.},
  title = {Lyapunov Characteristic Exponents for Smooth Dynamical Systems and for Hamiltonian Systems; A Method for Computing All of Them. Part 1: Theory},
  journal = {Meccanica},
  volume = {15},
  number = {1},
  pages = {9--20},
  year = {1980},
  doi = {10.1007/BF02128236}
}

@article{benettin1980Numerical,
  author = {Benettin, G. and Galgani, L. and Giorgilli, A. and Strelcyn, J.-M.},
  title = {Lyapunov Characteristic Exponents for Smooth Dynamical Systems and for Hamiltonian Systems; A Method for Computing All of Them. Part 2: Numerical Application},
  journal = {Meccanica},
  volume = {15},
  number = {1},
  pages = {21--30},
  year = {1980},
  doi = {10.1007/BF02128237}
}

@article{ginelli2007,
  author = {Ginelli, F. and Poggi, P. and Turchi, A. and Chat{\'e}, H. and Livi, R. and Politi, A.},
  title = {Characterizing Dynamics with Covariant Lyapunov Vectors},
  journal = {Physical Review Letters},
  volume = {99},
  number = {13},
  pages = {130601},
  year = {2007},
  doi = {10.1103/PhysRevLett.99.130601}
}

@book{golubitsky1988,
  author = {Golubitsky, Martin and Stewart, Ian and Schaeffer, David G.},
  title = {Singularities and Groups in Bifurcation Theory, Volume II},
  series = {Applied Mathematical Sciences},
  volume = {69},
  publisher = {Springer},
  year = {1988},
  doi = {10.1007/978-1-4612-4574-2}
}

@article{krupa1990,
  author = {Krupa, Martin},
  title = {Bifurcations of Relative Equilibria},
  journal = {SIAM Journal on Mathematical Analysis},
  volume = {21},
  number = {6},
  pages = {1453--1486},
  year = {1990},
  doi = {10.1137/0521081}
}

@article{knyazevArgentati2002,
  author = {Knyazev, Andrew V. and Argentati, Merico E.},
  title = {Principal Angles between Subspaces in an {A}-Based Scalar Product: Algorithms and Perturbation Estimates},
  journal = {SIAM Journal on Scientific Computing},
  volume = {23},
  number = {6},
  pages = {2008--2040},
  year = {2002},
  doi = {10.1137/S1064827500377332}
}

@article{oseledets1968,
  author = {Oseledets, V. I.},
  title = {A Multiplicative Ergodic Theorem: Lyapunov Characteristic Numbers for Dynamical Systems},
  journal = {Transactions of the Moscow Mathematical Society},
  volume = {19},
  pages = {197--231},
  year = {1968}
}

@article{froyland2013semi,
  author = {Froyland, Gary and Lloyd, Simon and Quas, Anthony},
  title = {A Semi-Invertible {Oseledets} Theorem with Applications to Transfer Operator Cocycles},
  journal = {Discrete and Continuous Dynamical Systems},
  volume = {33},
  number = {9},
  pages = {3835--3860},
  year = {2013},
  doi = {10.3934/dcds.2013.33.3835}
}

@article{aitken1936,
  author = {Aitken, A. C.},
  title = {On Least Squares and Linear Combination of Observations},
  journal = {Proceedings of the Royal Society of Edinburgh},
  volume = {55},
  pages = {42--48},
  year = {1936},
  doi = {10.1017/S0370164600014346}
}

@inproceedings{engelken2023gradientflossing,
  author = {Engelken, Rainer},
  title = {Gradient Flossing: Improving Gradient Descent through Dynamic Control of Jacobians},
  booktitle = {Advances in Neural Information Processing Systems},
  volume = {36},
  year = {2023},
  url = {https://proceedings.neurips.cc/paper_files/paper/2023/hash/214ce905bf2072535e34b3cf873cbbc8-Abstract-Conference.html},
  eprint = {2312.17306},
  archivePrefix = {arXiv},
  primaryClass = {cs.LG}
}

@inproceedings{cho2014gru,
  author = {Cho, Kyunghyun and van Merri{\"e}nboer, Bart and Gulcehre, Caglar and Bahdanau, Dzmitry and Bougares, Fethi and Schwenk, Holger and Bengio, Yoshua},
  title = {Learning Phrase Representations Using {RNN} Encoder--Decoder for Statistical Machine Translation},
  booktitle = {Proceedings of the 2014 Conference on Empirical Methods in Natural Language Processing},
  pages = {1724--1734},
  year = {2014},
  doi = {10.3115/v1/D14-1179}
}

@inproceedings{cohenWelling2016,
  author = {Cohen, Taco and Welling, Max},
  title = {Group Equivariant Convolutional Networks},
  booktitle = {Proceedings of the 33rd International Conference on Machine Learning},
  series = {Proceedings of Machine Learning Research},
  volume = {48},
  pages = {2990--2999},
  year = {2016},
  url = {https://proceedings.mlr.press/v48/cohenc16.html}
}

@inproceedings{kondorTrivedi2018,
  author = {Kondor, Risi and Trivedi, Shubhendu},
  title = {On the Generalization of Equivariance and Convolution in Neural Networks to the Action of Compact Groups},
  booktitle = {Proceedings of the 35th International Conference on Machine Learning},
  series = {Proceedings of Machine Learning Research},
  volume = {80},
  pages = {2747--2755},
  year = {2018},
  url = {https://proceedings.mlr.press/v80/kondor18a.html}
}

@article{bronstein2021geometric,
  author = {Bronstein, Michael M. and Bruna, Joan and Cohen, Taco and Veli{\v{c}}kovi{\'c}, Petar},
  title = {Geometric Deep Learning: Grids, Groups, Graphs, Geodesics, and Gauges},
  journal = {arXiv preprint arXiv:2104.13478},
  year = {2021},
  doi = {10.48550/arXiv.2104.13478},
  url = {https://arxiv.org/abs/2104.13478},
  eprint = {2104.13478},
  archivePrefix = {arXiv},
  primaryClass = {cs.LG}
}

@article{hochreiter1997,
  author = {Hochreiter, Sepp and Schmidhuber, J{\"u}rgen},
  title = {Long Short-Term Memory},
  journal = {Neural Computation},
  volume = {9},
  number = {8},
  pages = {1735--1780},
  year = {1997},
  doi = {10.1162/neco.1997.9.8.1735}
}

@inproceedings{henaff2016,
  author = {Henaff, Mikael and Szlam, Arthur and LeCun, Yann},
  title = {Recurrent Orthogonal Networks and Long-Memory Tasks},
  booktitle = {Proceedings of the 33rd International Conference on Machine Learning},
  series = {Proceedings of Machine Learning Research},
  volume = {48},
  pages = {2034--2042},
  year = {2016},
  url = {https://proceedings.mlr.press/v48/henaff16.html}
}

@inproceedings{arjovsky2016,
  author = {Arjovsky, Martin and Shah, Amar and Bengio, Yoshua},
  title = {Unitary Evolution Recurrent Neural Networks},
  booktitle = {Proceedings of the 33rd International Conference on Machine Learning},
  series = {Proceedings of Machine Learning Research},
  volume = {48},
  pages = {1120--1128},
  year = {2016},
  url = {https://proceedings.mlr.press/v48/arjovsky16.html}
}

@article{rumberger2001,
  author = {Rumberger, Matthias},
  title = {Lyapunov Exponents on the Orbit Space},
  journal = {Discrete and Continuous Dynamical Systems},
  volume = {7},
  number = {1},
  pages = {91--113},
  year = {2001},
  doi = {10.3934/dcds.2001.7.91},
  url = {https://www.aimsciences.org/article/doi/10.3934/dcds.2001.7.91}
}

@inproceedings{sagodi2024back,
  author = {S{\'a}godi, {\'A}bel and Mart{\'i}n-S{\'a}nchez, Guillermo and Sok{\'o}{\l}, Piotr and Park, Il Memming},
  title = {Back to the Continuous Attractor},
  booktitle = {Advances in Neural Information Processing Systems},
  volume = {37},
  year = {2024},
  url = {https://proceedings.neurips.cc/paper_files/paper/2024/hash/7b78a2a7360d5a9ad750834dc5a33bfb-Abstract-Conference.html}
}

@article{darshanRivkind2022,
  author = {Darshan, Ran and Rivkind, Alexander},
  title = {Learning to Represent Continuous Variables in Heterogeneous Neural Networks},
  journal = {Cell Reports},
  volume = {39},
  number = {1},
  pages = {110612},
  year = {2022},
  doi = {10.1016/j.celrep.2022.110612}
}

@inproceedings{keller2025fernn,
  author = {Keller, T. Anderson},
  title = {Flow Equivariant Recurrent Neural Networks},
  booktitle = {Advances in Neural Information Processing Systems},
  volume = {38},
  year = {2025},
  note = {Spotlight},
  doi = {10.48550/arXiv.2507.14793},
  url = {https://proceedings.neurips.cc/paper_files/paper/2025/hash/e637029c42aa593850eeebf46616444d-Abstract-Conference.html},
  eprint = {2507.14793},
  archivePrefix = {arXiv},
  primaryClass = {cs.LG}
}

@inproceedings{lillemark2026flow,
  author = {Lillemark, Hansen Jin and Huang, Benhao and Zhan, Fangneng and Du, Yilun and Keller, T. Anderson},
  title = {Flow Equivariant World Models: Memory for Partially Observed Dynamic Environments},
  booktitle = {International Conference on Machine Learning},
  year = {2026},
  note = {Accepted},
  doi = {10.48550/arXiv.2601.01075},
  url = {https://arxiv.org/abs/2601.01075},
  eprint = {2601.01075},
  archivePrefix = {arXiv},
  primaryClass = {cs.LG}
}

@article{wang2026certified,
  author = {Wang, Hongbo},
  title = {Certified World Models: Predictability Across Configuration, Horizon, and Resolution},
  journal = {arXiv preprint arXiv:2606.13092},
  year = {2026},
  doi = {10.48550/arXiv.2606.13092},
  url = {https://arxiv.org/abs/2606.13092},
  eprint = {2606.13092},
  archivePrefix = {arXiv},
  primaryClass = {cs.LG}
}

@article{pertigkiozoglou2026recm,
  author = {Pertigkiozoglou, Stefanos and Petrache, Mircea and Trivedi, Shubhendu and Daniilidis, Kostas},
  title = {Recurrent Equivariant Constraint Modulation: Learning Per-Layer Symmetry Relaxation from Data},
  journal = {arXiv preprint arXiv:2602.02853},
  year = {2026},
  doi = {10.48550/arXiv.2602.02853},
  url = {https://arxiv.org/abs/2602.02853},
  eprint = {2602.02853},
  archivePrefix = {arXiv},
  primaryClass = {cs.LG}
}

@article{wang2026conformal,
  author = {Wang, Hongbo},
  title = {Conformal Orbit-Valid Trust Horizons for Equivariant World Models},
  journal = {arXiv preprint arXiv:2606.24946},
  year = {2026},
  doi = {10.48550/arXiv.2606.24946},
  url = {https://arxiv.org/abs/2606.24946},
  eprint = {2606.24946},
  archivePrefix = {arXiv},
  primaryClass = {cs.LG}
}

@inproceedings{liang2025symreg,
  author = {Liang, Arthur and S{\'a}godi, {\'A}bel and Sok{\'o}{\l}, Piotr A. and Park, Il Memming},
  title = {Symmetry-Regularized Learning of Continuous Attractor Dynamics},
  booktitle = {NeurIPS 2025 Workshop on Symmetry and Geometry in Neural Representations (NeurReps)},
  year = {2025},
  note = {Poster},
  url = {https://openreview.net/forum?id=W8Gf7CYCo8}
}

@article{burakFiete2012,
  author = {Burak, Yoram and Fiete, Ila R.},
  title = {Fundamental Limits on Persistent Activity in Networks of Noisy Neurons},
  journal = {Proceedings of the National Academy of Sciences of the United States of America},
  volume = {109},
  number = {43},
  pages = {17645--17650},
  year = {2012},
  doi = {10.1073/pnas.1117386109}
}

@article{fenichel1971,
  author = {Fenichel, Neil},
  title = {Persistence and Smoothness of Invariant Manifolds for Flows},
  journal = {Indiana University Mathematics Journal},
  volume = {21},
  number = {3},
  pages = {193--226},
  year = {1971},
  doi = {10.1512/iumj.1972.21.21017}
}

@book{hirschPughShub1977,
  author = {Hirsch, Morris W. and Pugh, Charles C. and Shub, Michael},
  title = {Invariant Manifolds},
  series = {Lecture Notes in Mathematics},
  volume = {583},
  publisher = {Springer-Verlag},
  address = {Berlin},
  year = {1977},
  doi = {10.1007/BFb0092042}
}

@book{oksendal2003,
  author = {{\O}ksendal, Bernt},
  title = {Stochastic Differential Equations: An Introduction with Applications},
  edition = {6},
  series = {Universitext},
  publisher = {Springer Berlin, Heidelberg},
  year = {2003},
  doi = {10.1007/978-3-642-14394-6}
}

@article{maclaurin2023phase,
  author = {MacLaurin, J.},
  title = {Phase Reduction of Waves, Patterns, and Oscillations Subject to Spatially Extended Noise},
  journal = {SIAM Journal on Applied Mathematics},
  volume = {83},
  number = {3},
  pages = {1215--1244},
  year = {2023},
  doi = {10.1137/21M1451221}
}

@article{ahsan2025covariance,
  author = {Ahsan, Zaid and Dankowicz, Harry and Kuehn, Christian},
  title = {Adjoint-Based Projections for Quantifying Statistical Covariance near Stochastically Perturbed Limit Cycles and Tori},
  journal = {SIAM Journal on Applied Dynamical Systems},
  volume = {24},
  number = {2},
  pages = {1455--1493},
  year = {2025},
  doi = {10.1137/23M1599318}
}

@article{benedettoFickus2003,
  author = {Benedetto, John J. and Fickus, Matthew},
  title = {Finite Normalized Tight Frames},
  journal = {Advances in Computational Mathematics},
  volume = {18},
  pages = {357--385},
  year = {2003},
  doi = {10.1023/A:1021323312367}
}

@article{iqbal2026goldstone,
  author = {Iqbal, Nabil and Keller, T. Anderson and Song, Yue and Miyato, Takeru and Welling, Max},
  title = {Spontaneous Symmetry Breaking and Goldstone Modes for Deep Information Propagation},
  journal = {arXiv preprint arXiv:2605.14685},
  year = {2026},
  doi = {10.48550/arXiv.2605.14685},
  url = {https://arxiv.org/abs/2605.14685},
  eprint = {2605.14685},
  archivePrefix = {arXiv},
  primaryClass = {cs.LG}
}

@article{mo2026symmetry,
  author = {Mo, Hanson Hanxuan},
  title = {Symmetry-Protected Lyapunov Neutral Modes in Equivariant Recurrent Networks},
  journal = {arXiv preprint arXiv:2605.03338},
  year = {2026},
  doi = {10.48550/arXiv.2605.03338},
  url = {https://arxiv.org/abs/2605.03338},
  eprint = {2605.03338},
  archivePrefix = {arXiv},
  primaryClass = {cs.LG}
}

@article{nartallo2026driftdiffusion,
  author = {Nartallo-Kaluarachchi, Ram{\'o}n and Lambiotte, Renaud and Goriely, Alain},
  title = {Drift--Diffusion Matching: Embedding Dynamics in Latent Manifolds of Asymmetric Neural Networks},
  journal = {arXiv preprint arXiv:2602.14885},
  year = {2026},
  doi = {10.48550/arXiv.2602.14885},
  url = {https://arxiv.org/abs/2602.14885},
  eprint = {2602.14885},
  archivePrefix = {arXiv},
  primaryClass = {cond-mat.dis-nn}
}

@article{sagodiPark2026uap,
  author = {S{\'a}godi, {\'A}bel and Park, Il Memming},
  title = {Universal Approximation Theorems for Dynamical Systems with Infinite-Time Horizon Guarantees},
  journal = {arXiv preprint arXiv:2602.08640},
  year = {2026},
  doi = {10.48550/arXiv.2602.08640},
  url = {https://arxiv.org/abs/2602.08640},
  eprint = {2602.08640},
  archivePrefix = {arXiv},
  primaryClass = {math.DS}
}

@article{sackerSell1976iii,
  author = {Sacker, Robert J. and Sell, George R.},
  title = {Existence of Dichotomies and Invariant Splittings for Linear Differential Systems, III},
  journal = {Journal of Differential Equations},
  volume = {22},
  number = {2},
  pages = {497--522},
  year = {1976},
  doi = {10.1016/0022-0396(76)90043-7}
}

@book{coppel1978,
  author = {Coppel, W. A.},
  title = {Dichotomies in Stability Theory},
  series = {Lecture Notes in Mathematics},
  volume = {629},
  publisher = {Springer-Verlag},
  address = {Berlin},
  year = {1978},
  doi = {10.1007/BFb0067780}
}

@book{potzsche2010,
  author = {P{\"o}tzsche, Christian},
  title = {Geometric Theory of Discrete Nonautonomous Dynamical Systems},
  series = {Lecture Notes in Mathematics},
  volume = {2002},
  publisher = {Springer},
  address = {Berlin, Heidelberg},
  year = {2010},
  doi = {10.1007/978-3-642-14258-1}
}

@article{wedin1972,
  author = {Wedin, Per-{\AA}ke},
  title = {Perturbation Bounds in Connection with Singular Value Decomposition},
  journal = {BIT Numerical Mathematics},
  volume = {12},
  pages = {99--111},
  year = {1972},
  doi = {10.1007/BF01932678}
}

@article{chaudhuri2016memory,
  author = {Chaudhuri, Rishidev and Fiete, Ila},
  title = {Computational Principles of Memory},
  journal = {Nature Neuroscience},
  volume = {19},
  number = {3},
  pages = {394--403},
  year = {2016},
  doi = {10.1038/nn.4237}
}

@article{kilpatrick2013wandering,
  author = {Kilpatrick, Zachary P. and Ermentrout, Bard},
  title = {Wandering Bumps in Stochastic Neural Fields},
  journal = {SIAM Journal on Applied Dynamical Systems},
  volume = {12},
  number = {1},
  pages = {61--94},
  year = {2013},
  doi = {10.1137/120877106}
}

@inproceedings{agmon2023multiple,
  author = {Agmon, Haggai and Burak, Yoram},
  title = {Simultaneous Embedding of Multiple Attractor Manifolds in a Recurrent Neural Network Using Constrained Gradient Optimization},
  booktitle = {Advances in Neural Information Processing Systems},
  volume = {36},
  year = {2023},
  url = {https://proceedings.neurips.cc/paper_files/paper/2023/hash/361e5112d2eca09513bbd266e4b2d2be-Abstract-Conference.html}
}

@article{palais1961slices,
  author = {Palais, Richard S.},
  title = {On the Existence of Slices for Actions of Non-Compact Lie Groups},
  journal = {Annals of Mathematics},
  volume = {73},
  number = {2},
  pages = {295--323},
  year = {1961},
  doi = {10.2307/1970335}
}

@inproceedings{zhang2022translation,
  author = {Zhang, Wenhao and Wu, Ying Nian and Wu, Si},
  title = {Translation-equivariant Representation in Recurrent Networks with a Continuous Manifold of Attractors},
  booktitle = {Advances in Neural Information Processing Systems},
  volume = {35},
  year = {2022},
  url = {https://proceedings.neurips.cc/paper_files/paper/2022/hash/65384a01325fecbd364c835db872443c-Abstract-Conference.html}
}

@article{can2025robust,
  author = {Can, Tankut and Krishnamurthy, Kamesh},
  title = {Emergence of Robust Memory Manifolds},
  journal = {PRX Life},
  volume = {3},
  number = {2},
  pages = {023006},
  year = {2025},
  doi = {10.1103/PRXLife.3.023006}
}

@article{davisKahan1970,
  author = {Davis, Chandler and Kahan, W. M.},
  title = {The Rotation of Eigenvectors by a Perturbation. III},
  journal = {SIAM Journal on Numerical Analysis},
  volume = {7},
  number = {1},
  pages = {1--46},
  year = {1970},
  doi = {10.1137/0707001}
}

@article{stewart1973subspaces,
  author = {Stewart, G. W.},
  title = {Error and Perturbation Bounds for Subspaces Associated with Certain Eigenvalue Problems},
  journal = {SIAM Review},
  volume = {15},
  number = {4},
  pages = {727--764},
  year = {1973},
  doi = {10.1137/1015095}
}

@article{xu2026transfer,
  author = {Xu, Tie and Wu, Shengdun and Luo, Junwen and Yang, Yao and Lin, Feng and Jiang, Junjie and Chen, Guozhang and Yang, Dongping},
  title = {Representation Transfer via Invariant Input-driven Continuous Attractors for Fast Domain Adaptation},
  journal = {Communications Biology},
  volume = {9},
  pages = {747},
  year = {2026},
  doi = {10.1038/s42003-026-09938-8}
}

@inproceedings{zuo2023temporal,
  author = {Zuo, Junfeng and Liu, Xiao and Wu, Ying Nian and Wu, Si and Zhang, Wenhao},
  title = {A Recurrent Neural Circuit Mechanism of Temporal-scaling Equivariant Representation},
  booktitle = {Advances in Neural Information Processing Systems},
  volume = {36},
  year = {2023},
  url = {https://proceedings.neurips.cc/paper_files/paper/2023/hash/c5e44243e16c9d61d3897ba1095f5f6c-Abstract-Conference.html}
}

@article{cotteret2025spatial,
  author = {Cotteret, Madison and Kymn, Christopher J. and Greatorex, Hugh and Ziegler, Martin and Chicca, Elisabetta and Sommer, Friedrich T.},
  title = {High-resolution Spatial Memory Requires Grid-cell-like Neural Codes},
  journal = {arXiv preprint arXiv:2507.00598},
  year = {2025},
  eprint = {2507.00598},
  archivePrefix = {arXiv},
  primaryClass = {cs.NE}
}

@inproceedings{deMouraUllrich2021,
  author = {de Moura, Leonardo and Ullrich, Sebastian},
  title = {The {Lean 4} Theorem Prover and Programming Language},
  booktitle = {Automated Deduction -- {CADE 28}},
  series = {Lecture Notes in Computer Science},
  volume = {12699},
  pages = {625--635},
  publisher = {Springer},
  year = {2021},
  doi = {10.1007/978-3-030-79876-5_37}
}

@inproceedings{mathlib2020,
  author = {{The mathlib Community}},
  title = {The {Lean} Mathematical Library},
  booktitle = {Proceedings of the 9th {ACM SIGPLAN} International Conference on Certified Programs and Proofs},
  pages = {367--381},
  publisher = {ACM},
  year = {2020},
  doi = {10.1145/3372885.3373824}
}

@misc{george2026torchlean,
  author = {George, Robert Joseph and Cruden, Jennifer and Adkisson, Will and Zhong, Xiangru and Zhang, Huan and Anandkumar, Anima},
  title = {{TorchLean}: Formalizing Neural Networks in {Lean}},
  year = {2026},
  eprint = {2602.22631},
  archivePrefix = {arXiv},
  primaryClass = {cs.MS},
  doi = {10.48550/arXiv.2602.22631}
}

@misc{cipollina2025hopfield,
  author = {Cipollina, Matteo and Karatarakis, Michail and Wiedijk, Freek},
  title = {Formalized Hopfield Networks and Boltzmann Machines},
  year = {2025},
  eprint = {2512.07766},
  archivePrefix = {arXiv},
  primaryClass = {cs.LG},
  doi = {10.48550/arXiv.2512.07766}
}

@article{wagner2026balancing,
  author = {Wagner, Margot and Chen, Yusi and Karuvally, Arjun and Cameron, Mia and Sejnowski, Terrence J.},
  title = {Balancing Stability and Flow in Hippocampal Networks via Inductive Bias and Learned Symmetry Breaking},
  journal = {Philosophical Transactions of the Royal Society B: Biological Sciences},
  volume = {381},
  number = {1954},
  pages = {20250260},
  year = {2026},
  doi = {10.1098/rstb.2025.0260}
}

\clearpage
\appendix
\section{Exact group-tangent transport}
\label{app:exact_proof}

\subsection{Proof of the controlled-map theorem}

\begin{proof}[Proof of \cref{thm:controlled_neutral}]
For any $\xi\in\g$, per-input equivariance gives
\[
F_u(\exp(s\xi)\cdot x)=\exp(s\xi)\cdot F_u(x).
\]
Differentiating at $s=0$ yields
\begin{equation}
 D F_u(x)A_x\xi=A_{F_u(x)}\xi.
 \label{eq:one_step_transport_app}
\end{equation}
Iterating \cref{eq:one_step_transport_app} along $x_n,\ldots,x_m$ proves
\cref{eq:exact_tangent_transport}. Equivariance implies stabilizer
monotonicity: if $g\cdot x_t=x_t$, then
$g\cdot x_{t+1}=F_{u_t}(g\cdot x_t)=F_{u_t}(x_t)=x_{t+1}$. Hence
$\ker A_{x_t}\subseteq\ker A_{x_{t+1}}$. Since every $A_x$ has rank $q$ on
$K$, the kernels have equal dimension and therefore coincide along the
trajectory.
The conclusion used below is equality of the isotropy Lie algebras
$\ker A_{x_t}$ along this trajectory. Under a proper action, full stabilizers
within one orbit-type stratum are conjugate and need not be literally equal as
embedded subgroups.

Write
\[
P_{m:n}:=D F_{u_{m-1}}(x_{m-1})\cdots D F_{u_n}(x_n).
\]
For $v\in E^G_{x_n}$, choose the unique
$\xi\in(\ker A_{x_n})^\perp$ such that $v=A_{x_n}\xi$. Kernel equality
makes $\xi\in(\ker A_{x_m})^\perp$, and exact transport gives
$P_{m:n}v=A_{x_m}\xi$. The nonzero-singular-value bounds therefore imply
\[
 \frac{c}{C}\norm v\le \norm{P_{m:n}v}\le\frac{C}{c}\norm v
\]
uniformly in $m-n$. Thus the operator norm and conorm of the restricted
cocycle are uniformly bounded, and every restricted singular-value growth rate
is squeezed between $(m-n)^{-1}\log(c/C)$ and
$(m-n)^{-1}\log(C/c)$ and tends to zero. For a prescribed nonstationary input
sequence, these uniform bounds still hold pathwise, but no stationary
Oseledets spectrum is asserted unless the required limits exist.
\end{proof}

\subsection{Stationary full-spectrum inclusion}

\begin{proof}[Proof of \cref{cor:stationary_cocycle}]
Write $Y=\Omega\times K$. If $\Theta$ is invertible modulo $\mu$, work on
$(Y,\mu,\Theta)$ itself. Otherwise let
$(\widehat Y,\widehat\mu,\widehat\Theta)$ be its natural extension, with
zeroth-coordinate projection $\pi_0$ satisfying
$\pi_0\widehat\Theta=\Theta\pi_0$ and $(\pi_0)_*\widehat\mu=\mu$. Define
$\widehat L(\widehat y)=L(\pi_0\widehat y)$ on the pulled-back tangent fibers and
$\widehat E^G_{\widehat y}=E^G_{\pi_0\widehat y}$. Then
\[
 \widehat L^{(n)}(\widehat y)=L^{(n)}(\pi_0\widehat y),
\]
so lifting changes neither forward products nor their Lyapunov exponents.
The chosen base transformation is now invertible; only the linear fiber maps
may remain noninvertible, as allowed by the semi-invertible theorem. All
statements below hold almost everywhere with respect to the relevant invariant
measure.

The rank-$q$ bundle $\widehat E^G$ is measurable by assumption. Exact
transport and constant rank give
$\widehat L(\widehat y)\widehat E_{\widehat y}^G
=\widehat E_{\widehat\Theta\widehat y}^G$, so the restriction is an invariant
linear cocycle. The pathwise bounds in \cref{thm:controlled_neutral} imply
that every nonzero restricted vector has exponent zero and that all $q$
restricted singular-value exponents are zero.

Choose a measurable complement of $\widehat E^G$. In the resulting adapted frames the
full cocycle has the block form
\[
 \begin{pmatrix}L_G&*\\0&L_Q\end{pmatrix},
\]
where $L_G$ is the restriction to $\widehat E^G$ and $L_Q$ is the induced quotient
cocycle. The integrability in \cref{eq:met_integrability} also controls these
finite-dimensional diagonal blocks. The invariant-subbundle property of the
semi-invertible multiplicative ergodic theorem, equivalently the spectrum
union property for this block-upper-triangular cocycle, places the $q$ zero
exponents of $L_G$ in the full spectrum with multiplicity
\citep{froyland2013semi}. Thus the full spectrum contains at least $q$ zeros;
the theorem does not exclude additional flow-neutral or transverse zeros.
Because the lifted and original forward products coincide under $\pi_0$, this
is also the full forward spectrum over $\mu$.
No statement about a stationary full spectrum is made for an arbitrary
prescribed control sequence.
\end{proof}

\begin{lemma}[Uniform nondegeneracy on a compact constant-rank stratum]
\label{lem:compact_non_degenerate}
Let $K$ be compact and $x\mapsto A_x$ continuous, where every
$A_x:\g\to T_xM$ has rank $q$. Then there exist $0<c\le C<\infty$ such that
all nonzero singular values of $A_x$ lie in $[c,C]$ for $x\in K$.
\end{lemma}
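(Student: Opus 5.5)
The plan is to express the nonzero singular values of $A_x$ through the eigenvalues of the Gram operator $G_x:=A_x^\top A_x$ on $\g$, and then use continuity plus compactness. Fix the inner product on $\g$ and the smooth metric on $M$ used in \cref{thm:controlled_neutral}. To treat $x\mapsto A_x$ as a continuous family of matrices, I work in local trivializations of $TM$ over finitely many charts covering $K$. I take these trivializations to be orthonormal for the chosen metric, obtained by Gram--Schmidt on coordinate frames. Singular values are intrinsic to the inner-product spaces, so they do not depend on this choice. The map $x\mapsto G_x$ is then continuous into the symmetric operators on $\g$, and $G_x$ is positive semidefinite with rank exactly $q$ for every $x\in K$.

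For the upper bound, note that the largest singular value is $\norm{A_x}$, which is a continuous function of $x$. It therefore attains a finite maximum $C$ on the compact set $K$.

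For the lower bound, order the eigenvalues of $G_x$ as $\mu_1(x)\ge\cdots\ge\mu_{\dim\g}(x)\ge0$. Eigenvalues of symmetric matrices depend continuously on the entries, for example by Weyl's inequality $|\mu_k(x)-\mu_k(y)|\le\norm{G_x-G_y}$. Hence each $\mu_k$ is continuous on $K$. Since $G_x$ has rank exactly $q$, we have $\mu_q(x)>0$ and $\mu_{q+1}(x)=0$. The smallest nonzero singular value is therefore $\sqrt{\mu_q(x)}$. This is a strictly positive continuous function on a compact set, so it attains a positive minimum $c$. All nonzero singular values then lie in $[c,C]$, and $c\le C$ holds trivially provided $K$ is nonempty and $q\ge1$. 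The degenerate case $q=0$ is vacuous.

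The main obstacle is the step that uses constant rank. Without it, $\mu_q$ could tend to zero along a sequence approaching a point of lower rank, and the smallest nonzero singular value would be only lower semicontinuous in the wrong direction. This is exactly why the hypothesis that the rank is $q$ everywhere on $K$ is needed: it turns "the $q$-th eigenvalue" into "the smallest nonzero eigenvalue" uniformly in $x$. A minor technical point is making the continuity of $x\mapsto A_x$ meaningful across charts. Because singular values are invariant under orthonormal changes of frame, I can take a finite subcover, compute the minimum and maximum on each compact piece (closed subsets of the chart domains intersected with $K$), and combine them using the minimum of the $c$'s and the maximum of the $C$'s.
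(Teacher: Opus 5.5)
Your proposal is correct and follows the paper's argument: in orthonormal local frames, constant rank makes the smallest nonzero singular value $\sqrt{\mu_q(x)}$ continuous and pointwise positive, and compactness of $K$ then gives the uniform bounds $c$ and $C$. You supply the continuity justification that the paper only asserts, namely Weyl's inequality for the Gram operator $G_x=A_x^\top A_x$ on $\g$; because $G_x$ is a continuous map into the single fixed space of symmetric operators on $\g$, you could even take the minimum and maximum directly over $K$ and skip the finite-subcover step.
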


\begin{proof}
The largest singular value and the smallest nonzero singular value are
continuous on a constant-rank family in fixed local orthonormal frames. The
former attains a finite maximum on $K$, while rank $q$ makes the latter
positive pointwise and compactness makes its minimum positive. Equivalent
local trivializations give the same conclusion because transition maps are
uniformly bounded on compact overlaps.
\end{proof}

The explicit conditioning assumption in the main text is therefore redundant
for a smooth action on a compact constant-rank stratum, but it is useful
computationally: $C/c$ and the reducing-frame condition number quantify how a
small slope in adapted coordinates can appear in the ambient Euclidean metric.

\subsection{Autonomous-flow corollary and the ordinary flow-zero mode}

\begin{corollary}[Autonomous equivariant flow]
Let $\dot x=f(x)$ be a complete $C^1$ flow $\phi_t$ satisfying
$f(g\cdot x)=D(g)_x f(x)$. Under the compact constant-rank hypotheses of
\cref{thm:controlled_neutral},
\begin{equation}
 D\phi_t(x)A_x=A_{\phi_t(x)},
 \label{eq:flow_transport_app}
\end{equation}
and every nonzero group-tangent vector has zero direct logarithmic growth rate.
Applying the stationary hypotheses of \cref{cor:stationary_cocycle} to a
fixed positive-time map, the full Oseledets spectrum contains at least $q$
zero exponents, counted with multiplicity.
\end{corollary}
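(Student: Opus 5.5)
The plan is to reduce the flow statement to \cref{thm:controlled_neutral} and \cref{cor:stationary_cocycle} by treating each time-$t$ map $\phi_t$ as an input-free controlled update. The first step is to show that infinitesimal equivariance of the vector field, $f(g\cdot x)=D(g)_xf(x)$, integrates to flow equivariance $\phi_t(g\cdot x)=g\cdot\phi_t(x)$ for all $t$ and $g\in G$. For fixed $g$, both $t\mapsto\phi_t(g\cdot x)$ and $t\mapsto g\cdot\phi_t(x)$ are integral curves of $f$ starting at $g\cdot x$. The second one is an integral curve because
\[
\tfrac{\dd}{\dd t}\,g\cdot\phi_t(x)=D(g)_{\phi_t(x)}f(\phi_t(x))=f(g\cdot\phi_t(x)).
\]
Since $f$ is $C^1$, hence locally Lipschitz, uniqueness of solutions and completeness make the two curves coincide for all $t$.

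With flow equivariance in hand, I would fix $t>0$ and substitute $g=\exp(s\xi)$. Differentiating at $s=0$ gives
\[
D\phi_t(x)A_x\xi=A_{\phi_t(x)}\xi,
\]
which is \cref{eq:flow_transport_app}; this is the same one-line computation as in the proof of \cref{thm:controlled_neutral}, now with $F_u=\phi_t$. Negative $t$ follows in the same way because $\phi_t$ is a diffeomorphism. The trajectory is assumed to lie in a compact set $K$ with $\rank A_x=q$ and nonzero singular values in $[c,C]$. Under these hypotheses the kernel-monotonicity argument applies verbatim along $\{\phi_t(x)\}$, so the isotropy algebras $\ker A_{\phi_t(x)}$ are constant along the orbit. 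Decompose $v=A_x\xi$ with $\xi\perp\ker A_x$; this gives
\[
\tfrac{c}{C}\norm v\le\norm{D\phi_t(x)v}\le\tfrac{C}{c}\norm v
\]
uniformly in $t$. Dividing the logarithms by $t$ shows that every nonzero group-tangent vector has zero direct logarithmic growth rate, in continuous time. For a discrete-time statement, restrict to integer multiples of a fixed $\tau>0$ and use the constant-input cocycle $F=\phi_\tau$.

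For the spectral claim, I would apply \cref{cor:stationary_cocycle} with a trivial (one-point) $\Omega$, or equivalently with constant control, and with $\Theta=\phi_\tau$ preserving an ergodic measure $\mu$ on $K$. Integrability \cref{eq:met_integrability} holds automatically, since $D\phi_\tau$ is continuous on the compact set $K$ and therefore bounded. Moreover $\phi_\tau$ is invertible, so the natural-extension step is unnecessary. The corollary then yields at least $q$ zero exponents, counted with multiplicity, for the time-$\tau$ cocycle. Exponents of the flow are the time-$\tau$ exponents divided by $\tau$, so the zeros persist.

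The main obstacle is the first step, integrating the infinitesimal equivariance hypothesis to group-level equivariance; it needs uniqueness of solutions, completeness, and the fact that $g$ acts by diffeomorphisms. A secondary point is that the time-translation direction $f(x)$ may or may not lie in $E^G_x$. The statement says only ``at least $q$'', and I would add a remark that this extra zero exponent is neither counted nor excluded.
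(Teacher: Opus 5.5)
Your proposal is correct and follows the paper's proof. You integrate vector-field equivariance to flow equivariance by uniqueness of integral curves, differentiate $\phi_t(\exp(s\xi)\cdot x)=\exp(s\xi)\cdot\phi_t(x)$ at $s=0$, use kernel constancy with the $[c,C]$ singular-value bounds to squeeze the direct rates to zero, and invoke \cref{cor:stationary_cocycle} for a fixed positive-time map. Your version also makes explicit several steps the paper leaves implicit: the integral-curve check, the one-point-$\Omega$ specialization, and the automatic integrability of $\log^+\norm{D\phi_\tau}$ on compact $K$.
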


\begin{proof}
Equivariance of the vector field and uniqueness of solutions imply flow
equivariance. For every $\xi\in\g$,
\[
 \phi_t(\exp(s\xi)\cdot x)=\exp(s\xi)\cdot\phi_t(x).
\]
Differentiating this identity at $s=0$ gives
\[
 D\phi_t(x)\xi_M(x)=\xi_M(\phi_t(x)),
\]
which is \cref{eq:flow_transport_app}. Restrict $\xi$ to a fixed linear
complement of $\ker A_x$; its dimension is
$q=\dim\g-\dim\ker A_x=\dim(G/H)$, and the restricted infinitesimal action is
injective. The uniform nonzero-singular-value bounds then squeeze every
direct group-tangent growth rate to zero exactly as in
\cref{thm:controlled_neutral}. The full-spectrum conclusion is the separate
MET corollary, not a consequence of this differentiation alone. This is the
standard infinitesimal equivariance identity used in equivariant dynamics
\citep{golubitsky1988,krupa1990}.
\end{proof}

For an invertible flow, the full stabilizer is preserved exactly along an
orbit: $G_x\subseteq G_{\phi_t(x)}$ follows from equivariance, and applying the
same argument to $\phi_{-t}$ gives equality. The ordinary autonomous-flow zero
mode is generated by $f(x)$. It may coincide with one group tangent at a
relative equilibrium, but it vanishes on a fixed-point continuous attractor.
Therefore, a single near-zero exponent is not sufficient evidence for a
protected memory mode. Product groups with $q>1$, direct analytical tangents,
and principal-angle alignment resolve this ambiguity.

\subsection{Why a finite symmetry group is different}
A finite group can enforce parameter sharing, block structure, and spectral
degeneracies, but its Lie algebra is zero. It therefore provides no nonzero
fundamental vector fields $\xi_M$ and does not imply a continuously
parameterized neutral tangent through \cref{eq:one_step_transport_app}. A
finite circulant grid has exact integer-roll $C_N$ equivariance but generally
fails equivariance under a generic continuous phase shift. Such grids remain
useful approximations and null controls, but they are not theorem models for
continuous memory.

\section{Additional exact-geometry diagnostics}
\label{app:exact_diagnostics}

The overview in \cref{tab:overview} summarizes the role of these additional
diagnostics. The numerical thresholded count is always
secondary to direct tangent transport and analytical subspace alignment.

\begin{table}[h]
\centering
\caption{Representative exact-symmetry diagnostics. Values are maxima over
the corresponding evaluations fixed before analysis unless otherwise stated. The learned
$S^1$ step residual is inherited from the original float32 task pipeline; the
native-coordinate perturbation analysis reloads the same checkpoints in float64
and therefore reports
roundoff-scale exact-symmetry residuals. The values reflect different
floating-point evaluation regimes.}
\label{tab:exact_diagnostics}
\begin{tabular}{@{}p{0.66\linewidth}r@{}}
\toprule
Diagnostic & Value\\
\midrule
Coupled-irrep vector-field equivariance error & $6.97\times10^{-16}$\\
Coupled-irrep direct group-tangent exponent magnitude & $5.30\times10^{-13}$\\
Coupled-irrep tangent-covariance angle & $1.48\times10^{-6}$ degrees\\
Broken coupled-irrep equivariance error & $1.29\times10^{-2}$\\
Learned $S^1$ maximum exact step-equivariance error & $3.19\times10^{-8}$\\
Learned $S^1$ mean zero-input direct tangent exponent & $-2.28\times10^{-6}$\\
Learned $S^1$ mean zero-input principal angle & $2.97\times10^{-2}$ degrees\\
\bottomrule
\end{tabular}
\end{table}

\section{Blockwise invariant-graph criterion and rate bounds}
\label{app:block_proof}

\subsection{Reducing an upper-triangular exact map}
\label{app:shear}
Let $X\oplus N$ be a finite-dimensional tangent/normal splitting and suppose
an exact local time map has derivative
\begin{equation}
 T_0=\begin{pmatrix}I&H\\0&B\end{pmatrix}.
 \label{eq:upper_triangular_app}
\end{equation}
The lower-left zero follows from invariance of the group tangent; the
upper-right block depends on the chosen complement.

\begin{lemma}[Complement-shear reduction]
\label{lem:shear}
If $I-B$ is invertible, set
\begin{equation}
 C=-H(I-B)^{-1},\qquad
 S=\begin{pmatrix}I&C\\0&I\end{pmatrix}.
 \label{eq:shear_def}
\end{equation}
Then
$S^{-1}T_0S=\diag(I,B)$. If $\norm B<1$, $I-B$ is invertible by the Neumann
series.
\end{lemma}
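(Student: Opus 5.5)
The plan is to verify the conjugation identity by direct block multiplication. The Neumann-series clause is handled separately.

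First, the shear inverse. Since $S$ is block unipotent,
\[
S^{-1}=\begin{pmatrix}I&-C\\0&I\end{pmatrix},
\]
which one confirms by multiplying the two matrices.

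Next, compute $T_0S$ blockwise:
\[
T_0S=\begin{pmatrix}I&C+H\\0&B\end{pmatrix}.
\]
Left-multiplying by $S^{-1}$ gives
\[
S^{-1}T_0S=\begin{pmatrix}I&C+H-CB\\0&B\end{pmatrix}.
\]
The only content is therefore the off-diagonal Sylvester-type equation $C(I-B)+H=0$. Substituting $C=-H(I-B)^{-1}$ gives $C(I-B)=-H$, so the upper-right block vanishes. This yields $S^{-1}T_0S=\diag(I,B)$.

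I would remark that this is the special case of the Sylvester equation $XC-CB=-H$ with $X=I$. It is uniquely solvable exactly when $\sigma(I)\cap\sigma(B)=\emptyset$, i.e.\ $1\notin\sigma(B)$, which is the stated invertibility of $I-B$.

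For the final clause, suppose $\norm B<1$. Then $\sum_{k\ge0}B^k$ converges absolutely in operator norm, and telescoping gives
\[
(I-B)\sum_{k=0}^{K}B^k=I-B^{K+1}\to I,
\]
and likewise from the other side. Hence $I-B$ is invertible with $\norm{(I-B)^{-1}}\le 1/(1-\norm B)$.

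It is worth recording the consequence for $\kappa_S$ used in the main text. We have $\norm S,\norm{S^{-1}}\le 1+\norm C$ and $\norm C\le \norm H/(1-\norm B)$, so
\[
\kappa_S\le\bigl(1+\norm H/(1-\rho)\bigr)^2.
\]

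There is no real obstacle. The only point needing care is the order of the factors in $C$. Writing $(I-B)^{-1}H$ instead would be wrong, and also dimensionally inconsistent, because $H:N\to X$ and $(I-B)^{-1}$ acts on $N$.
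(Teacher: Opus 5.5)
Your proposal is correct and follows the paper's own proof: direct block multiplication, with the only nontrivial entry being the upper-right block $H+C(I-B)$, which vanishes by the choice of $C$. Your explicit Neumann-series argument and the bound $\kappa_S\le(1+\norm H/(1-\rho))^2$ are correct additions; the paper asserts the first without proof and does not state the second.
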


\begin{proof}
Direct multiplication gives the transformed upper-right block
$H+C(I-B)=0$. The diagonal blocks and lower-left zero are unchanged.
\end{proof}

The experiment records the map duration, the normal gap, the shear reduction
residual, and $\kappa_S=\norm S\norm{S^{-1}}$. This prevents an apparently
small adapted-coordinate angle from being presented without its ambient
conditioning cost.

\begin{lemma}[Directed gap equals the sine of the largest principal angle]
\label{lem:directed_gap_principal_angle}
Let $U,V$ be equal $q$-dimensional subspaces of a finite-dimensional Euclidean
space, with $q\ge1$. If $P_V$ is the orthogonal projector onto $V$ and
$0\le\theta_1\le\cdots\le\theta_q\le\pi/2$ are their principal angles, then
\begin{equation}
 \delta(U,V):=\sup_{\substack{u\in U\\\norm u=1}}
 \operatorname{dist}(u,V)
 =\norm{P_{V^\perp}|_U}
 =\sin\theta_q.
 \label{eq:directed_gap_principal_angle}
\end{equation}
Thus $\arcsin\delta(U,V)=\theta_q$, the conventional largest principal angle.
\end{lemma}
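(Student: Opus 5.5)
The plan is to prove the three expressions equal in two steps. First we identify $\delta(U,V)$ with the operator norm of $P_{V^\perp}$ restricted to $U$. Then we compute that norm through a principal-vector (CS-type) basis.

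For the first equality, note that for any $u\in U$ the nearest point of $V$ is $P_Vu$. Hence $\operatorname{dist}(u,V)=\norm{u-P_Vu}=\norm{P_{V^\perp}u}$, using $I-P_V=P_{V^\perp}$. Taking the supremum over unit $u\in U$ is, by definition, the operator norm $\norm{P_{V^\perp}|_U}$. The unit sphere of $U$ is compact, so the supremum is attained.

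For the second equality, I would invoke the standard principal-vector construction. There are orthonormal bases $u_1,\dots,u_q$ of $U$ and $v_1,\dots,v_q$ of $V$ with $\langle u_i,v_j\rangle=\delta_{ij}\cos\theta_i$. Concretely, take an SVD of $Q_V^\top Q_U$, where $Q_U$ and $Q_V$ are orthonormal basis matrices. Its singular values are $\cos\theta_i\in[0,1]$, and this is the definition of principal angles. In these bases, $P_Vu_i=\sum_j\langle u_i,v_j\rangle v_j=\cos\theta_i\,v_i$. Therefore $P_{V^\perp}u_i=u_i-\cos\theta_i v_i$. These vectors are mutually orthogonal: for $i\ne j$, the cross terms $\langle u_i,u_j\rangle$, $\langle u_i,v_j\rangle$ and $\langle v_i,v_j\rangle$ all vanish. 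Their norms are $\norm{u_i-\cos\theta_iv_i}^2=1-2\cos^2\theta_i+\cos^2\theta_i=\sin^2\theta_i$.

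So for a unit vector $u=\sum_i\alpha_iu_i$ we get $\norm{P_{V^\perp}u}^2=\sum_i\alpha_i^2\sin^2\theta_i\le\sin^2\theta_q$, with equality at $u=u_q$. This gives $\norm{P_{V^\perp}|_U}=\sin\theta_q$. Since $\theta_q\in[0,\pi/2]$, where $\arcsin$ inverts $\sin$, the final claim $\arcsin\delta=\theta_q$ follows.

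The only potentially delicate step is the existence of the simultaneous bi-orthogonal bases. I would handle it directly via the SVD $Q_V^\top Q_U=W\Sigma R^\top$, setting $[u_i]=Q_UR$ and $[v_i]=Q_VW$. An alternative route avoids bases: it uses $\norm{P_{V^\perp}|_U}^2=1-\sigma_{\min}(Q_V^\top Q_U)^2$, which follows from $\norm{P_{V^\perp}u}^2=1-\norm{P_Vu}^2$. The hypothesis $\dim U=\dim V=q$ is what makes the smallest singular value correspond to the largest angle $\theta_q$ with $q$ angles in total.
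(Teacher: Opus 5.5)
Your proof is correct and is essentially the paper's argument. The paper also gets the first equality from $\operatorname{dist}(u,V)=\norm{(I-P_V)u}$. It then reads off $\sin^2\theta_i$ as the eigenvalues of $Q_U^\top(I-Q_VQ_V^\top)Q_U=I-(Q_V^\top Q_U)^\top(Q_V^\top Q_U)$, which is exactly the matrix your principal-vector basis $Q_UR$ diagonalizes; your ``alternative route'' is that computation verbatim.
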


\begin{proof}
The nearest point in $V$ to $u$ is $P_Vu$, so
$\operatorname{dist}(u,V)=\norm{(I-P_V)u}$, proving the first equality. Let
$Q_U,Q_V$ have orthonormal columns spanning $U,V$. By the singular-value
definition of principal angles, the singular values of $Q_V^\top Q_U$ are
$\cos\theta_i$ \citep{knyazevArgentati2002}. Moreover,
\[
 Q_U^\top(I-Q_VQ_V^\top)Q_U
 =I-(Q_V^\top Q_U)^\top(Q_V^\top Q_U).
\]
Its eigenvalues are therefore $\sin^2\theta_i$. Hence the operator norm of
$(I-P_V)Q_U$, equivalently of $P_{V^\perp}|_U$, is
$\max_i\sin\theta_i=\sin\theta_q$.
\end{proof}

\subsection{Proof of Proposition~\ref{prop:block_certificate}}

For a graph $\{(x,Lx):x\in X\}$, the image under \cref{eq:defect_blocks} is a
graph whenever the center factor is invertible, with graph transform
\begin{equation}
 \Phi(L)=\bigl(E_{NG}+(B+E_{NN})L\bigr)
 \bigl(I+E_{GG}+E_{GN}L\bigr)^{-1}.
 \label{eq:graph_transform_app}
\end{equation}
If $\norm L\le r$ and $a+br<1$, the inverse exists and
\begin{equation}
 \norm{\Phi(L)}
 \le \frac{c+(\rho+e)r}{1-a-br}.
 \label{eq:graph_bound_app}
\end{equation}
The right-hand side is at most $r$ exactly when
\begin{equation}
 br^2-(d-a-e)r+c\le0.
 \label{eq:quadratic_graph}
\end{equation}
Under $D=d-a-e>0$ and $D^2\ge4bc$, the smaller nonnegative root is
\cref{eq:rstar}; the displayed form is numerically stable when $bc$ is small.
For $b=0$, \cref{eq:quadratic_graph} reduces to $c-Dr\le0$ and $r_*=c/D$.
The graph ball is compact and convex in the finite-dimensional operator space,
and $\Phi$ is continuous on it, so Brouwer's theorem supplies at least one
fixed graph. This existence argument does not claim uniqueness.

On a fixed graph, write $J_Lx=(x,Lx)$ and
$M_L=I+E_{GG}+E_{GN}L$. The restricted map is conjugate to its
center-coordinate representation,
$T|_{\Gamma_L}=J_LM_LJ_L^{-1}$. The distance from $M_L$ to identity is at
most $s=a+br_*$, so its norm is at most $1+s$ and its conorm at least $1-s$.
In center coordinates, $k$ iterates of the same local map obey
\[
 (1-s)^k\norm x\le\norm{M_L^kx}\le(1+s)^k\norm x.
\]
Passing to the ambient graph norm introduces only the fixed norm-equivalence
factor
\[
 \kappa(J_L)^{-1}(1-s)^k\norm v
 \le \norm{T^kv}
 \le \kappa(J_L)(1+s)^k\norm v,
 \qquad v\in\Gamma_L.
\]
Because $\kappa(J_L)$ is independent of $k$, its logarithm vanishes after
division by physical time $k\tau$ and passage to the asymptotic rate. This
proves \cref{eq:exp_bound_main}. The argument concerns repeated iterations of
the same selected local linear map; it does not establish a time-varying
graph-sequence theorem.

To obtain the ambient-angle bound, a reducing-coordinate graph vector
$(x,Lx)$ lies at distance at most $\norm S\,r_*\norm x$ from $S(X,0)$, while
\[
\norm{S(x,Lx)}\ge\norm{S^{-1}}^{-1}\norm{(x,Lx)}
\ge\norm{S^{-1}}^{-1}\norm x.
\]
Taking the supremum over unit graph vectors bounds the directed gap from the
ambient graph to the exact center plane by $\min\{1,\kappa_Sr_*\}$.
The two spaces have the same dimension, so
\cref{lem:directed_gap_principal_angle} gives
$\sin\alpha\le\min\{1,\kappa_Sr_*\}$ for their largest principal angle.

\subsection{Perturbative orders and one-way leakage}
With $a=e=0$, write $D=d$ and expand the smaller root:
\begin{align}
 r_*&=\frac{2c}{d+d\sqrt{1-4bc/d^2}}
     =\frac{c}{d}+\frac{bc^2}{d^3}+O((bc)^2c/d^5),\\
 s&=br_*=\frac{bc}{d}+O(b^2c^2/d^3).
\end{align}
Thus, with $d>0$ fixed as $\epsilon\to0$, $b,c=O(\epsilon)$ imply
$s=O(\epsilon^2/d)$. If $c=0$, then $r_*=0$
and upper leakage $b$ alone leaves the exact center graph invariant. If
$b=0$, the graph tilts by at most $c/D$, but the center map remains
$I+E_{GG}$; lower leakage alone does not change the center exponent in the
constant local block model. Normal-only perturbations similarly leave the
center exponent at numerical zero, which is why bound/measured ratios are not
reported for that family.

\subsection{Local affine residual realization}
\label{app:local_realization}

\begin{lemma}[Realizing a prescribed local Jacobian defect]
\label{lem:residual_realization}
Let $F_0:\R^n\to\R^n$ be $C^1$, let $x_*$ be a selected orbit point, and let
$\Delta\in\R^{n\times n}$. There exists a compactly supported $C^1$ residual
branch $R$ such that
\[
 R(x_*)=0,\qquad DR(x_*)=\Delta.
\]
Hence $F_\epsilon=F_0+\epsilon R$ has the same selected state value and local
Jacobian $DF_0(x_*)+\epsilon\Delta$.
\end{lemma}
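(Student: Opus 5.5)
The construction is a localized affine bump. Fix a smooth cutoff $\chi:\R^n\to[0,1]$ with compact support, $\chi\equiv1$ on a neighborhood of $x_*$ (say on the ball of radius $1$ about $x_*$), and $\chi=0$ outside the ball of radius $2$. Define
\[
 R(x)=\chi(x)\,\Delta\,(x-x_*).
\]
This $R$ is $C^\infty$, hence $C^1$. Its support lies in the support of $\chi$, so it is compact.

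Check the two conditions at $x_*$ next. The value is $R(x_*)=\chi(x_*)\Delta\cdot0=0$. For the derivative, the product rule gives
\[
 DR(x)=\chi(x)\Delta+\Delta(x-x_*)\,D\chi(x).
\]
At $x_*$ the second term vanishes because $x-x_*=0$; it also vanishes because $D\chi\equiv0$ near $x_*$. Since $\chi(x_*)=1$, we get $DR(x_*)=\Delta$.

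The consequence for $F_\epsilon=F_0+\epsilon R$ then follows by linearity of evaluation and differentiation:
\[
 F_\epsilon(x_*)=F_0(x_*),\qquad DF_\epsilon(x_*)=DF_0(x_*)+\epsilon\Delta.
\]

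None of these steps is a real obstacle; the statement is essentially the existence of a bump function. The one point to handle deliberately is the intended meaning of ``residual branch'' in the paper. If the residual must be a function of $F_0$ or be realizable inside a specific network architecture, the generic bump would have to be replaced. The lemma as stated only asks for a compactly supported $C^1$ map, so I would note in a remark that compact support makes $R$ globally bounded with bounded derivative. That remark also explains why $F_\epsilon$ stays $C^1$-close to $F_0$ uniformly as $\epsilon\to0$.
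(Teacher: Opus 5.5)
Your proof is correct and is essentially the paper's argument: the paper likewise takes a bump function $\chi$ equal to one near $x_*$, sets $R(x)=\chi(x)\Delta(x-x_*)$, and reads off $R(x_*)=0$ and $DR(x_*)=\Delta$. Your explicit product-rule computation and your remark on uniform $C^1$-closeness of $F_\epsilon$ to $F_0$ just spell out details the paper leaves implicit.
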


\begin{proof}
Choose a $C^1$ bump function $\chi$ equal to one in a neighborhood of $x_*$
and define
$R(x)=\chi(x)\Delta(x-x_*)$. Then $R(x_*)=0$ and $DR(x_*)=\Delta$.
\end{proof}

This lemma justifies calling the scalar selected-map study a causal local
recurrent residual
intervention. It does not imply that an arbitrary weight perturbation has the
same block routing, nor that one local branch preserves any global symmetry or
orbit away from the selected point.

\section{Finite-horizon bounds along controlled recurrent trajectories}
\label{app:finite_horizon_candidate}

\paragraph{Scope and coherent coordinates.}
Fix $n<m$. Let $X\cong\mathbb{R}^q$ and $N\cong\mathbb{R}^p$ be fixed
Euclidean model spaces over the prescribed finite horizon, and let
$S_t:X\oplus N\rightarrow T_{x_t}M$ be invertible for every endpoint. The
same endpoint frame is used as the target frame at step $t-1$ and the source
frame at step $t$. The coordinate cocycle is
\begin{equation}
 T_t=S_{t+1}^{-1}D F_{u_t}(x_t)S_t.
 \label{eq:finite_frame_cocycle_candidate}
\end{equation}
Let $A_t:\mathfrak g\to T_{x_t}M$ be the infinitesimal-action map. Exact
per-input equivariance gives
\begin{equation}
 D F_{u_t}(x_t)A_t=A_{t+1}.
 \label{eq:finite_horizon_action_transport_candidate}
\end{equation}
Assume $\operatorname{rank}A_t=q$ along the trajectory. Then
$\ker A_t\subseteq\ker A_{t+1}$, and equality of dimensions makes these
kernels one common subspace $\mathfrak h_*\subset\mathfrak g$. Fix once and for all a
complement $V$ with $\mathfrak g=\mathfrak h_*\oplus V$ and a basis of $V$ (equivalently,
a fixed basis of the vector-space quotient $\mathfrak g/\mathfrak h_*$; no quotient Lie
algebra is asserted). Use the $q$ full-rank columns of
$A_t|_V$ at every endpoint. Passing all raw Lie-algebra columns is valid only
for a locally free action; with a nontrivial stabilizer those columns are rank
deficient. Endpoint-wise QR normalization generally inserts nonidentity
transition matrices, which cannot be silently replaced by an identity center
block. Appending orthogonal normal complements first gives
\begin{equation}
 \begin{pmatrix}I&C_t\\0&B_t\end{pmatrix}.
\end{equation}
With $R_t=\left(\begin{smallmatrix}I&K_t\\0&I\end{smallmatrix}\right)$,
direct multiplication gives
\begin{equation}
 R_{t+1}^{-1}
 \begin{pmatrix}I&C_t\\0&B_t\end{pmatrix}R_t
 =\begin{pmatrix}I&K_t+C_t-K_{t+1}B_t\\0&B_t\end{pmatrix}.
\end{equation}
Thus the terminal anchor $K_m=0$ and backward recurrence
$K_t=K_{t+1}B_t-C_t$ remove $C_t$ on the declared horizon. Explicitly,
\begin{equation}
 K_t=-\sum_{j=t}^{m-1}C_j B_{j-1}\cdots B_t,\qquad
 \norm{K_t}\le\sum_{j=t}^{m-1}\norm{C_j}
 \prod_{\ell=t}^{j-1}\rho_\ell,
 \label{eq:finite_horizon_shear_bound_candidate}
\end{equation}
where the product is the identity when $j=t$. Hence, if
$\rho_t\le\bar\rho<1$ and $\norm{C_t}\le\bar C$, then
$\norm{K_t}\le\bar C(1-\bar\rho^{m-t})/(1-\bar\rho)$; without a
uniform gap the shear may grow with the horizon. The shear leaves the action
columns unchanged but depends on the future reference sequence; the resulting
reduction is therefore offline/trajectory-conditioned unless the future
controls and reference maps are known. Its horizon and gap dependence is
retained in $S_t$ and $\kappa(S_t)$ rather than discarded.

In these reducing frames, write explicitly
\begin{equation}
 T_t=\begin{pmatrix}
 I+E_{GG,t}&E_{GN,t}\\
 E_{NG,t}&B_t+E_{NN,t}
 \end{pmatrix},
\quad
\begin{matrix}
 \norm{E_{GG,t}}\le a_t,&\norm{E_{GN,t}}\le b_t,\\
 \norm{E_{NG,t}}\le c_t,&\norm{E_{NN,t}}\le e_t,
\end{matrix}
\label{eq:finite_horizon_blocks_candidate}
\end{equation}
with $\norm{B_t}\le\rho_t<1$.

\paragraph{Proof of \cref{prop:finite_horizon_candidate}.}
For $J_Lv=(v,Lv)$, block multiplication gives
\begin{align}
 M_tv&=(I+E_{GG,t}+E_{GN,t}L_t)v,\\
 L_{t+1}&=\{E_{NG,t}+(B_t+E_{NN,t})L_t\}M_t^{-1}.
 \label{eq:finite_graph_identity_candidate}
\end{align}
If $\norm{L_t}\le r_t$, then
$\norm{M_t-I}\le a_t+b_tr_t=s_t<1$, so $M_t$ is invertible and
$\norm{M_t^{-1}}\le(1-s_t)^{-1}$. Taking norms in
\cref{eq:finite_graph_identity_candidate} gives exactly
\cref{eq:finite_horizon_recursion_candidate}; induction proves graph transport
and $\norm{L_t}\le r_t$. The same multiplication gives
$T_tJ_{L_t}=J_{L_{t+1}}M_t$.

Each $M_t:X\to X$ has conorm at least $1-s_t$ and norm at most
$1+s_t$. For the chronological center product
$P_{m:n}=M_{m-1}\cdots M_n$, supermultiplicativity of conorm and
submultiplicativity of norm give
\begin{equation}
 \prod_{t=n}^{m-1}(1-s_t)\le\sigma_{\min}(P_{m:n})
 \le\norm{P_{m:n}}\le\prod_{t=n}^{m-1}(1+s_t).
 \label{eq:finite_horizon_product_candidate}
\end{equation}
Multiplying the graph identities telescopes to
\begin{equation}
 D F_{m:n}G_n=G_mP_{m:n},\qquad G_t=S_tJ_{L_t},
\end{equation}
where $D F_{m:n}=D F_{u_{m-1}}(x_{m-1})\cdots D F_{u_n}(x_n)$.
Thus the lower ambient multiplier is bounded by
$[\sigma_{\min}(G_m)/\norm{G_n}]\prod_t(1-s_t)$ and the upper multiplier by
$[\norm{G_m}/\sigma_{\min}(G_n)]\prod_t(1+s_t)$. Directly computable endpoint
bounds are
\begin{equation}
 \sigma_{\min}(G_t)\ge\norm{S_t^{-1}}^{-1},\qquad
 \norm{G_t}\le\norm{S_t}\sqrt{1+r_t^2}.
\end{equation}
Consequently, for every nonzero $v\in X$, positive step durations $\tau_t$,
and $\Delta\tau=\sum_{t=n}^{m-1}\tau_t$,
\begin{align}
 \frac{1}{\Delta\tau}\log\!\left[
 \frac{\sigma_{\min}(G_m)}{\norm{G_n}}
 \prod_{t=n}^{m-1}(1-s_t)\right]
 &\le \frac{1}{\Delta\tau}
 \log\frac{\norm{D F_{m:n}G_nv}}{\norm{G_nv}},\\
 \frac{1}{\Delta\tau}
 \log\frac{\norm{D F_{m:n}G_nv}}{\norm{G_nv}}
 &\le \frac{1}{\Delta\tau}\log\!\left[
 \frac{\norm{G_m}}{\sigma_{\min}(G_n)}
 \prod_{t=n}^{m-1}(1+s_t)\right].
 \label{eq:finite_horizon_ambient_rates_candidate}
\end{align}
Finally, the distance from $S_t(v,L_tv)$ to the reference center plane is at
most $\norm{S_t}r_t\norm v$, whereas its norm is at least
$\sigma_{\min}(S_t)\norm v$. Taking the worst normalized distance bounds the
directed gap by $\min\{1,\kappa(S_t)r_t\}$. Since the transported graph and
reference center plane both have dimension $q$,
\cref{lem:directed_gap_principal_angle} gives
$\sin\alpha_t\le\min\{1,\kappa(S_t)r_t\}$ for the conventional largest
principal angle. Only endpoint frame factors remain because the same coherent
sequence appears on both sides of every adjacent map. This proves the
finite-horizon statement for the one specified initial graph. It does not
select that graph, prove existence or uniqueness of a bi-infinite invariant
bundle, or identify an asymptotic Lyapunov spectrum. Such a continuation would
require stronger exponential-dichotomy or normal-hyperbolicity hypotheses and
belongs to the classical persistence framework
\citep{fenichel1971,hirschPughShub1977}.

\paragraph{Formalization.}
The finite-sequence graph transport, chronological product bounds, endpoint
factors, terminal shear, and leakage estimate are machine-checked in Lean 4.
Lean proves directed-gap bounds; \cref{lem:directed_gap_principal_angle}
identifies the conventional largest principal angle in the manuscript. The
endpoint-indexed construction does not require smooth dependence on the base
point. \Cref{app:formal_correspondence} gives the complete scope and separates
these results from the general Lie-orbit and multiplicative-ergodic interfaces.

\paragraph{Constant-map and perturbative limits.}
For a repeated map, initialize the scalar recurrence at $r_*$ from
\cref{eq:rstar}. Proposition~\ref{prop:block_certificate} supplies an invariant
graph in the corresponding closed ball. Because
\[
c+(\rho+e)r_*=r_*(1-a-br_*),
\]
the scalar radius remains $r_*$, $s_t=a+br_*$, and the products reduce to
$(1-s)^k$ and $(1+s)^k$.
Starting from zero instead describes a transient forward graph.

The leakage order can be stated quantitatively. Suppose $r_n=0$,
$a_t=e_t=0$, $\rho_t\le1-d$ with $0<d\le1$, and $\epsilon\ge0$. Uniformly
in $t$, let $b_t\le B\epsilon$ and $c_t\le C\epsilon$ for constants
$B,C\ge0$. If
$4BC\epsilon^2\le d^2$, then induction with $R=2C\epsilon/d$ gives
\begin{equation}
 r_t\le R,\qquad
 1-b_tr_t\ge1-d/2>0,\qquad
 s_t=b_tr_t\le\frac{2BC\epsilon^2}{d}
 \label{eq:finite_horizon_leakage_corollary}
\end{equation}
for every step on the declared sequence. Indeed,
$C\epsilon+(1-d)R\le R(1-B\epsilon R)$ is equivalent to
$C\epsilon\le dR-B\epsilon R^2$, which follows from the stated smallness
condition. Thus bidirectional leakage is second order in the center-coordinate
rate bound under these uniform hypotheses, while direct center damage
$a_t=O(\epsilon)$ enters $s_t$ at first order. Ambient rates also retain the
endpoint chart terms in \cref{eq:finite_horizon_ambient_rates_candidate}; no
asymptotic conclusion follows without controlling those terms and the frame
sequence as the horizon grows.

\paragraph{Controlled-trajectory evaluation.}
The 32-step macro-map protocol was fixed before held-out evaluation. Eight
fresh base-model seeds, each with three nested input
trajectories, yielded 216 valid structured-residual cases with coordinate,
ambient, and angle coverage all equal to one. All 24 deliberately failed
hypothesis cases were retained and detected. Slopes were first averaged over
nested trajectories and then summarized across the eight base-model seeds:
$0.916\pm0.095$ for direct damage and $2.040\pm0.092$ for leakage (mean
$\pm$ SEM). They are consistent with the predicted first- and second-order
routes over this tested finite-horizon regime; the direct estimate is visibly
noisier. No nonlinear endpoint metric entered pilot selection. These
structured time-varying residuals test pre-rollout predictions, not a universal
optimizer, quantizer, or hardware-error distribution.

\begin{figure}[t]
\centering
\includegraphics[width=\linewidth]{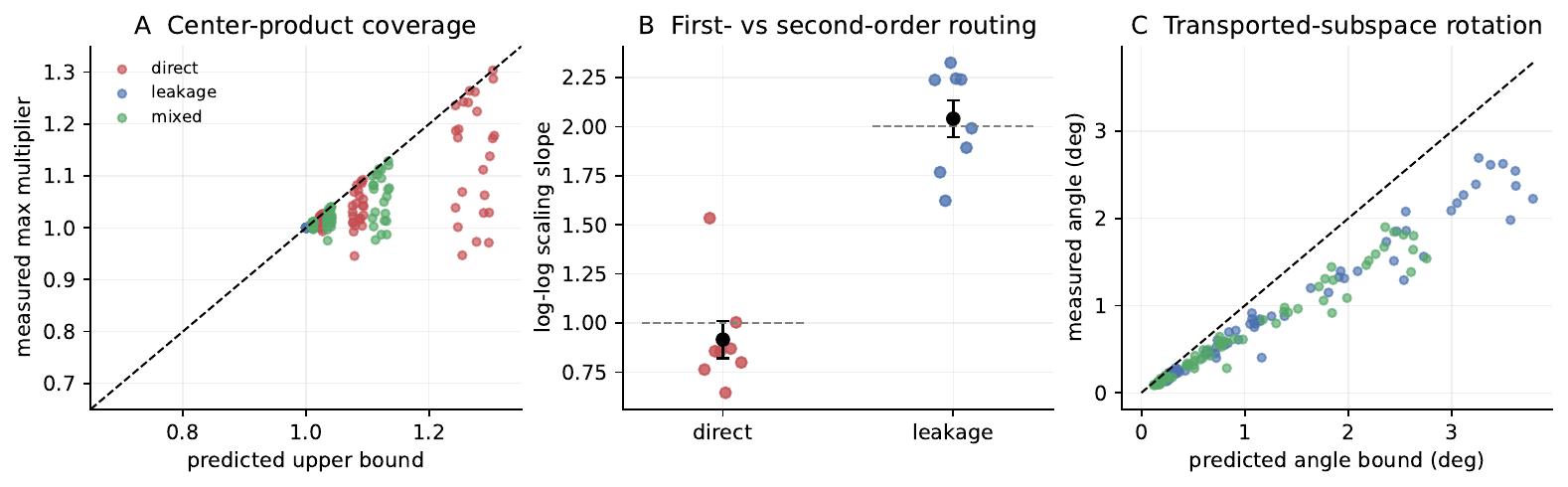}
\caption{\textbf{Finite-horizon results along controlled trajectories.}
Held-out diagnostics report predicted coordinate, ambient, and angle bounds,
retained assumption-violation controls, and route-dependent scaling.
Trajectories and timesteps are nested within eight base-model seeds. This is a
finite-horizon, offline/trajectory-conditioned test of structured residuals;
it is not an asymptotic Lyapunov-spectrum result, a bi-infinite-bundle theorem,
an online guarantee, or a universal perturbation model.}
\end{figure}

\section{Selected-map perturbation study}
\label{app:exp37}

\subsection{Models, maps, and interventions}
The experiment fixed six coupled-irrep recurrent instances and all six
learned exact $S^1$ checkpoints. For each model, an exact local time map at a
fixed orbit point is reduced using Lemma~\ref{lem:shear}. Perturbations are inserted
in the reducing blocks and transformed back to ambient coordinates. The
protected bundle is scalar ($q=1$) in every instance of this study; the
proposition itself permits general finite $q$. The intervention families are:
\begin{itemize}[leftmargin=1.5em,itemsep=0.1em]
\item direct tangent: $E_{GG}\ne0$ only;
\item bidirectional leakage: $E_{GN},E_{NG}\ne0$ with $E_{GG}=E_{NN}=0$;
\item normal only: $E_{NN}\ne0$ only;
\item mixed: all blocks active under the routing rule fixed before evaluation.
\end{itemize}
Seven scales
$\epsilon\in\{10^{-5},3\!\times\!10^{-5},10^{-4},3\!\times\!10^{-4},
10^{-3},3\!\times\!10^{-3},10^{-2}\}$ are evaluated. Directional replication
creates nested rows within a model and scale; statistical summaries therefore
first aggregate to the model level.

\begin{table}[h]
\centering
\caption{Local map and frame ranges for the scalar perturbation study.}
\label{tab:exp37_map}
\begin{tabular}{@{}lcccc@{}}
\toprule
Family & Models & Map duration $\tau$ & Normal gap $d$ & $\kappa_S$\\
\midrule
Coupled-irrep RNN & 6 & $0.16$ (five), $0.32$ (one) & $0.1206$--$0.2086$ & $4.7435$\\
Learned equivariant cell & 6 & $0.8$ & $0.1682$--$0.1741$ & $1.0000$\\
\bottomrule
\end{tabular}
\end{table}

The near-identical direct-family slopes and their tiny SEM are consequences of
the shared analytic intervention, which imposes the leading linear term. They
validate implementation of that controlled route but should not be read as
evidence that unrelated implementation defects have a universal first-order
coefficient.

\subsection{Conditions and bound coverage}
Of 1,608 total rows, 12 are zero-defect references and 1,596 are nonzero. The
sufficient conditions hold in 1,572 nonzero rows. The only failing nonzero
rows are the 24 learned mixed-direction cases at $\epsilon=10^{-2}$. A total
of 1,476 rows have informative exponent and ambient-angle bounds. The additional
uninformative rows are 48 coupled leakage and 48 coupled mixed cases at
$\epsilon=10^{-2}$. Among all rows satisfying the conditions, the observed counts of exponent,
invariant-graph-radius, and angle violations are each zero.

These are coverage counts over a structured test set, not independent
Bernoulli trials. Scaling uncertainties use the 12 model instances rather
than treating their perturbation rows as independent replicates.

\begin{table}[h]
\centering
\caption{Perturbative slopes, fit separately within each model and summarized
across six models per family.}
\label{tab:exp37_slopes}
\begin{tabular}{@{}llccc@{}}
\toprule
Model family & Defect family & Mean slope & SEM & Mean $R^2$\\
\midrule
Coupled-irrep & direct tangent & $1.000575$ & $2.5\times10^{-12}$ & $0.9999998$\\
Learned cell & direct tangent & $1.000575$ & $5.4\times10^{-13}$ & $0.9999998$\\
Coupled-irrep & bidirectional leakage & $2.003847$ & $0.000131$ & $0.9999951$\\
Learned cell & bidirectional leakage & $2.023140$ & $0.000729$ & $0.9998185$\\
\bottomrule
\end{tabular}
\end{table}

\begin{table}[h]
\centering
\caption{Sharpness of nontrivial physical-time exponent bounds. Ratios are
computed only when the measured exponent is above the numerical floor.}
\label{tab:exp37_sharpness}
\begin{tabular}{@{}llccc@{}}
\toprule
Model family & Defect family & Median bound/measured & 10th pct. & 90th pct.\\
\midrule
Coupled-irrep & direct tangent & $1.000$ & $1.000$ & $1.000$\\
Coupled-irrep & leakage & $1.468$ & $1.329$ & $1.698$\\
Coupled-irrep & mixed & $1.005$ & $1.000$ & $1.230$\\
Learned cell & direct tangent & $1.000$ & $1.000$ & $1.000$\\
Learned cell & leakage & $1.036$ & $1.022$ & $1.116$\\
Learned cell & mixed & $1.000$ & $1.000$ & $1.013$\\
\bottomrule
\end{tabular}
\end{table}

\section{Native-parameter and ambient-residual perturbations}
\label{app:exp39}

The scalar selected-map study specifies defects in reducing coordinates
to isolate causal routing. The native-coordinate perturbation analysis asks how
perturbations written in the recurrent model's own coordinates populate those
blocks. It uses all six learned exact $S^1$
checkpoints and the same zero-input orbit point and selected time maps.

\paragraph{Exact-parameterization control.}
Every trainable tensor is perturbed by independent Gaussian noise at five
relative scales from $10^{-4}$ to $10^{-2}$, with three draws per scale and
checkpoint. The per-tensor noise standard deviation is the requested scale
times the larger of that tensor's RMS value and $10^{-3}$. Because the
architecture still constructs coefficients only from invariants, these 90
weight perturbations remain exactly equivariant. The maximum sampled
step-equivariance error is $1.121\times10^{-16}$, the maximum analytical
tangent-transport residual is $1.241\times10^{-16}$, and the largest measured
protected-exponent magnitude is $8.33\times10^{-16}$. Thus ordinary parameter
error inside a hard-equivariant parameterization is not itself a symmetry
defect, although it can change transverse dynamics.

\paragraph{Ambient residual parameters.}
For each checkpoint we draw 20 unit-operator-norm matrices from each of three
families: dense Gaussian, 10\%-dense sparse Gaussian, and rank-one outer
products. At the selected state $x_*$, the local residual branch is
$R_P(x)=\chi(x)P(x-x_*)$ as in Lemma~\ref{lem:residual_realization}. Seven
scales give 2,520 nonzero cases. The residual is specified in ambient state
coordinates; only afterward is its induced selected-time-map defect projected
into $(E_{GG},E_{GN},E_{NG},E_{NN})$. The bound is therefore evaluated without
using the measured perturbed invariant mode.

Of 2,520 cases, 2,515 satisfy the conditions and 2,514 have informative
exponent and angle bounds. No case satisfying the conditions violates the exponent, graph, or
angle bound. The five failing and one additional angle-uninformative cases all
occur at $\epsilon=10^{-2}$ and are retained. The block-resolved bound is more
descriptively associated with the measured displacement than the scalar local
generator-commutator norm (log--log Pearson $r=0.834$ versus $0.533$), but
these correlations are not inferential because scales and directions are
nested within checkpoint.

\begin{table}[h]
\centering
\caption{Native-coordinate residual routing. Direct-route fractions compare
$a$ with the pre-rollout leakage proxy $bc/d$. Slopes are first summarized
over directions within checkpoint and then reported as mean $\pm$ SEM over
six checkpoints; sparse structural zeros produce a mixture of perturbative
orders rather than one universal slope.}
\label{tab:exp39}
\begin{tabular}{@{}lcccc@{}}
\toprule
Residual family & Cases & Valid & Direct-route fraction & Checkpoint slope, mean $\pm$ SEM\\
\midrule
Dense & 840 & 99.64\% & 96.90\% & $1.0028\pm0.0031$\\
Sparse & 840 & 99.76\% & 40.83\% & $1.9117\pm0.2715$\\
Rank one & 840 & 100.00\% & 100.00\% & $1.0001\pm0.0016$\\
\bottomrule
\end{tabular}
\end{table}

\begin{figure}[h]
\centering
\includegraphics[width=0.96\linewidth]{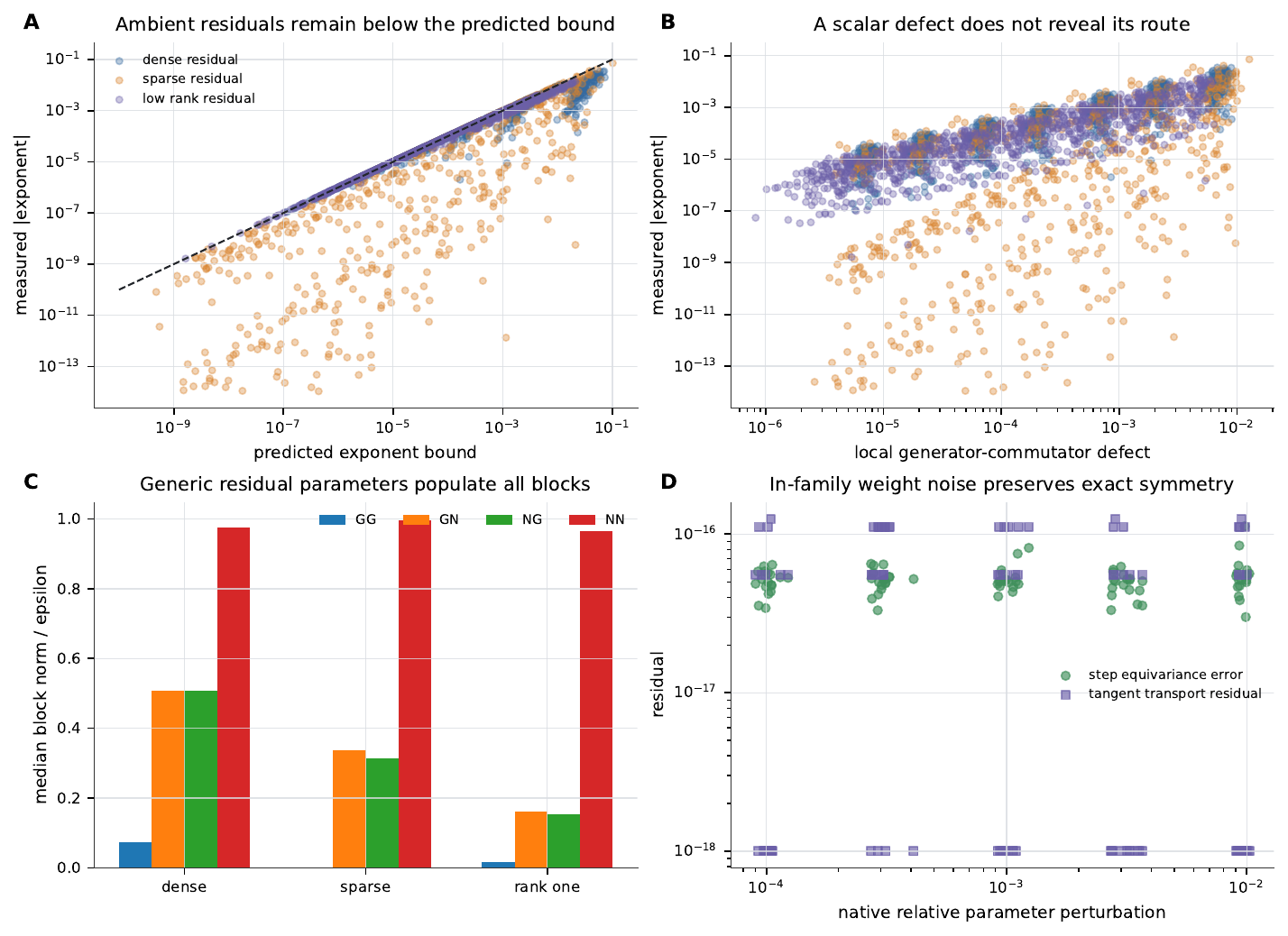}
\caption{\textbf{Parameter and implementation-defect analysis.} Ambient dense,
sparse, and low-rank residual parameters are evaluated only after their
induced block norms have been measured. Cases satisfying the conditions remain below the
predicted exponent bound; generic residuals populate multiple blocks, so a
scalar local commutator does not identify the degradation route. In contrast,
native weight noise inside the exact parameterization preserves step
equivariance and analytical tangent transport at numerical precision. This is
a local residual-family analysis, not a learned or hardware-specific defect
distribution.}
\label{fig:exp39}
\end{figure}

\section{Two-coordinate architecture-breaking rollout bridge}
\label{app:exp40}

The two-coordinate rollout bridge tests whether the general-$q$ implementation of
\cref{prop:block_certificate} remains informative for a learned $q=2$ memory
and for a defect applied to the recurrent update rather than directly to its
reducing blocks. We hold fixed the six condition-one $T^2$ checkpoints from the
prescribed-geometry study and select the memory state with phases
$(0.37,-0.41)$. For each checkpoint the exact one-step Jacobian has two
identity group tangents. Eight recurrent steps give the selected map; normal
gaps range from $0.1646$ to $0.1919$, the reducing-frame condition number is
one, and fixed-state, identity, invariance, and reduction residuals are at
numerical precision.

The broken update is
\begin{equation}
 F_{\epsilon,P}(x,u)=F_0(x,u)+\epsilon P(x-x_*),
 \label{eq:exp40_adapter}
\end{equation}
where $\norm P_2=1$ and $P$ is a channelwise gain-mismatch matrix, a rank-two
negative-semidefinite adapter, or a dense negative-semidefinite residual
adapter. The anchor keeps $x_*$ fixed, so its selected-map Jacobian is computed
exactly from $(D F_0(x_*)+\epsilon P)^8$. These ambient-coordinate adapters
break $T^2$ equivariance and generically populate several reducing blocks.
They are representative tying/residual errors, not a measured distribution
from a specific optimizer, quantizer, or device.

For a locally pinning continued tangent plane, let $\zeta_1,\zeta_2$ be the
selected-map eigenvalues for map duration $\tau$, and define the two positive
decay rates by $\gamma_j=-\tau^{-1}\log|\zeta_j|$. Order them as
$\gamma_{\rm fast}\geq\gamma_{\rm slow}>0$. The rollout bridge reports the
scalar spectral half-life
\begin{equation}
 \tau_{1/2}^{\rm geom}
 =\sqrt{\frac{\log 2}{\gamma_{\rm fast}}
              \frac{\log 2}{\gamma_{\rm slow}}},
 \label{eq:exp40_geometric_half_life}
\end{equation}
the geometric mean of the fast- and slow-mode half-lives. This scalar is a
declared summary for comparison with the median crossing time; the two modal
half-lives remain in the released table.

For that comparison, 16 probes begin at Euclidean radius $0.08$ in wrapped
two-phase coordinates around the fixed memory state. A probe's measured
half-life is the first time its radius is at most $0.04$, and the reported case
value is the median crossing time across probes; censoring is reported
separately.

\begin{table}[h]
\centering
\caption{Summary of the two-coordinate rollout study.
Coverage is evaluated only when the sufficient conditions hold.
Large-scale failures are retained.}
\label{tab:exp40}
\begin{tabular}{@{}lr@{}}
\toprule
Diagnostic & Result\\
\midrule
Fixed checkpoints / protected dimension & $6$ / $q=2$\\
Cases through $\epsilon=10^{-2}$ & $90/90$ satisfy conditions\\
Boundary cases at $\epsilon=3\times10^{-2}$ & $0/18$ satisfy conditions\\
Exponent / graph-radius / angle coverage & $100\%/100\%/100\%$\\
Bound--measured local displacement correlation & $0.9911$\\
Monotone global endpoint error units & $18/18$\\
Fully uncensored seed--adapter--scale cases & $41$ (16 probes each)\\
Primary log correlation; median ratio & $0.9996$; $1.0119$\\
Finite-median sensitivity cases & $56$ (15 partially censored)\\
Sensitivity log correlation; median ratio & $0.9788$; $1.0055$\\
\bottomrule
\end{tabular}
\end{table}

\begin{figure}[h]
\centering
\includegraphics[width=0.98\linewidth]{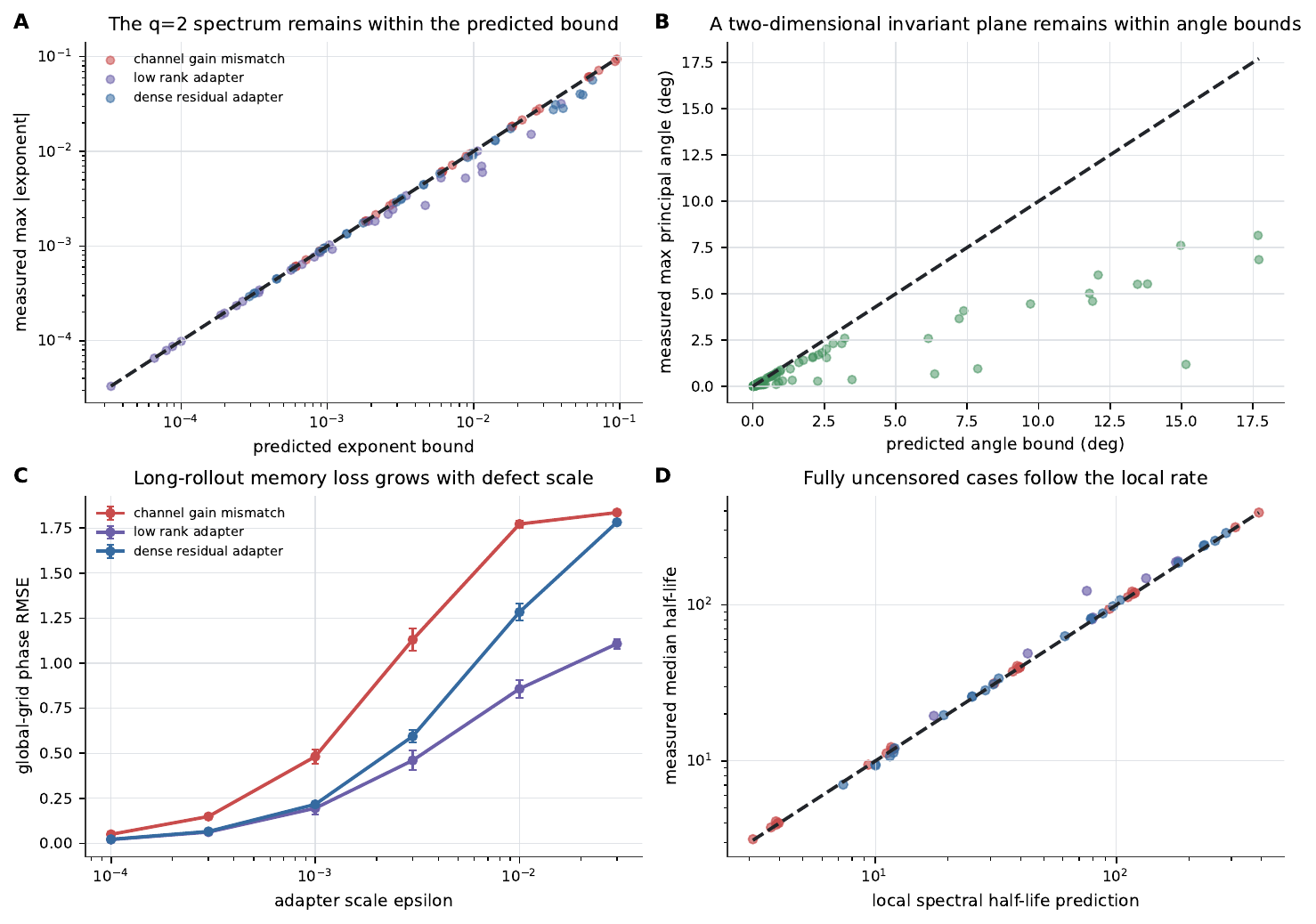}
\caption{\textbf{A $q=2$ local-to-global bridge.} (A,B) Small-defect
cases remain within the repeated-map exponent and invariant-plane angle
bounds. The 18 large-scale cases outside the effective-gap hypothesis
are absent from these two panels but retained in the released
tables. (C) Long zero-input rollouts from a global phase grid lose memory
monotonically with defect scale for every seed--adapter unit. (D) The local
two-mode spectral rate, measured after perturbation and before nonlinear
rollout, predicts median nearby pinning time for the 41 cases in which all 16
probes cross within the simulation horizon. Partially censored cases are
reported separately as a sensitivity analysis. Panels C and D are pre-rollout
bridge diagnostics, not theorem consequences for the global nonlinear map.}
\label{fig:exp40}
\end{figure}

\section{Decoder-coordinate stochastic geometry}
\label{app:stochastic_proof}

\subsection{It\^o covariance and the isotropic noise floor}
Let the ambient dynamics have local It\^o form \citep{oksendal2003}
\begin{equation}
 \dd X_t=b(X_t)\,\dd t+\Sigma(X_t)\,\dd W_t,
 \qquad Q=\Sigma\Sigma^\top.
 \label{eq:ambient_sde_app}
\end{equation}
For a $C^2$ local coordinate $\psi$, It\^o's formula gives
\[
\dd\psi(X_t)=\left[D\psi\,b+\frac{1}{2}\operatorname{Hess}(\psi):Q\right]\dd t
+D\psi\,\Sigma\,\dd W_t.
\]
The Hessian term contributes drift. The instantaneous quadratic variation is
exactly
\begin{equation}
 \dd[\psi]_t=ZQZ^\top\,\dd t,
 \qquad Z=D\psi.
 \label{eq:quadratic_variation_app}
\end{equation}

\begin{proof}[Proof of \cref{thm:decoder_floor}]
The constraint $Z\bar A=I$ implies
$Z=\bar A^\dagger+R$ with $R\bar A=0$. Since
$(\bar A^\dagger)^\top=\bar A(\bar A^\top\bar A)^{-1}$,
$R(\bar A^\dagger)^\top=0$, and therefore
\begin{equation}
 ZZ^\top=(\bar A^\top\bar A)^{-1}+RR^\top.
 \label{eq:left_inverse_decomp}
\end{equation}
Multiplication by $\sigma^2$ proves the positive-semidefinite lower bound.
The assumption $\sigma>0$ is needed for the equality characterization:
at $\sigma=0$ both covariance matrices vanish for every faithful decoder.
The inequalities then hold trivially, but equality identifies neither a unique
decoder nor an optimal action geometry.
For $\sigma>0$, equality holds exactly when $RR^\top=0$, equivalently $R=0$ and
$Z=\bar A^\dagger$.
\end{proof}

The condition $Z=\bar A^\dagger$ means the coordinate differential is the
minimum-Euclidean-norm left inverse of the action and annihilates the
orthogonal normal complement. A decoder can remain perfectly faithful along
the orbit ($Z\bar A=I$) while having additional normal sensitivity; that
sensitivity adds the positive-semidefinite term $\sigma^2RR^\top$.

\subsection{Discrete update bridge}

\begin{proof}[Proof of \cref{cor:discrete_covariance}]
Taylor expansion at $y=F(x)$ gives
\[
 \psi(y+\eta_{\Delta t})=\psi(y)+Z_y\eta_{\Delta t}+r_{\Delta t},
\]
with $\E\norm{r_{\Delta t}}^2=O(\Delta t^2)$ by assumption, whereas
$\E\norm{Z_y\eta_{\Delta t}}^2=O(\Delta t)$. Hence the covariance of the
linear term is $Z_yQ_yZ_y^\top\Delta t$, the covariance of the remainder is
$O(\Delta t^2)$, and their cross covariance is $O(\Delta t^{3/2})$ by
Cauchy--Schwarz. Centering changes only these remainder orders, proving
\cref{eq:discrete_covariance}.
\end{proof}

A bounded Hessian on the increment support together with
$\E\norm{\eta_{\Delta t}}^4=O(\Delta t^2)$ is one sufficient condition for the
remainder assumption. For Gaussian increments and a smooth decoder around a
state separated from its coordinate singularities, the same local conclusion
follows after controlling the vanishing tail outside that neighborhood.

\subsection{Anisotropic ambient noise}

\begin{proposition}[Generalized least-squares decoder]
\label{prop:gls}
Let $Q\succ0$. Among all $Z$ satisfying $Z\bar A=I$, the unique decoder minimizing
$ZQZ^\top$ in Loewner order is
\[
Z_*=(\bar A^\top Q^{-1}\bar A)^{-1}\bar A^\top Q^{-1},
\]
and the minimum covariance is
$(\bar A^\top Q^{-1}\bar A)^{-1}$.
\end{proposition}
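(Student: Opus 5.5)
The plan is to reduce to the isotropic case of \cref{thm:decoder_floor} by whitening. Since $Q\succ0$, it has a symmetric positive-definite square root $Q^{1/2}$. For any faithful $Z$, set $W=ZQ^{1/2}$ and $\tilde A=Q^{-1/2}\bar A$. Then $W\tilde A=Z\bar A=I$ and $WW^\top=ZQZ^\top$. The map $Z\mapsto ZQ^{1/2}$ is a bijection between left inverses of $\bar A$ and left inverses of $\tilde A$. Also, $\tilde A$ has full column rank because $\bar A$ does and $Q^{-1/2}$ is invertible. So minimizing $ZQZ^\top$ over faithful $Z$ in Loewner order is exactly the problem of minimizing $WW^\top$ over left inverses $W$ of $\tilde A$.

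Next, apply the decomposition \cref{eq:left_inverse_decomp} from the proof of \cref{thm:decoder_floor}, with $\bar A$ replaced by $\tilde A$ and $\sigma=1$. Every such $W$ has the form $W=\tilde A^\dagger+R$ with $R\tilde A=0$, and
\[
WW^\top=(\tilde A^\top\tilde A)^{-1}+RR^\top\succeq(\tilde A^\top\tilde A)^{-1}.
\]
Equality holds if and only if $RR^\top=0$, that is, $R=0$. Substituting $\tilde A^\top\tilde A=\bar A^\top Q^{-1}\bar A$ gives the minimum covariance $(\bar A^\top Q^{-1}\bar A)^{-1}$. Undoing the whitening, $Z_*=\tilde A^\dagger Q^{-1/2}$. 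Using $\tilde A^\dagger=(\tilde A^\top\tilde A)^{-1}\tilde A^\top$, this becomes
\[
Z_*=(\bar A^\top Q^{-1}\bar A)^{-1}\bar A^\top Q^{-1}.
\]
A direct multiplication confirms $Z_*\bar A=I$.

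Finally, uniqueness and the Loewner-minimum claim are handled together. The bound shows that $Z_*QZ_*^\top\preceq ZQZ^\top$ for every faithful $Z$, so $Z_*$ is a Loewner minimum. If another faithful $Z$ is also Loewner-minimal, then $ZQZ^\top\preceq Z_*QZ_*^\top$, which combined with the reverse inequality forces equality. By the equality case above, $R=0$, and hence $Z=Z_*$.

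The only point needing care is that Loewner order is partial, so "the minimizer" means a least element, not merely a minimal one. The whitened decomposition gives a least element directly, because the bound holds against every competitor. The remaining step is the routine check that the cross term vanishes, $R(\tilde A^\dagger)^\top=R\tilde A(\tilde A^\top\tilde A)^{-1}=0$, and this is the same argument as in \cref{thm:decoder_floor}.
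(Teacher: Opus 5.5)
Your proof is correct and follows the paper's own argument: whiten by $Q^{1/2}$, then apply the left-inverse decomposition $W=\tilde A^\dagger+R$ with $R\tilde A=0$ from the isotropic case. Your explicit handling of uniqueness, via the equality case $RR^\top=0\Rightarrow R=0$ and the least-element reading of the Loewner minimum, spells out a step the paper leaves implicit.
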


\begin{proof}
Set $Y=ZQ^{1/2}$ and $\widetilde A=Q^{-1/2}\bar A$. The constraint becomes
$Y\widetilde A=I$. Every left inverse has
$Y=\widetilde A^\dagger+R$ with $R\widetilde A=0$, so
$YY^\top=(\widetilde A^\top\widetilde A)^{-1}+RR^\top$. Transforming back
gives the stated $Z_*$ and covariance.
\end{proof}

This is the generalized least-squares solution of
\citet{aitken1936}; the new use here is as a local recurrent-memory decoder,
not as a new linear-estimation identity.

This formula clarifies the scope of the Euclidean tight-frame result. For a
fixed action $\bar A$ and structured noise $Q$, \cref{eq:gls_decoder} optimizes
the decoder only. If the action geometry is also a design variable under the
whitened-energy constraint
$\operatorname{tr}(\bar A^\top Q^{-1}\bar A)=P$, applying the fixed-energy
argument to $Q^{-1/2}\bar A$ makes the optimum isotropic in that whitened
metric. A fixed raw Euclidean energy budget is a different problem and need
not have the same optimizer.

\subsection{Proof of the fixed-energy corollary}
\begin{proof}[Proof of Corollary~\ref{cor:tight}]
Let $\mu_1,\ldots,\mu_q>0$ be the eigenvalues of
$\bar A^\top\bar A$ with
$\sum_i\mu_i=P$. By \cref{thm:decoder_floor}, every faithful decoder satisfies
$D_\psi\succeq\sigma^2(\bar A^\top\bar A)^{-1}$. Hence
\[
\lambda_{\max}(D_\psi)\ge
\frac{\sigma^2}{\min_i\mu_i}\ge\frac{\sigma^2q}{P},
\]
and the harmonic--arithmetic mean inequality gives
\[
\tr(D_\psi)\ge\sigma^2\sum_i\mu_i^{-1}
\ge\frac{\sigma^2q^2}{P}.
\]
Equality in either scalar bound requires all $\mu_i=P/q$. Equality in the
matrix lower bound additionally requires $Z=\bar A^\dagger$. Thus both bounds are
attained exactly under the two conditions stated in the corollary.
\end{proof}

When the rows of $\bar A$ are viewed as frame vectors for the memory-coordinate
space, $\bar A^\top\bar A=(P/q)I$ is the finite tight-frame condition.
Equivalently, $\bar A$ is a scaled isometric embedding from quotient
coordinates into the state
tangent space. The inequality itself is standard frame/convex algebra; the
paper's claim is the decoder-aware recurrent-memory interpretation and causal
test \citep{benedettoFickus2003,xu2023lie,xu2025grid}.

\subsection{Angle decoder for the $T^2$ experiment}
For one circular block $z=r(\cos\theta,\sin\theta)$, the group tangent is
$a=(-z_2,z_1)^\top$ and the local angle differential is
\[
D\theta=\frac{1}{r^2}(-z_2,z_1)=a^\dagger.
\]
The $T^2$ action is free on the target orbit, so the natural generator basis
gives $\bar A=A$. For two disjoint circular blocks, $\bar A$ is block diagonal
in its two columns and the blockwise angle decoder satisfies
$D\psi=\bar A^\dagger$ exactly on the target orbit. Consequently, the
prescribed-geometry study tests the equality case of
\cref{thm:decoder_floor}, not merely its lower bound.

\section{Prescribed fixed-budget representation geometry}
\label{app:exp38}

\subsection{Prescribed representation and training protocol}
For target condition number $\kappa\ge1$ and tangent energy $P=2$, the two
circle radii are
\begin{equation}
 r_{\min}=\sqrt{\frac{2}{1+\kappa^2}},\qquad
 r_{\max}=\kappa r_{\min}.
 \label{eq:radii_condition}
\end{equation}
At any target-orbit state, the two singular values of the $T^2$ action matrix
are precisely $r_{\min}$ and $r_{\max}$. The architecture enforces independent
rotations of the two state blocks and uses only invariant features in learned
coefficients. Each condition is trained on the same two-angle velocity-input
path-integration distribution, optimizer family, update budget, and
architecture size. Seed $j$ at one geometry is paired with seed $j$ at every
other geometry.

These conditions are prescribed, scale-conjugate representations with matched
noiseless input--output dynamics by construction: the angle decoder removes
the radius scaling, while training uses the same deterministic task in each
condition. This paired design isolates the effect of isotropic ambient state
noise; it is not a sample of independently discovered representation
geometries.

The experiment pairs six seed identities across six prescribed conditions,
producing 36 evaluation cells. Summaries use the six seed identities or
condition means as specified below, not 36 independent model replicates. Geometry is prescribed by the radii. Training learns task dynamics
inside that representation and does not test spontaneous tight-frame
discovery.

\begin{table}[h]
\centering
\caption{Condition-level results at test horizon 256 and speed scale 1.5.
Uncertainty is SEM over the six paired seeds.}
\label{tab:exp38_summary}
\begin{tabular}{@{}rrrrr@{}}
\toprule
$\kappa(\bar A)$ & Predicted $\lambda_{\max}$ & Long-rollout $\lambda_{\max}$ & Noisy RMSE & Clean RMSE\\
\midrule
1.0 & $0.000900$ & $0.000920\pm0.000014$ & $0.204\pm0.014$ & $0.135\pm0.021$\\
1.5 & $0.001462$ & $0.001481\pm0.000015$ & $0.225\pm0.013$ & $0.134\pm0.020$\\
2.0 & $0.002250$ & $0.002337\pm0.000047$ & $0.266\pm0.012$ & $0.136\pm0.021$\\
3.0 & $0.004500$ & $0.004490\pm0.000058$ & $0.353\pm0.010$ & $0.136\pm0.022$\\
5.0 & $0.011700$ & $0.011772\pm0.000126$ & $0.543\pm0.007$ & $0.136\pm0.020$\\
8.0 & $0.029250$ & $0.030317\pm0.000489$ & $0.843\pm0.010$ & $0.134\pm0.019$\\
\bottomrule
\end{tabular}
\end{table}

The maximum absolute tangent-energy deviation from two is
$2.519\times10^{-7}$. Maximum double-precision step-equivariance and
group-tangent covariance residuals are $1.999\times10^{-16}$ and
$2.483\times10^{-16}$. The median relative errors of the full local and
long-rollout diffusion tensors are $0.560\%$ and $3.317\%$. Pearson
correlation between predicted and long-rollout worst diffusion is $0.998912$
across all 36 cells and $0.999920$ across the six condition means; condition-
mean Spearman correlation is one.

Every one of the six seed identities has strictly increasing noisy
horizon-256 worst-coordinate RMSE. The paired endpoint difference
$\mathrm{RMSE}_{\kappa=8}-\mathrm{RMSE}_{\kappa=1}$ is
$0.638943\pm0.010973$ SEM and is positive for all six seeds. In contrast, the
range of clean condition means is $0.002875$, the seed-fixed clean slope is
statistically indistinguishable from zero in the regression specified before analysis, and the
maximum validation-loss range across geometries within a seed is
$3.492\times10^{-10}$. Decoder sensitivity grows from $1.0$ to $5.70$, as
expected from the shrinking radius of the weakest coordinate.

Noise is added after each recurrent update as
$\eta_k\sim\mathcal N(0,\sigma^2\Delta t I)$ with $\Delta t=0.1$. Therefore
\cref{cor:discrete_covariance} is the formal one-step, small-$\Delta t$ bridge.
The reported finite-step tensor fit and horizon-256 task curves additionally
test repeated nonlinear composition and should not be read as an exact
continuous-time covariance identity at $\Delta t=0.1$.

\begin{figure}[h]
\centering
\includegraphics[width=0.95\linewidth]{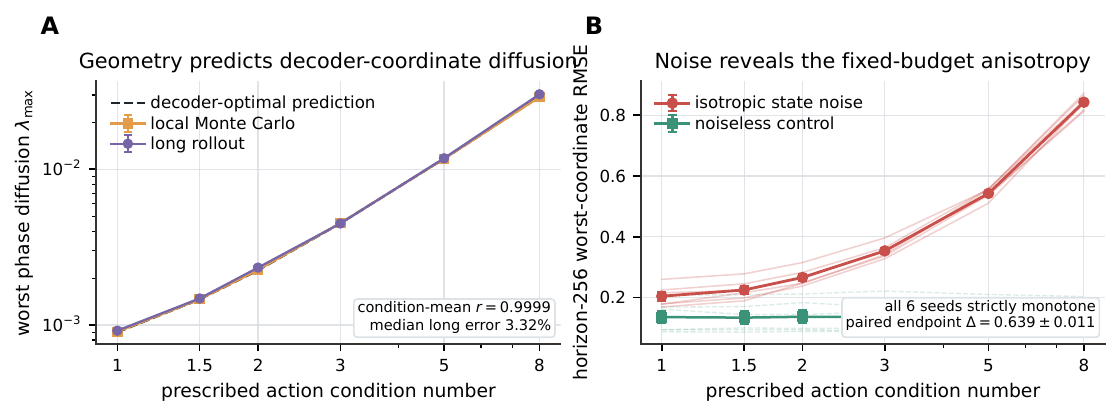}
\caption{\textbf{Prescribed-geometry calibration underlying the
learned-geometry study.}
The least-squares prediction tracks local and long-rollout worst-coordinate
diffusion across six prescribed condition numbers (left). Thin lines pair the
same six seed identities across conditions (right): every noisy curve worsens
with anisotropy while the scale-conjugate noiseless control remains nearly
flat. Geometry is prescribed here; learned selection is evaluated in the fresh
replication in \cref{app:fresh_replication}.}
\label{fig:exp38_full}
\end{figure}

\section{Theory-guided learning of action geometry}
\label{app:exp41}

\subsection{Exact parameterization and arms}
The learned-geometry study retains the prescribed-geometry recurrent cell,
blockwise angle decoder, two-angle velocity task, $P=2$ budget,
$\sigma=0.03$ ambient noise,
$\Delta t=0.1$, 180-update training budget, and horizon-256 evaluation. Only
the orbit radii become trainable. Two logits enforce
\begin{equation}
 w=\operatorname{softmax}(\alpha),\qquad
 r_i^2=2w_i,\qquad
 r_1^2+r_2^2=2
 \label{eq:learned_radii}
\end{equation}
identically up to floating-point roundoff. The group action remains independent
rotation of the two state blocks, and every learned coefficient depends only
on invariant features. Thus geometry learning preserves exact $T^2$
equivariance rather than approximating it through augmentation.

Five paired arms use the same recurrent initialization and minibatch stream
for a given seed: fixed $\kappa=8$, learnable geometry with clean loss,
learnable geometry with state noise, clean loss plus the normalized covariance
penalty
\begin{equation}
 \mathcal L_{\mathrm{geom}}
 =\tfrac12\tr\!\left[(\bar A^\top\bar A)^{-1}\right]-1,
 \label{eq:geometry_regularizer}
\end{equation}
and a fixed $\kappa=1$ oracle. All arms contain the same total number of
parameters; fixed arms simply disable gradients for the two geometry logits.
The clean scale-conjugacy test verifies that changing fixed radii does not add
deterministic task capacity.

\subsection{Hyperparameter selection and evaluation cohorts}
\label{app:geometry_selection}
The covariance-regularization coefficient was selected using a clean-only
pilot with seeds 0 and 1, 120 updates, and candidate coefficients
$\{10^{-3},3\times10^{-3},10^{-2}\}$. The initial conditioning target
$\kappa\le1.5$ was not reached and was revised to $\kappa\le2.5$ before
evaluating the original six-unit cohort. The smallest coefficient meeting
the revised target while preserving matched clean validation performance,
$10^{-3}$, was selected. No noisy test metric entered this decision.
Additional development evaluations at coefficient $3\times10^{-3}$ with
seed labels 0--5 were excluded from the original evaluation summaries.

The original evaluation used labels 6--11, which were not used for model or
hyperparameter selection. Subsequent inspection found that its arithmetic
generator-seed formulas could reuse random streams across training roles
and adjacent units. Distinct displayed labels therefore did not establish
independent random streams. We retain these observations as descriptive
historical records, do not interpret their nominal SEMs as uncertainty from
independent training replicates, and do not pool them with the fresh cohort.
The full pilot details, threshold amendment, excluded development records,
six-unit results, and structural diagnostics are retained in the
supplementary artifact; \path{historical_t2/README.md} identifies the
archived records and their interpretation.

Our principal $T^2$ learning evidence is the fresh twenty-unit replication
in \cref{app:fresh_replication}. Its protocol was specified after inspecting
the original results and before evaluating the new cohort, with the selected
coefficient and training settings held fixed. The revised generator schedule
separates random roles and training units while preserving paired
initialization and data streams across arms within each unit. The
replication's outcomes, uncertainty estimates, and clean-error tradeoffs
are reported in \cref{app:fresh_replication}.

\section{Fresh replication and retrospective bound usefulness}
\label{app:fresh_replication}

\subsection{Fixed protocol and independent training units}
After inspecting the original results, we specified the replication protocol
and recorded its code hashes before evaluating the new cohort; the protocol
was not externally preregistered. Twenty training units, labeled $10000,10100,\ldots,11900$,
each trained all five arms without selecting or replacing seeds. The
architecture, loss, coefficient $10^{-3}$, 180 updates, batch size 96, training
horizon 64, invariant hidden size 8, feature width 24, learning rates
$0.003$ for recurrent parameters and $0.02$ for geometry, $\Delta t=0.1$,
$P=2$, and $\sigma=0.03$ were held fixed. The same initialization and data
streams were shared across arms within each unit, not across units.
All 100 confirmatory runs completed with finite recorded metrics.

To prevent generator-seed reuse, overlapping arithmetic seed formulas were
replaced by explicit SHA-256-derived namespaces for study, dimension, stage,
training unit, random role, and evaluation index. A collision check verified every scheduled
generator seed. Training data, training noise, validation data, test data,
test noise, and bootstrap draws have distinct roles; pilot identities are
separate. An implementation regression mapped the substituted seeds back to
the archived values and recovered identical training curves and weights.
This is a same-distribution replication with corrected stream separation,
not a byte-identical replay of the original experiment.

The noisy task uses 1,000 trajectories per model at horizon 256 and speed
scale 1.5. For each coordinate, circular RMSE is computed across the endpoint errors of
the evaluation trajectories; the reported metric is the maximum of these
coordinatewise RMSEs.
Local covariance uses 40,000 increments per model; long covariance uses
1,500 zero-input trajectories over 300 updates, with covariance traces and
raw increments retained. All metrics were independently reconstructed from
those arrays. Trajectories are nested samples, not additional training seeds.
The primary contrast was covariance-regularized minus clean-only noisy RMSE;
all remaining contrasts are descriptive secondary analyses.

\subsection{Results and clean-error equivalence}
\begin{table}[t]
\centering
\caption{Fresh fixed-protocol replication: twenty paired training units per
arm. Values are mean $\pm$ SEM across training units. RMSE is in radians;
$\kappa$ is the action-map singular-value condition number.}
\label{tab:fresh_replication}
\begin{tabular}{@{}lrrr@{}}
\toprule
Arm & Final $\kappa$ & Noisy RMSE & Clean RMSE\\
\midrule
Fixed anisotropic & $8.000\pm0.000$ & $0.845\pm0.003$ & $0.103\pm0.005$\\
Learnable clean & $8.065\pm0.016$ & $0.852\pm0.004$ & $0.103\pm0.005$\\
Learnable noisy & $1.757\pm0.005$ & $0.232\pm0.003$ & $0.121\pm0.006$\\
Covariance regularized & $1.755\pm0.000$ & $0.229\pm0.002$ & $0.103\pm0.005$\\
Fixed isotropic oracle & $1.000\pm0.000$ & $0.179\pm0.003$ & $0.103\pm0.005$\\
\bottomrule
\end{tabular}
\end{table}

The primary paired mean is $-0.622977$ rad, with SD $0.015187$, SEM
$0.003396$, a two-sided 95\% $t_{19}$ interval $[-0.630085,-0.615869]$,
and a 10,000-resample seed-bootstrap interval $[-0.629221,-0.616291]$.
All twenty differences are negative. The covariance arm's paired clean
change is $-4.36\times10^{-8}$ rad, with interval
$[-1.28\times10^{-7},4.09\times10^{-8}]$, contained in the prespecified
$[-0.01,0.01]$ rad equivalence margin. This establishes equivalence under
that operational margin, not mathematical equality of trained predictors.

Direct noise training has paired noisy change $-0.620221$ rad but clean
change $+0.018431$ rad, with 95\% interval $[0.006266,0.030595]$.
It does not meet the clean-equivalence criterion. Thus the original
six-unit clean-error observation does not extend to this larger cohort for
the noise-trained arm. No adverse run is discarded. Covariance regularization
still trails the fixed isotropic oracle in noisy RMSE by $0.050285\pm0.001434$
rad. The original six-unit observations and the descriptive cohort comparison
remain available separately in the supplementary artifact; they are not
pooled with this replication. Full SD, SEM, paired intervals, all seed values,
and failure-sensitive analyses are provided in the artifact.

\subsection{Retrospective usefulness of resolving defect blocks}
\label{app:bound_usefulness}
We reconstruct 4,566 archived maps or finite sequences in their original
reducing frames, verifying their four block norms against the saved tables.
In those same coordinates we compute the true full operator norm
$\delta=\|E\|_2$, rather than substituting the intervention amplitude or a
$2\times2$ norm envelope. Since every block norm is at most $\delta$, a
structure-blind comparator can validly pass $a=b=c=e=\delta$ to the same
sufficient-condition algorithm. Both methods use identical normal
contraction, durations, endpoint conditioning, and specified initial graph.
This is one norm-only comparator, not the best possible norm-only theorem.

\begin{table}[h]
\centering
\caption{Retrospective bound availability, with all archived cases retained.
The denominator is the complete case set, including failed sufficient
conditions. These nested cases are not independent training seeds. Neither
method has a measured coverage violation among its valid cases.}
\label{tab:bound_usefulness}
\begin{tabular}{@{}lrrr@{}}
\toprule
Archived study & Cases & Block-resolved valid & Norm-only valid\\
\midrule
Designed routing & 1,608 & 1,584 & 1,404\\
Ambient residuals & 2,520 & 2,515 & 2,160\\
Native residual coordinates & 90 & 90 & 90\\
Learned $T^2$ adapters & 108 & 90 & 72\\
Finite-horizon sequences & 240 & 216 & 216\\
\bottomrule
\end{tabular}
\end{table}

\begin{figure}[t]
\centering
\includegraphics[width=\linewidth]{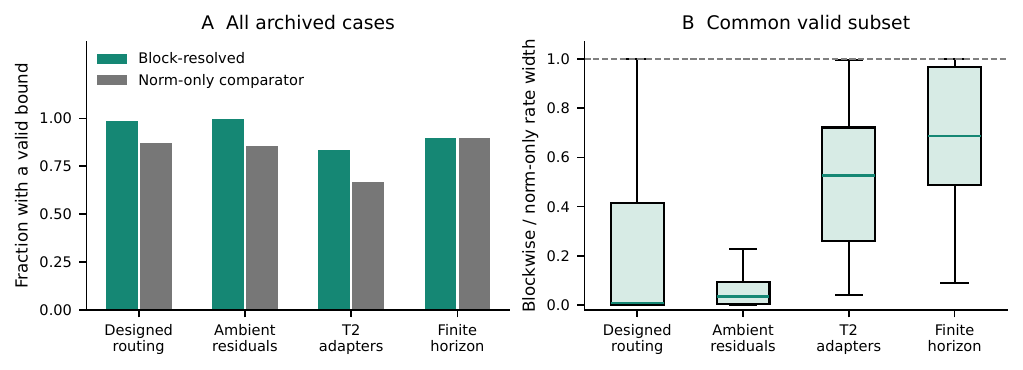}
\caption{\textbf{Retrospective usefulness, not prospective forecasting.}
(A) Sufficient-bound availability over all cases in four archived studies.
(B) Rate-interval width ratio on the common valid subset; boxes show quartiles
and whiskers 1.5 interquartile ranges, with outlier markers omitted only from
the visualization. All cases remain in the tables. The native-coordinate
control is reported in \cref{tab:bound_usefulness}. Availability improves in
three displayed local studies but not for the finite-horizon sequences;
sharper widths and higher availability are distinct benefits.}
\label{fig:bound_usefulness}
\end{figure}

Invalid bounds remain unavailable, never zero. Sharpness ratios are computed
with all zero-defect controls retained: the 1,584 valid designed-routing cases
include 12 zero-defect cases, whereas 1,572 counts only nonzero defects.
Ratios are computed
only on the common valid subset and above the original numerical floors and
duration-adjusted coverage tolerance. The finite-horizon comparison retains
all 24 out-of-regime controls and does not show an availability advantage. Analytic
three-dimensional controls with identical block norms give positive, negative,
or zero center displacement according to leakage orientation; a neutral
control can also fail the sufficient criterion. Norm information therefore
does not identify the sign of displacement, and failure of the sufficient
criterion is not evidence of instability. This retrospective analysis neither
extends the theorem's scope nor provides an online guarantee.

\subsection{Noise-dependent geometry in coupled torus integrators}
\label{app:noise_geometry}
\label{app:geometry_stop}

\paragraph{Fixed-Euclidean-budget specialization.}
For orthogonal charge-one planes, write $p_i=r_i^2>0$ and fix
$\sum_i p_i=P>0$. The decoder is the local blockwise angle map, not a learned
or freely optimized decoder. For fixed ambient block covariance
$Q_i=\sigma^2v_iI_2$, with $v_i>0$ and $\sigma>0$, its differential has norm
$1/r_i$ on plane $i$, so \cref{eq:decoder_covariance} gives
\[
 D_\psi=\diag(\sigma^2v_i/p_i),\qquad
 \tr D_\psi=\sigma^2\sum_i v_i/p_i.
\]
Cauchy--Schwarz applied to $\sqrt{v_i/p_i}$ and $\sqrt{p_i}$ gives
\[
 \left(\sum_i\sqrt{v_i}\right)^2
 \leq \left(\sum_i v_i/p_i\right)\left(\sum_i p_i\right).
\]
Equality holds exactly when $p_i$ is proportional to $\sqrt{v_i}$.
Consequently the unique positive trace-minimizing allocation is
\[
 p_i^*=\frac{P\sqrt{v_i}}{\sum_j\sqrt{v_j}},\qquad
 R_v(p)=\frac{P}{(\sum_j\sqrt{v_j})^2}\sum_i\frac{v_i}{p_i}-1\geq0.
\]
It is squared radius, not radius, that scales as the square root of the
variance multiplier; radius scales as $v_i^{1/4}$. For the different objective
$\min\max_i\sigma^2v_i/p_i$, if $M=\max_i v_i/p_i$, then
$P\geq\sum_i v_i/M$, with equality exactly at
$p_i=P v_i/\sum_jv_j$. Equal $v_i$ recover $p_i=P/q$; unequal $v_i$ generally
select anisotropic Euclidean geometry. This elementary specialization does
not optimize arbitrary full covariance, a global integrable generalized
least-squares decoder, or long-time nonlinear task loss. The Euclidean budget
here differs from a noise-whitened energy constraint. Invariant hidden-state
noise has zero instantaneous derivative under angle decoding but can affect
later recurrent dynamics. Numerical checks of $R_v$ and its logit gradient
agree with the closed form and central differences. Formalization scope is
summarized in \cref{app:formal_correspondence}.

\paragraph{Coupled equivariant architecture.}
The state consists of $q$ charge-one planes $z_i\in\R^2$ and invariant
$h\in\R^{32}$; $T^q$ rotates each $z_i$ independently. Squared target radii
are $p_i=q\,\mathrm{softmax}(\alpha)_i$, and
$e_i=\|z_i\|^2/p_i-1$. A shared 64-wide tanh network maps the invariants
$(e,h,u,u^2)$ to scalar coefficients $c_i$, angular coefficients $b_i$, and
hidden coefficients $g$. With $K=I+0.2\mathbf{1}\mathbf{1}^{\top}/q$ and
learned positive gains $a_i,\omega_i,\gamma$, the field is
\begin{align*}
 \dot z_i&=\{-a_i(Ke)_i+0.04e_i\tanh(c_i)\}z_i
          +u_i(\omega_i+0.04b_i)Jz_i,\\
 \dot h&=-\gamma h+0.04\Big(\sum_i e_i+\sum_i u_i^2\Big)g.
\end{align*}
Every coefficient is invariant, so the field and Euler update are exactly
equivariant in real arithmetic. The zero-input target torus with $h=0$ is
fixed and has $q$ independent group tangents; it has no additional nonzero
flow vector. The vector-output loss uses $z_i/(\|z_i\|+10^{-8})$; this
readout is not a normalization of the hidden dynamics.

\paragraph{Pilot development.}
The first unbounded-radial implementation had two clean and two noisy
engineering pilots. The noisy $T^4/T^8$ pilots failed with nonfinite
objectives at updates 160/188. These failures stopped confirmation; their
records remain archived. A single documented amendment, termed BR1 in the
artifact, replaced $0.04e_ic_i$ by $0.04e_i\tanh(c_i)$, with no other executable
model change. The scalar correction is bounded by $0.04|e_i|$ and its vector
contribution by $0.04|e_i|\|z_i\|$; neither is a global bound on states or a
stability theorem for Euler under unbounded Gaussian noise. The same four
pilot identities were replayed, not replaced by a new independent cohort.
All four met the original engineering criteria. The amended model and
evaluation protocol were then fixed before confirmation; this was not
external preregistration. The initial failures and same-identity replays
remain available in the supplementary records.

\paragraph{Protocol and pairing.}
There are 192 confirmation runs: six arms times sixteen independent paired
training units in each of $q=4,8$. The four pilots are separate. Arms are fixed
anisotropic geometry, clean geometry learning, noise-trained geometry,
anisotropic-covariance regularization, isotropic regularization, and a fixed
analytical trace oracle. The subtraction ``anisotropic-covariance minus
isotropic regularization'' is held fixed when switching evaluation noise;
the labels do not imply that one arm matches every noise condition.
Initial nongeometry weights and role-specific random streams are paired
across arms; SHA-256-derived actual generator seeds separate dimensions,
stages, units, and roles. There is no pooling with the earlier $T^2$ cohorts.

All runs use 360 AdamW updates, horizon 64, batch 96, hidden size 32,
feature width 64, $\Delta t=0.1$, gradient clipping at 1, base learning rate
0.003 and geometry learning rate 0.02. Base weight decay is $10^{-5}$;
geometry weight decay is zero. The regularizer weight is 0.001. Clean
vector-output MSE trains all arms except noise training, which injects the
declared state noise. Final weights at update 360 are evaluated without
best-checkpoint selection. Total parameter counts are 5,493 for $T^4$ and
6,793 for $T^8$; the fixed arms freeze $q$ geometry parameters. Initial action
condition number is 8. Gaussian, piecewise-constant, and correlated velocities
use training speed scale 0.8 and uniform initial phases.

Noise has $\sigma=0.03$ and a seed-permuted geometric profile from 1 to 64,
normalized to mean one. The ambient covariance is fixed independently of
learned radii: $\sigma^2\diag(v_1I_2,\ldots,v_qI_2,I_{32})$.
Isotropic evaluation replaces $v$ by ones, with unchanged total covariance
trace. The energy budget $P=q$ is fixed within each dimension; it holds
per-coordinate energy, not total energy across dimensions, constant.
The primary endpoint is mean squared wrapped endpoint phase error over
1,000 trajectories and $q$ coordinates at horizon 256, physical time 25.6,
and speed 1.5. It is MSE in rad$^2$, not time-averaged RMSE or
worst-coordinate RMSE. Horizons 64/128 and all other contrasts are secondary.
Inference uses sixteen paired training units, not trajectories as replicates.
Both paired $t$ intervals and 10,000 paired-unit percentile-bootstrap
intervals are reported, without multiplicity-controlled claims for secondary
comparisons. No failed or unfavorable confirmation run was dropped.

\begin{table}[t]
\centering
\caption{Absolute endpoint phase MSE in rad$^2$ at horizon 256, speed 1.5.
Mean $\pm$ SEM across sixteen training units per arm and dimension; clean,
anisotropic-noise, and isotropic-noise evaluations use the same final cells.}
\label{tab:br1_absolute}
\small\setlength{\tabcolsep}{3pt}
\begin{tabular}{@{}rlrrr@{}}
\toprule
$q$ & Arm & Clean & Anisotropic & Isotropic\\
\midrule
4 & Fixed anisotropic & $0.002382\pm0.000031$ & $0.134828\pm0.030448$ & $0.157644\pm0.001403$\\
4 & Clean learning & $0.002382\pm0.000031$ & $0.134851\pm0.030542$ & $0.157763\pm0.001327$\\
4 & Noise training & $0.002724\pm0.000069$ & $0.017314\pm0.000467$ & $0.038315\pm0.001035$\\
4 & Anisotropic-cov. loss & $0.002382\pm0.000031$ & $0.017494\pm0.000461$ & $0.038390\pm0.001053$\\
4 & Isotropic loss & $0.002382\pm0.000031$ & $0.023600\pm0.000279$ & $0.024161\pm0.000101$\\
4 & Trace oracle & $0.002382\pm0.000031$ & $0.016646\pm0.000089$ & $0.040753\pm0.000231$\\
\midrule
8 & Fixed anisotropic & $0.002561\pm0.000032$ & $0.106317\pm0.014793$ & $0.109176\pm0.000602$\\
8 & Clean learning & $0.002561\pm0.000032$ & $0.106292\pm0.014806$ & $0.109130\pm0.000572$\\
8 & Noise training & $0.002822\pm0.000040$ & $0.017743\pm0.000124$ & $0.033753\pm0.000559$\\
8 & Anisotropic-cov. loss & $0.002561\pm0.000032$ & $0.018047\pm0.000123$ & $0.033974\pm0.000584$\\
8 & Isotropic loss & $0.002561\pm0.000032$ & $0.024234\pm0.000165$ & $0.024160\pm0.000106$\\
8 & Trace oracle & $0.002561\pm0.000032$ & $0.017767\pm0.000091$ & $0.036163\pm0.000151$\\
\bottomrule
\end{tabular}

\end{table}

\begin{table}[t]
\centering
\caption{Paired endpoint MSE differences in rad$^2$. A means anisotropic
evaluation noise and I isotropic noise; the covariance-minus-isotropic arm
subtraction never changes. Only the starred row is primary. Other rows are
descriptive and the two interval procedures are retained even when they differ.}
\label{tab:br1_contrasts}
\small\setlength{\tabcolsep}{3pt}
\begin{tabular}{@{}rllrrr@{}}
\toprule
$q$ & Contrast & Noise & Mean & 95\% $t$ interval & 95\% bootstrap\\
\midrule
4 & Cov.--isotropic$^*$ & A & $-0.006106$ & $[-0.006674,-0.005538]$ & $[-0.006567,-0.005563]$\\
8 & Cov.--isotropic & A & $-0.006188$ & $[-0.006335,-0.006040]$ & $[-0.006323,-0.006061]$\\
4 & Cov.--isotropic & I & $+0.014229$ & $[+0.011963,+0.016496]$ & $[+0.011913,+0.015956]$\\
8 & Cov.--isotropic & I & $+0.009814$ & $[+0.008736,+0.010891]$ & $[+0.008847,+0.010767]$\\
4 & Cov.--oracle & A & $+0.000848$ & $[-0.000074,+0.001771]$ & $[+0.000076,+0.001747]$\\
8 & Cov.--oracle & A & $+0.000280$ & $[+0.000091,+0.000469]$ & $[+0.000125,+0.000460]$\\
\bottomrule
\end{tabular}

\end{table}

\paragraph{Task results and control comparisons.}
\Cref{tab:br1_absolute,tab:br1_contrasts} give absolute means and paired
differences. The $T^4$ primary advantage and corresponding $T^8$ secondary
advantage reverse under isotropic evaluation noise. All 36 declared clean
per-coordinate contrast intervals lie inside $\pm0.01$ rad, a conditional
equivalence criterion for these contrasts, not evidence that every training
objective is cost-free. In particular the earlier $T^2$ noise-training cost
$+0.018431$ rad, interval $[0.006266,0.030595]$, remains an adverse result.
The $T^4$ covariance-minus-oracle anisotropic-noise difference is
$+0.000848168$ rad$^2$: its $t$ interval crosses zero while its bootstrap
interval does not. Neither demonstrates equivalence. The $T^8$ difference
$+0.000280051$ has both intervals above zero. The local trace oracle is not
claimed optimal for this nonlinear finite-horizon task.

Geometry swaps change only the final clean-trained cell's geometry logits;
nongeometry weights, inputs, and noise streams are identical within each
comparison. Swapping to trace-optimal geometry lowers anisotropic-noise MSE
in all sixteen units per dimension. Swapping to isotropic geometry improves
the mean but worsens one $T^4$ unit under anisotropic noise (15/16 improve).
Clean MSE changes are numerical nulls: absolute paired means are at most
$2.1\times10^{-10}$ rad$^2$. Deterministic phase conjugacy is checked
separately; the $10^{-8}$ output-normalization offset permits a small
radius-dependent vector magnitude difference, so vector readouts are not
claimed algebraically identical. All swap pairs and both interval estimates
are included, not just favorable mean effects.

\paragraph{Local covariance and finite-step sensitivity.}
Local covariance uses 40,000 increments at each of three phase points with
$\Delta t=5\times10^{-5}$; full centered covariance matrices are divided by
physical time. Long zero-input covariance uses 1,500 trajectories, 300 steps
at $\Delta t=0.1$, unwrapped phase, and a full-matrix covariance-versus-time
linear fit. The local decoder prediction is not itself a long-time theorem.
Across cells, three-phase-averaged local relative Frobenius error ranges from
0.27 to 1.50 percent; long-rollout error ranges from 1.14 to 12.43 percent.
No range is truncated in \cref{fig:br1_calibration}.
Confirmation long rollouts record 7,325 branch-wrap events, two phase jumps
larger than $\pi/2$, no radii below $10^{-3}$, and no nonfinite state values.
These observed event counts are not a proof of global stochastic stability.

\begin{figure}[t]
\centering
\includegraphics[width=\linewidth]{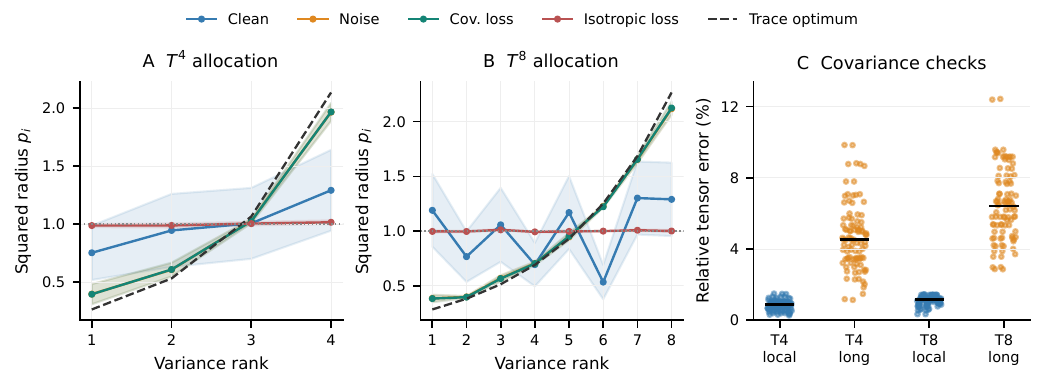}
\caption{\textbf{Noise allocation and covariance calibration.}
(A/B) Learned squared radii versus the variance-rank coordinate, aligned
within each seed before averaging; shaded bands are SEM across sixteen
units, with the analytic trace allocation and isotropic allocation shown.
(C) Full relative Frobenius-error ranges for local and long covariance
estimates; dots are all 192 confirmation cells, not independent covariance
replicates. Finite-step long-rollout agreement is a conditional check, not
an asymptotic stochastic theorem.}
\label{fig:br1_calibration}
\end{figure}

Six fixed cases were also evaluated at $\Delta t=0.05$ for 600 steps, keeping
physical duration 30. \Cref{tab:br1_stepsize} retains all six outcomes:
tensor error is larger in each half-step evaluation. These runs use separate
noise streams, not coupled Brownian refinements; two step sizes therefore do
not establish discretization convergence. Relative trace changes range from
$-1.583\%$ to $+1.606\%$.
\begin{table}[h]
\centering
\caption{All fixed step-size sensitivity cases. Errors are relative
Frobenius percentages; trace change is half-step minus nominal, divided by
nominal. Each row uses the first confirmation unit and independent noise
streams at the two step sizes.}
\label{tab:br1_stepsize}
\begin{tabular}{@{}rlrrr@{}}
\toprule
$q$ & Arm & Error at $0.1$ & Error at $0.05$ & Trace change (\%)\\
\midrule
4 & Fixed anisotropic & $4.631$ & $5.665$ & $-0.602$\\
4 & Trace oracle & $5.083$ & $6.680$ & $-1.583$\\
4 & Anisotropic-cov. loss & $5.083$ & $6.680$ & $-1.583$\\
8 & Fixed anisotropic & $5.802$ & $7.558$ & $+1.606$\\
8 & Trace oracle & $6.680$ & $8.766$ & $-0.549$\\
8 & Anisotropic-cov. loss & $6.685$ & $8.761$ & $-0.502$\\
\bottomrule
\end{tabular}

\end{table}

\paragraph{Structural checks and corrected operator norms.}
Final-checkpoint loading, parameter counts, invariant covariance construction,
and stored normal-block reconstruction are rechecked for all 196 cells.
Maximum confirmation float64 step-equivariance and direct tangent-transport
errors are below $2.34\times10^{-16}$ and $3.15\times10^{-16}$, respectively; action singular
values remain positive in the sampled checks. These are real-arithmetic
architecture identities tested numerically, not finite-precision exactness.

An installed-runtime discrepancy affected the original descriptive normal
operator-norm field in 95 of 196 records (92 confirmation, three pilots).
The saved matrices are unchanged. NumPy SVD, the square root of the largest
Gram eigenvalue, and the explicit maximum of PyTorch singular values agree.
A minimal example has original value 0.6650574 but corrected norm 1.0795389;
the standalone fixture records shape, strides, dtype, software, and exact
call. The cause is unresolved, and no claim about other installations is made.
The canonical correction table is joined by stage, dimension, unit, and arm;
raw scalar reports are retained for transparency. Corrected operator norms
are at least one in 153/196 cells, including 149/192 confirmation cells.

All recomputed normal spectral radii remain below one, as required by the
original engineering check. Spectral radius below one is not one-step
Euclidean contraction, nor does it establish contraction of arbitrary
time-varying products or global nonlinear noisy stability. Therefore these
checks do not instantiate the normal-contraction hypotheses of the
deterministic perturbation propositions. Source-use tracing confirms that
this descriptive field did not affect training, final-weight selection,
primary task endpoints, or covariance estimates. The separate deterministic
studies use NumPy operator norms and retain their own normal-contraction
requirements. These stochastic results do not extend the deterministic
propositions or the Lean formalization.

\section{Additional related work}
\label{app:related_extended}
Recent work studies learned manifolds, approximate persistence, symmetry
regularization, and infinite-horizon approximation of normally hyperbolic
continuous attractors
\citep{darshanRivkind2022,sagodi2024back,liang2025symreg,sagodiPark2026uap}.
Connectivity roughness can induce drift toward discrete minima, while
constrained optimization can stabilize multiple embedded attractor manifolds
\citep{agmon2023multiple}. Robust memory manifolds can also arise without exact
symmetry \citep{can2025robust}.
Cotteret et al. analyze a complementary stability--resolution tradeoff under
noise and heterogeneity for distributed grid-cell-like codes; our stochastic
result instead conditions decoded covariance on a fixed action, decoder,
metric, and energy budget \citep{cotteret2025spatial}.
Representation-theoretic treatments broaden equivariant architectures
\citep{kondorTrivedi2018}, while approximate variants relax hard constraints
\citep{wang2022approximately}. Our bounds instead condition local memory
degradation on four measured tangent/normal blocks and give no global
guarantee from a scalar defect.
Recurrent Equivariant Constraint Modulation learns layerwise relaxation from
data-dependent symmetry gaps \citep{pertigkiozoglou2026recm}. Our setting is
complementary: exact per-input equivariance protects the reference memory
bundle, while explicit off-equivariant defects are diagnosed through their
tangent/normal blocks rather than represented by learned relaxation levels.
Recent hippocampal RNN work reports a learned mixed-symmetry regime in which
dominant symmetric recurrence stabilizes an attractor and a weaker
antisymmetric component generates directed flow \citep{wagner2026balancing}.
This is complementary to our setting: Wagner et al. study learned motion along
an attractor, whereas our bounds condition degradation of a protected memory
bundle on measured tangent/normal perturbation blocks.

Certified equivariant world models bound different objects: rollout error
across symmetry configurations, Lyapunov-limited horizons, and conformally
calibrated orbit-valid trust regions \citep{wang2026certified,wang2026conformal}.
Our analysis concerns continuation and rotation of the protected
group-tangent bundle itself. Numerical Lyapunov algorithms
\citep{oseledets1968,benettin1980Theory,benettin1980Numerical} and covariant
subspaces \citep{ginelli2007} diagnose asymptotic growth; Jacobian
regularization tunes propagation directly \citep{engelken2023gradientflossing}.
The present block decomposition instead identifies where a measured defect acts.
Classical invariant-subspace perturbation theory controls subspace rotation
using spectral separation or residual size
\citep{davisKahan1970,stewart1973subspaces}. Our result is complementary: it
resolves four tangent/normal routes for a generally nonnormal recurrent map and
converts them into memory-rate bounds.

Noise projection near cycles and tori
\citep{maclaurin2023phase,ahsan2025covariance} and drift--diffusion matching
\citep{nartallo2026driftdiffusion} are complementary forward problems. We ask
the inverse local design question: for a fixed action, decoder, and ambient
noise, which decoded covariance is unavoidable? The answer is conditional on
those choices and does not assert a universal representation optimum.

Input-driven continuous-attractor RNNs also use invariant manifolds for
representation transfer and domain adaptation \citep{xu2026transfer}. Their
goal differs from ours: we ask when exact per-input equivariance transports
recurrent memory tangents and how measured defects and state noise act on the
decoded coordinates. Temporal-scaling equivariant circuits encode a group
parameter through a control input to generate sequences at different speeds
\citep{zuo2023temporal}; our controlled-cocycle result instead concerns pathwise
tangent transport and finite-horizon degradation.

Formal verification of neural networks is developing rapidly. TorchLean gives
neural-network definitions, execution, finite-precision semantics, and
verification a shared Lean semantics \citep{george2026torchlean}. Our
development instead machine-checks selected dynamical bounds and
architecture-level equivariance, while correspondence to the released PyTorch
checkpoints is tested executably rather than formalized as PyTorch semantics.
Formalized Hopfield and Boltzmann models provide a complementary Lean treatment
of classical recurrent-memory convergence and stochastic ergodicity
\citep{cipollina2025hopfield}.

\section{Learned-task and baseline scope checks}
\label{app:task_scope}
The earlier $S^1$ learned experiment grounds the framework in a recognizable
continuous-memory task while remaining application evidence rather than
theorem proof. The exact cell has state $(z,h)$ with
$z\in\R^2$ transforming by rotation and $h$ invariant. Its vector field has
form
\[
\dot z=a(\norm z^2,h,u)z+b(\norm z^2,h,u)Jz,
\qquad
\dot h=g(\norm z^2,h,u),
\]
which is equivariant for each scalar invariant control $u$. The zero-input
restriction is covered subject to the autonomous theorem's hypotheses, and the per-input theorem in
\cref{thm:controlled_neutral} now also gives exact tangent transport for
arbitrary input sequences as long as the orbit rank remains fixed. Task error
is still consequence evidence, because good decoding and transverse stability
are additional requirements.

The matched scope checks use GRU \citep{cho2014gru}, LSTM
\citep{hochreiter1997}, and orthogonal or unitary recurrent baselines
\citep{henaff2016,arjovsky2016}. They provide task-level context under the
reported budgets rather than a claim that generic gated or norm-preserving
recurrent networks cannot integrate.

The archived comparison uses training horizon 64, batch size 64, AdamW and
gradient clipping at 1. The generic-baseline reruns use learning rate 0.002;
the inherited equivariant reference uses 0.003. Thus these are controlled
budget comparisons, not identical-hyperparameter retraining of all models.
The per-seed records, configurations and table-generation command are included
under \path{empirical_provenance/baseline_scope/}.

\begin{table}[h]
\centering
\caption{Original and stronger generic-baseline check on the horizon-256,
speed-1.8 full-phase slice. The stronger baselines use hidden size 32 and 500
optimization steps. Entries are mean $\pm$ SEM over three seeds per baseline
configuration and six seeds for the equivariant reference.}
\label{tab:baseline_scope}
\begin{tabular}{@{}lcc@{}}
\toprule
Configuration & Full-phase circular RMSE & Restricted-phase RMSE\\
\midrule
Equivariant reference & $0.2554\pm0.016$ & --\\
GRU, original budget & $1.748\pm0.023$ & $1.778\pm0.008$\\
GRU, stronger budget & $1.212\pm0.026$ & $1.619\pm0.019$\\
LSTM, original budget & $1.730\pm0.021$ & $1.801\pm0.020$\\
LSTM, stronger budget & $1.156\pm0.029$ & $1.678\pm0.032$\\
Orthogonal RNN, original budget & $1.790\pm0.008$ & $1.805\pm0.012$\\
Orthogonal RNN, stronger budget & $1.749\pm0.006$ & $1.789\pm0.012$\\
\bottomrule
\end{tabular}

\end{table}

GRU and LSTM validation curves were still improving at the end of the original
120-step budget and improve substantially under the stronger budget. These comparisons support an inductive-bias and sample-efficiency advantage
under the tested protocol, not an inability of generic gated RNNs to integrate.
Extended sweeps and diagnostic plots are supplied as task-level context;
they are not assumptions or proofs of the theoretical results.

\section{Reproducibility}
\label{app:reproducibility}
The supplementary material provides the Python and Lean sources, pinned
dependencies, study configurations, checkpoint registries, recorded
random-number schedules, result tables, and numerical evaluation records.
Figure sources include generation scripts and the documented label-only
correction for the reduced phase-field plot. The supplement's
\path{README_REPRODUCIBILITY.md} maps the scientific studies to their code,
data, and reproduction commands. Historical six-unit records remain separate
from the fresh replication, as described in \cref{app:geometry_selection}.

The executable checks cover equivariance and tangent transport, complete
training-unit and condition accounting, local and finite-horizon bounds,
decoder covariance, and paired task summaries. Unsuccessful pilots and
out-of-regime cases are retained. Random-stream separation is documented in
\cref{app:fresh_replication}, and the higher-dimensional study's operator-norm
correction is explained in \cref{app:noise_geometry}. Source hashes,
dependency versions, complete commands, and dated build records are supplied
with the corresponding code rather than treated as additional empirical
replicates.

\subsection{Formal verification and scope}
\label{app:formal_correspondence}
The supplementary Lean 4 project includes pinned Lean and Mathlib
dependencies, a manuscript-to-declaration mapping, and recorded build and
\texttt{--trust=0} declaration checks. The dependency report lists only
Lean/Mathlib foundational axioms; the source scan finds no project
\texttt{sorry}, \texttt{admit}, or declared \texttt{axiom}.
\Cref{tab:formal_scope} distinguishes the encoded results from mathematical
interfaces and numerical implementation checks.

\begin{table}[ht]
\centering
\scriptsize
\setlength{\tabcolsep}{3pt}
\caption{Formal verification scope. ``External'' means used with explicit
assumptions or a cited theorem, not claimed as a Lean result.}
\label{tab:formal_scope}
\begin{tabularx}{\linewidth}{@{}p{0.32\linewidth}p{0.20\linewidth}X@{}}
\toprule
Manuscript component & Formal status & Notes \\
\midrule
Specified finite graph transport & Lean proved & Unique forward sequence for the supplied initial graph under the denominator condition. \\
Finite center-product bounds & Lean proved & Pointwise lower and upper products in chronological order. \\
Endpoint graph factors & Lean proved & Includes the Euclidean $\sqrt{1+r^2}$ factor and frame endpoints. \\
Subspace-angle control & Lean + manuscript & Lean proves directed-gap sine/arcsine bounds; \cref{lem:directed_gap_principal_angle} identifies the conventional largest principal angle. \\
Leakage scaling & Lean proved & Scalar recurrence gives the stated $O(\epsilon^2/d)$ bound under its smallness assumptions. \\
Symbolic $S^1$ cell equivariance & Lean proved & Invariant coefficients imply exact field and Euler-step equivariance. \\
Group-tangent transport & Lean proved & Differentiation and cocycle telescoping are machine checked. \\
Direct continuous-time neutrality & Lean proved & For a finite-dimensional local action parameterization with uniform tangent bounds. \\
General Lie-orbit construction & External & Smooth action and Lie-exponential orbit curves use standard equivariant dynamics. \\
Full-spectrum multiplicity & External & The at-least-$q$ corollary invokes an applicable multiplicative ergodic theorem. \\
PyTorch/checkpoint correspondence & Executable & Strict loading and numerical residual checks; not a Lean theorem about PyTorch semantics. \\
Stochastic decoder covariance & Manuscript proof & The linear-algebra result is proved in the appendix but not formalized in Lean. \\
Empirical coverage and task claims & Executable & Recomputed from released row-level tables; not theorem evidence. \\
\bottomrule
\end{tabularx}
\end{table}

For the learned $S^1$ family, the formal development connects symbolic
equivariance to tangent transport and zero direct-tangent growth under
noncollapse and uniform action bounds. The continuous-time result uses a
finite-dimensional local action parameterization, without a multiplicative
ergodic theorem. The broader full-spectrum conclusion uses the external
theorem under the assumptions stated in \cref{cor:stationary_cocycle}.

Separate Python checks compare the declared $S^1/T^2$ formulas with stored
checkpoints through strict loading, parameter counts, sampled equivariance,
and analytical tangent transport. The largest discrepancy in the 24 recorded
moving-frame rows was $1.11\times10^{-15}$, within floating-point roundoff.
These checks do not formalize PyTorch execution semantics. The stochastic
decoder analysis, including anisotropic radius allocation, is proved in the
manuscript; the bounded-radial $T^4/T^8$ implementation is tested numerically
and is not covered by the Lean proof.

\subsection{Computational resources}
The fresh replication and higher-dimensional study used a CPU environment
with Python 3.12.14, NumPy 2.4.6, and PyTorch 2.5.1+cpu on an AMD Ryzen 9 9950X
with 16 cores and 125.6 GiB of installed memory. The recorded sum of run-wall
times was 717.63 s for the 100 $T^2$ replication runs and one disjoint pilot,
55.60 s for the four initial higher-dimensional pilots, and 4,511.03 s for
the 192 higher-dimensional confirmation runs. The confirmation used one
PyTorch CPU thread. These are sums for the stated run groups, not the total
computational cost of the project; evaluation replays and additional
checks are recorded separately. Peak memory was not instrumented. Full
software pins, runtime records, and reproduction commands are included in the
supplement.

\end{document}